%% file: arxiv.tex
\documentclass{article}

\usepackage{microtype}
\usepackage{graphicx}
\usepackage{booktabs} 
\usepackage{amsmath,amssymb,manfnt} 
\usepackage{amsthm}          
\usepackage{enumitem}        
\usepackage{wrapfig}
\usepackage{subcaption}

\usepackage{adjustbox}
\usepackage[table]{xcolor}
\usepackage{makecell}
\usepackage{float}
\usepackage[labelfont=bf]{caption}

\usepackage[utf8]{inputenc}
\usepackage[T1]{fontenc}
\usepackage{lmodern}

\usepackage[preprint]{icml2025} 

\usepackage{amsmath}
\usepackage{amssymb}
\usepackage{mathtools}
\usepackage{amsthm}
\usepackage{soul}

\usepackage[capitalize,noabbrev]{cleveref}

\theoremstyle{plain}
\newtheorem{theorem}{Theorem}[section]
\newtheorem{proposition}[theorem]{Proposition}
\newtheorem{lemma}[theorem]{Lemma}

\theoremstyle{definition}
\newtheorem{definition}[theorem]{Definition}

\theoremstyle{remark}

\theoremstyle{definition}
\newtheorem*{proposition*}{Proposition}

\definecolor{LegendGreen}{HTML}{1B5E20} 
\definecolor{LegendBlue}{HTML}{1A237E}  
\definecolor{LegendRed}{HTML}{B71C1C}   

\newcommand{\tcr}[1]{{{\color{red}{#1}}}}
\newcommand{\tcb}[1]{{{\color{blue}{#1}}}}

\newcommand{\ind}[1]{\mathrm{ind}#1}

\newcommand{\ka}[1]{%
  \renewcommand{\arraystretch}{1.7}%
  \begin{tabular}[t]{@{}l@{}}#1\end{tabular}}

\newcommand{\grayrule}{%
  \arrayrulecolor{gray!55}\midrule\arrayrulecolor{black}}

\usepackage[textsize=tiny]{todonotes}
\icmltitlerunning{Intrinsic Task Symmetry Drives Generalization in Algorithmic Tasks}

\begin{document}

\twocolumn[
\icmltitle{Intrinsic Task Symmetry Drives Generalization in Algorithmic Tasks}

\begin{icmlauthorlist}
\icmlauthor{Hyeonbin Hwang}{kaist}
\icmlauthor{Yeachan Park}{sejong}
\end{icmlauthorlist}

\icmlaffiliation{kaist}{KAIST}
\icmlaffiliation{sejong}{Sejong University}
\icmlcorrespondingauthor{Yeachan Park}{ychpark@sejong.ac.kr}
\icmlkeywords{grokking}

\vskip 0.3in
]

\printAffiliationsAndNotice{} 

\begin{abstract} 
Grokking, the sudden transition from memorization to generalization, is characterized by the emergence of low-dimensional representations, yet the mechanism underlying this organization remains elusive. We propose that \textbf{intrinsic task symmetries} primarily drive grokking and shape the geometry of the model’s representation space. We identify a consistent three-stage training dynamic underlying grokking: (i) memorization, (ii) symmetry acquisition, and (iii) geometric organization. We show that generalization emerges during the symmetry acquisition phase, after which representations reorganize into a structured, task-aligned geometry. We validate this symmetry-driven account across diverse algorithmic domains, including algebraic, structural, and relational reasoning tasks. Building on these findings, we introduce a symmetry-based diagnostic that anticipates the onset of generalization and propose strategies to accelerate it. Together, our results establish intrinsic symmetry as the key factor enabling neural networks to move beyond memorization and achieve robust algorithmic reasoning.
\end{abstract}

\vspace{-1em}
\section{Introduction}

\textit{\textbf{What enables generalization in neural networks?}} This remains as the central open question in deep learning. State-of-the-art models show remarkable proficiency on tasks requiring intricate reasoning and abstraction, yet what they actually learn and
under what conditions those solutions extend beyond the
observed samples remains elusive. As a result, grokking~\citep{power2022grokkinggeneralizationoverfittingsmall} has emerged as a particularly revealing testbed for studying generalization, characterized by a delayed yet abrupt transition from memorization to generalization.

Prevailing explanations for grokking such as implicit biases~\citep{lyu2024dichotomy}, regularization norms~\citep{liu2023omnigrokgrokkingalgorithmicdata}, and circuit efficiency~\citep{varma2023explaininggrokkingcircuitefficiency} converge on the observation that generalization arises when representations collapse into low-dimensional, structured manifolds~\citep{nanda2023progress, zheng2024delays}. However, while \textit{optimization}-centric explanations like weight decay account for the \textit{tendency} toward compression, they alone cannot explain the \textit{specific form} of the resulting geometry. Even task-specific circuits in tasks like modular addition~\citep{nanda2023progress, zhong2023clockpizzastoriesmechanistic} cannot reliably provide a generic explanation as to \textbf{\textit{why}} the model chooses to solve the problem using such algorithm. 

In this paper, we propose that \textbf{intrinsic task symmetry is the key driver of generalization\footnote{Transition to perfect test accuracy, signifying the model's full recovery of the underlying algorithmic rule.}}, determining the specific shape of the learned geometry in algorithmic tasks. To validate this hypothesis, we extend our analysis beyond the standard modular arithmetic testbed to a diverse suite of algebraic, structural, and relational domains. 

By tracking the specific symmetries governing each of these tasks, we identify a consistent three-stage training dynamic: (i) Memorization, (ii) Symmetry Acquisition, and (iii) Geometric Organization. In doing so, our framework complements prior grokking phase characterizations~\citep{liu2022understandinggrokkingeffectivetheory, nanda2023progress} by providing a causal explanation for \textit{why} specific representations emerge from a data-centric view.

Specifically, our analysis reveals that generalization emerges alongside symmetry acquisition; we identify a \textit{soft threshold} of symmetry violation below which perfect test accuracy consistently emerges. This acquisition, coupled with low-rank constraints, leads to the specific geometric organization observed in the embedding space. Furthermore, we demonstrate the functional role of these findings by showing that symmetry- or geometry-prompting strategies can be used to accelerate the onset of generalization. In summary, our main contributions are:

\begin{figure}[h]
\begin{center}
\includegraphics[width=0.92\linewidth]{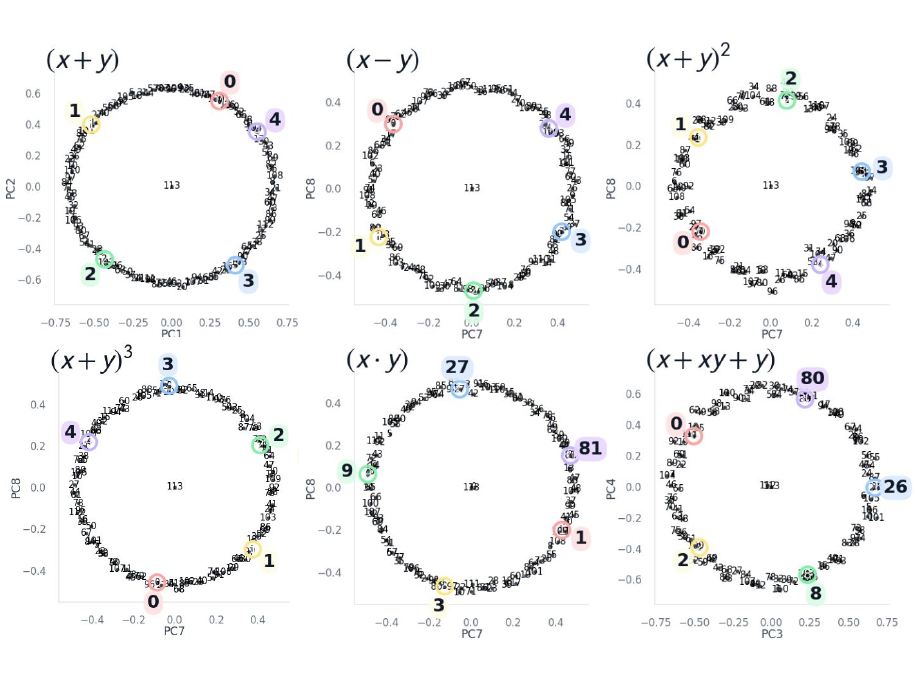}
\end{center}
\vspace{-0.9em}
\caption{Modular Arithmetics Embeddings}
\label{fig:modular_embedding}
\end{figure}

\begin{itemize}[leftmargin=1.5em, itemsep=0.2em]
\vspace{-0.7em}
\item \textbf{Intrinsic Task Symmetry Drives Generalization:} We establish intrinsic task symmetries as the primary driver of generalization in algorithmic tasks, and empirically investigate this mechanism across three domains: algebraic (modular arithmetic), structural (graph metric completion), and relational (comparison).

\item \textbf{A Three-Stage Framework for \textit{Grokking} Dynamics:} We propose a generic three-stage training dynamic for grokking: (i) \textit{Memorization}, (ii) \textit{Symmetry Acquisition}, and (iii) \textit{Geometric Organization}. We show that generalization emerges during the \textit{\textbf{Symmetry Acquisition}} phase, which can be tracked using a proposed criterion, and that intrinsic symmetry, together with low-rank representations, steers the geometric organization of the embedding space.

\item \textbf{Accelerating Grokking via Symmetry Alignment:} Leveraging these insights, we show that symmetry- and geometry-prompting strategies can accelerate grokking, providing evidence for a causal role of symmetry alignment in generalization.

\end{itemize}

\section{Related Works}

\textbf{Grokking}, which was first observed in modular arithmetic tasks~\citep{power2022grokkinggeneralizationoverfittingsmall}, has prompted extensive investigation into the mechanisms of delayed generalization. Prevailing explanations range from theoretical analyses of simple models~\citep{gromov2023grokkingmodulararithmetic,vzunkovivc2024grokking,lyu2024dichotomy,mohamadi2024why,gu2025progressmeasurestheoreticalinsights,levi2024grokking,nam2025solveLayerwiseLinear} to optimization-centric views that cite small weight norms~\citep{liu2023omnigrokgrokkingalgorithmicdata}, low-rank biases~\citep{junior2025grokking}, initialization regimes~\citep{kumar2024grokkingtransitionlazyrich}, circuit competition dynamics~\citep{merrill2023talecircuitsgrokkingcompetition, varma2023explaininggrokkingcircuitefficiency, huang2024unifiedviewgrokkingdouble}, or numerical instability~\cite{prieto2025grokkingAtNumerical}.

\begin{figure}[h]
\vspace{0.8em}
\begin{center}
    \includegraphics[width=0.98\linewidth]{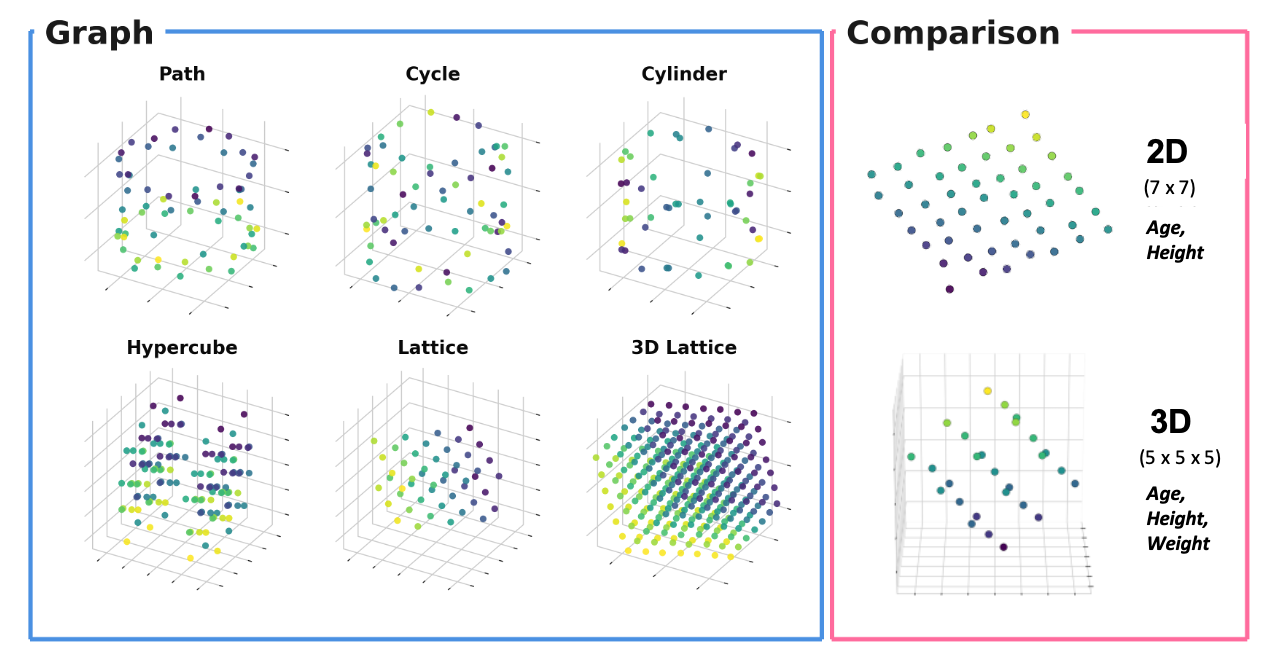}
\end{center}
\vspace{-0.5em}
\caption{Graph Metric Completion and Comparison Embeddings. }
\vspace{-1.3em}
\label{fig:graph_embedding}
\end{figure}

From a mechanistic perspective, many works clarified how models generalize, tying grokking to the emergence of highly structured representations. For example, \citet{liu2022understandinggrokkingeffectivetheory} and \citet{zheng2024delays} argue that a sharp reorganization of representational geometry acts as the primary signal of generalization. In the specific case of modular arithmetics, this geometry manifests as circular or helical structures in the embedding space, often interpreted through Fourier analysis~\citep{nanda2023progress, zhong2023clockpizzastoriesmechanistic}. Notably, these geometric mechanisms also appear in pre-trained language models~\citep{zhou2024pretrainedlargelanguagemodels, kantamneni2025languagemodelsusetrigonometry}.

While the emergence of these geometric representations is well-documented, the question of \textit{why} they take these specific forms remains largely unanswered, often attributed generically to weight decay.~\citep{nanda2023progress, zhu2024criticaldatasizelanguage} On the other, a more fundamental perspective is emerging, by paying particular attention on training-test distribution shift~\citep{liu2023omnigrokgrokkingalgorithmicdata, wang2024grokkedtransformersimplicitreasoners, gu2025progressmeasurestheoreticalinsights}. Notably,~\citet{chang2025characterizingpatternmatchinglimits} formalizes this as functional equivalence for predicting coverage in compositional generalization. There are also theoretical studies that link the learning of invariance to generalization bounds~\citep{xu2020neuralnetworksreasonabout, benton2020learninginvariancesneuralnetworks}. Our work advances this discourse by proposing that across diverse algorithmic domains, intrinsic task symmetry is the underlying invariance acting as the causal force that drives generalization. 

Lastly, parallel to mechanistic inquiries, other works focus on accelerating the onset of generalization. These include transfer learning from pretrained or weaker models~\citep{furuta2024towards, xu2025let, park2024accelerationgrokkinglearningarithmetic}, optimization techniques like \textit{GrokFast} that amplify slow-varying gradients~\citep{lee2024grokfastacceleratedgrokkingamplifying}, and the enforcement of inductive biases such as commutativity in modular arithmetics~\citep{tan2023understanding, park2024accelerationgrokkinglearningarithmetic}.

\section{General Setup}
\label{sec:setup}

\subsection{Modular Arithmetic}
\label{setup:modular}
 Following~\citet{power2022grokkinggeneralizationoverfittingsmall}, we first evaluate generalization on algorithmic tasks over the ring of integers modulo $p$, denoted as $\mathcal{X} = \mathbb{Z}_p = \{0, 1, \dots, p-1\}$. We consider a suite of six tasks ranging from simple linear groups to higher-order polynomials. Formally, given inputs $x, y \in \mathbb{Z}_p$, the network must predict the target $z$ defined by:
\begingroup
\setlength{\abovedisplayskip}{6pt}
\setlength{\belowdisplayskip}{2pt}
\setlength{\jot}{2pt}
\[
\begin{aligned}
T_1 &= (x+y)      & T_2 &= (x-y)      & T_3 &= (x+y)^2 \\
T_4 &= (x+y)^3    & T_5 &= x\times y  & T_6 &= (x+xy+y)
\end{aligned}
\]
\endgroup

\subsection{Graph Metric Completion}
\label{setup:graph}

\begin{wrapfigure}{r}{0.48\linewidth}
    \centering
    \fbox{
        \includegraphics[width=0.90\linewidth]{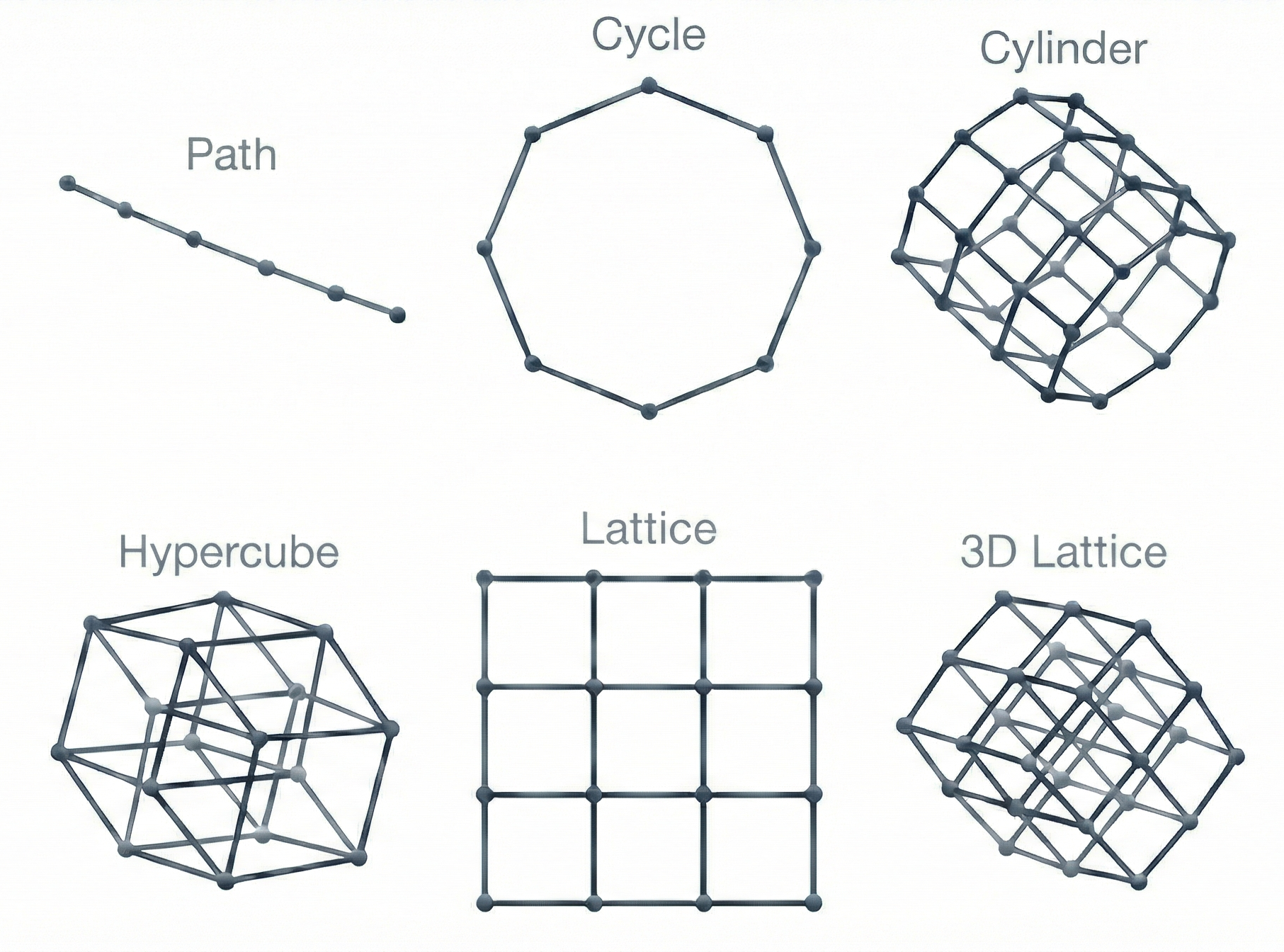}
    }
    \vspace{-10pt}
    \caption{Type of Graphs.}
    \vspace{-5pt}
    \label{fig:graph_gallery}
\end{wrapfigure}

While modular arithmetic has served as a popular testbed for grokking, these algebraic operations can be viewed geometrically as movements on a specific graph topology: the cycle graph $C_p$ of the ring $\mathbb{Z}_p$. In this light, we extend the setup from simple cycles to arbitrary graph structures with explicit geometries. 

We introduce the \textbf{Graph Metric Completion}\footnote{This task is the discrete analogue of the \textit{Euclidean Distance Matrix Completion} problem.} task, which aims to recover the global topological structure of the graph by inferring the full shortest-path distance matrix $D$. Specifically, given two vertices $u, v \in V$ in a graph $G=(V, E)$, the network is tasked with predicting the geodesic distance $d(u, v)$. In our experiments, we consider six distinct geometries: Path, Cycle, Cylinder, Hypercube, Lattice, 3D Lattice (Figure~\ref{fig:graph_gallery}).

\subsection{Comparison}
\label{subsec:comparision_task}

Now, we strip away the metric scaffolding provided in the graph task and investigate whether geometry still emerges under strictly \textit{local} supervision. We adopt the \textbf{Comparison Task}, where the model must infer global structure solely from binary pairwise relations ($x \succ y$). To probe the limits of geometric representation, we focus on transitive regimes and evaluate performance in both 2D and 3D attribute spaces.

\textbf{Problem setup.} We instantiate the comparison task in a form analogous to a small knowledge graph, following prior works~\citep{wang2024grokkedtransformersimplicitreasoners, allenzhu2024physicslanguagemodels31}. Each entity represents a person characterized by latent scalar attributes, with dimensionality two or three depending on the regime. Supervision is provided exclusively through relative order relations between pairs of entities, without access to absolute attribute values.

Let $\mathcal{X}$ denote the entity vocabulary and $\mathcal{R}$ the relation vocabulary:
\begingroup
\setlength{\abovedisplayskip}{6pt}
\setlength{\belowdisplayskip}{6pt}
\begin{align*}
\mathcal{X} &= \{\, e_i = (a_i, h_i, w_i) \mid a_i, h_i, w_i \in \{1, 2, \dots, m\} \,\}, \\
\mathcal{R} &= \{\, r^{x}_{\bowtie} \mid x \in \{\text{age}, \text{height}, \text{weight}\},\
\bowtie \in \{<, =, >\} \,\}.
\end{align*}

\endgroup

Then, each training example is a triple $(e_i, r, e_j)$ with label $y \in \{0,1\}$, indicating whether the relation $r$ holds between $e_i$ and $e_j$. In natural language, this corresponds to sentences such as "John is older than Mary" or "Alice is taller than Bob".

\begin{figure}[H]
    \vspace{0.5em}
    \centering
    \includegraphics[height=0.28\textheight]{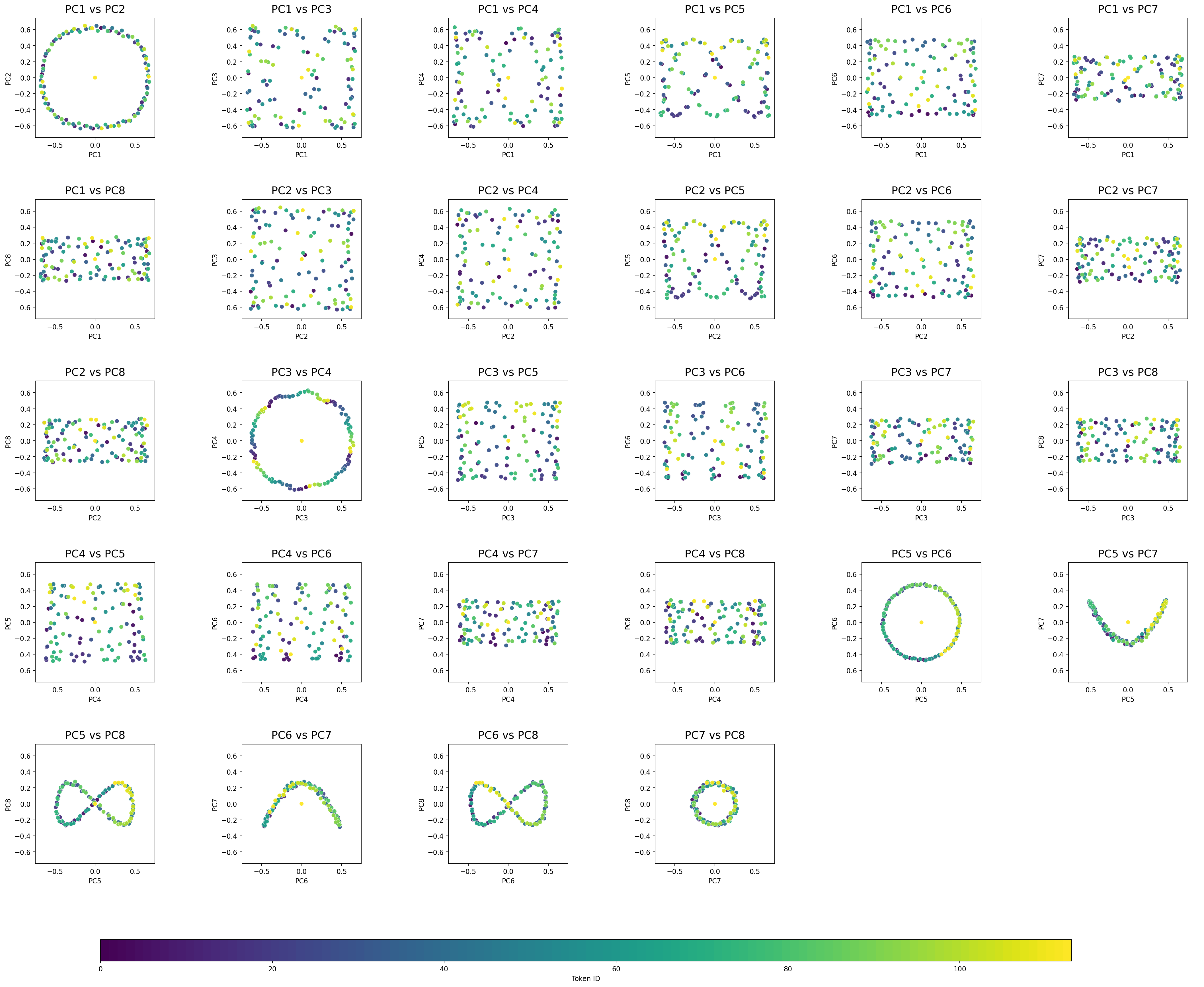}
    \caption{Full PCA visualization of modular addition embeddings.}
    \vspace{-1em}
\end{figure}

\section{Motivation: Structured Geometry in Embedding Space}
\label{sec:motivation}



Across the tasks introduced above, we analyze learned token embeddings using principal component analysis (PCA) to examine their representations. Projecting the leading principal components reveals an interesting pattern: after models achieve generalization, their embeddings consistently display a structured, low-dimensional geometric organization.

For modular arithmetic tasks, this geometric structure is particularly clear. As shown in Figure~\ref{fig:modular_embedding}, the leading two principal components form an approximately circular pattern, consistent with prior observations in modular addition \cite{nanda2023progress, kantamneni2025languagemodelsusetrigonometry}. Extending to higher-dimensional PC subspaces reveals a helical organization, where each PC pair forms a circular projection with a systematic phase shift. We include a more detailed analysis of this structure in Appendix~\ref{app:sym_modular}.

Similar pattern also appears in graph completion and comparison tasks. The learned embeddings recover geometric structures that reflect the relational or ordering properties of the task, as illustrated in Figure~\ref{fig:graph_embedding}. These observations suggest that task-aligned geometric structure is a generic property of embeddings that enable algorithmic generalization. We provide additional PCA visualizations across tasks in Appendix~\ref{Appendix:C}.

Now, this connection between embedding geometry and generalization has also been observed in prior work~\citep{liu2022understandinggrokkingeffectivetheory}. However, while such geometric structure helps characterize how generalization manifests in learned representations, it does not by itself explain why these structures arise. This raises a central question: \textbf{\textit{what drives the emergence of structured embedding geometry?}} In Section~\ref{sec:sym_driven_geo}, we argue that intrinsic task symmetries provide a principled mechanism underlying this phenomenon.

\begin{figure*}[h]
\begin{center}
\includegraphics[width=0.98\linewidth]{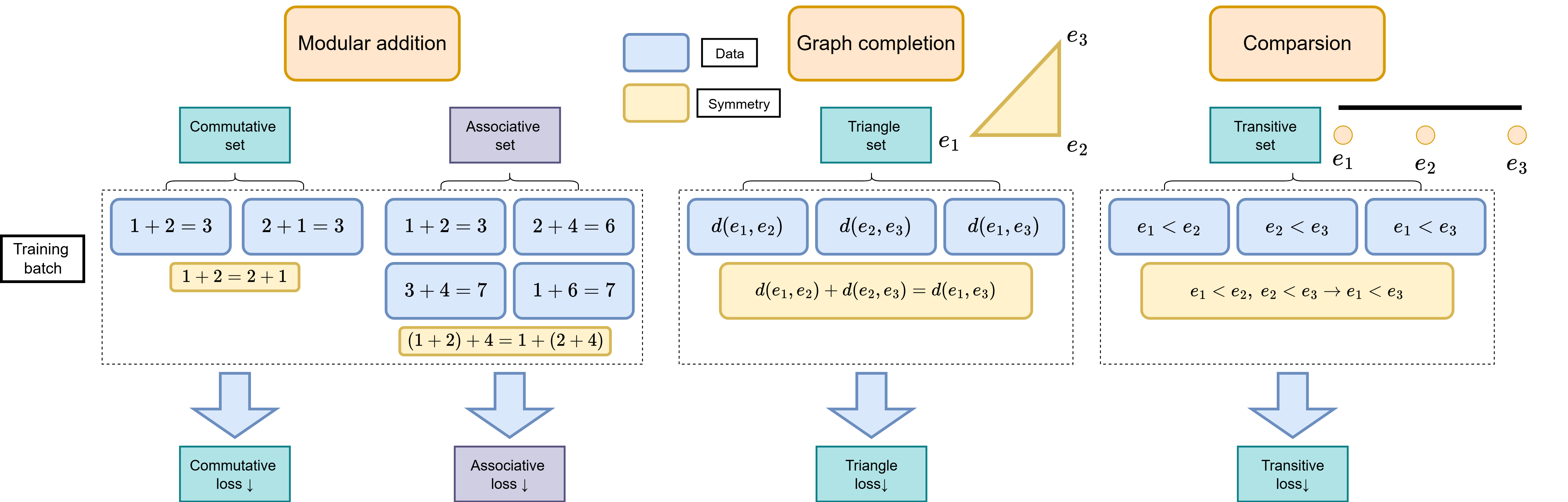}
\end{center}
\caption{Symmetric patterns within a dataset lead the network to learn intrinsic symmetry.}
\label{fig:symm}
\end{figure*}

\section{Intrinsic Symmetries in Algorithmic Tasks}\label{sec:hidden_sym}

To address the question raised above, we turn to a structural property shared across these algorithmic tasks: \emph{intrinsic task symmetry}. We argue that such symmetries impose fundamental constraints on the underlying rule governing the data, and that acquiring these invariances is a necessary condition for true algorithmic generalization. 

To this end, we first identify the intrinsic symmetries present in each domain (illustrated in \cref{fig:symm}) and formalize the invariance structure that the model would implicitly internalize in order to generalize beyond the training distribution.

\paragraph{Moodular Arithmetics}
For modular addition, there are two symmetries, commutativity and associativity:  
\begingroup
\setlength{\abovedisplayskip}{6pt}
\setlength{\belowdisplayskip}{6pt}
\begin{align*}
\text{Commutativity: } &\; x+y = y+x, \\[3pt]
\text{Associativity: } &\; x+(y+z) = (x+y)+z.
\end{align*}
\endgroup
These two properties are basic properties in arithmetic operations. We also note that other operations have similar symmetries which we present in Table~\ref{tab:sym_modular}. 

\begin{figure*}[h]
\begin{center}
\includegraphics[width=0.95\linewidth]{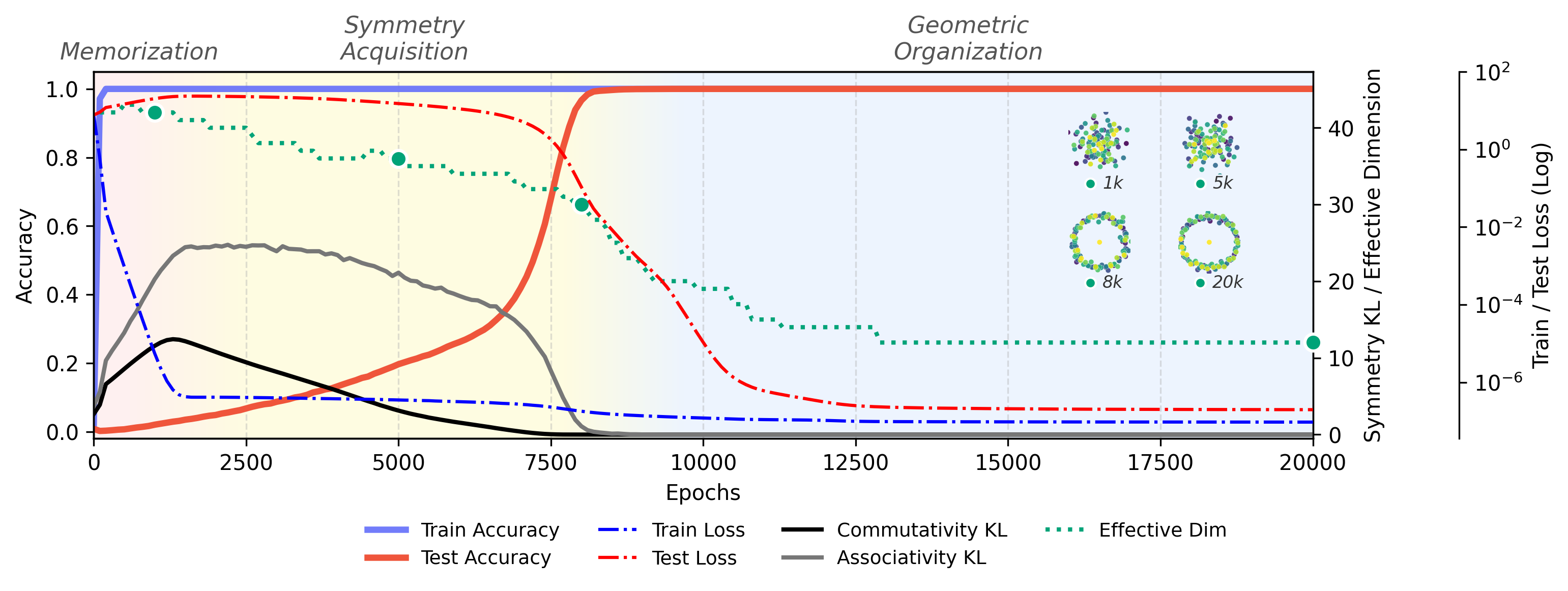}
\end{center}
\vspace{-1.0em}
\caption{Three-stage dynamic of \textit{grokking} for modular addition task. \textbf{(1) Memorization}: The model first \textit{memorizes} as it quickly reaches perfect train accuracy. \textbf{(2) Symmetry Acquisition} Then with the symmetry criterion decreasing to certain threshold, perfect test accuracy is reached. \textbf{(3) Geometric Organization} Finally, as the representation collapses into lower-dimension, embedding forms a geometric organization.}
\label{fig:three_stage}
\end{figure*}

\paragraph{Graph Metric Completion}
In graph metric completion task, there exists the symmetry of the triangle equality. Let $e_1,e_2,e_3$ be vertices of the graph. If $e_2$ is placed on the shortest path between $e_1$ and $e_3$, then the following triangle equality holds:
\[ \text{Triangular symmetry:  } d(e_1,e_2) + d(e_2,e_3) = d(e_1,e_3). \]

\paragraph{Comparison}
In comparison task, there exists the transitivity symmetry. Let $e_1,e_2,e_3$ be token entities in the comparison tasks. Suppose  we have $e_1 < e_2$ and $e_2 < e_3$ in the training batch. Then by transitivity, we also have $e_1 < e_3$ and we call it transitive symmetry. 
\[
\text{Transitivity :  } e_1 < e_2, \quad e_2 < e_3 \quad \to\quad e_1 < e_3.
\]

\begin{table}[t]
\fontsize{8.5}{10}\selectfont
\centering
\caption{Symmetries in various modular operations. \\ $x^+$ denotes $x^+=x+1$.}
\setlength{\tabcolsep}{2pt} 
\renewcommand{\arraystretch}{0.8} 
\begin{adjustbox}{max width=2.0\textwidth}
\begin{tabular}{@{} ccc @{}}
\toprule
\textbf{Task} & \textbf{commutativity proxy} & \textbf{associativity proxy} \\
\midrule
$x+y$
 & \ka{$x + y = y +x$ 
  } 
 & $(x+y)+z = x+(y+z)$\\
 \grayrule
$x-y$
 & \ka{$x-(-y) =  y - (-x)$  } 
 & \ka{ 
 $(x-y)-z $\\$= x-(y-(-z))$ }\\
 \grayrule
$(x+ y)^{\alpha}$
 & \ka{$ (x+ y)^{\alpha} = (y+ x)^{\alpha}  $ 
  }
 & \ka{ $ ( (x+ y)+z)^{\alpha}$ \\= $( x+ (y+z))^{\alpha} $} \\
\grayrule
$x \times y$
 & \ka{$x \times y=y \times x $ \\ }
 & \ka{$(x\times y)\times z$ \\ $=x\times (y\times z) $} \\
\grayrule
\ka{ $x^{}+xy+y^{}$ }
 & \ka{ $x^+\times y^+-1$ \\ $ = y^+ \times x^+-1$  }  & \ka{ $\left(x^+\times y^+ \right)\times  z^+-1$ \\ $ = x^+ \times \left( y^+ +z^+ \right)-1$  } 
\\
\bottomrule
\end{tabular}
\end{adjustbox}
\label{tab:sym_modular}
\vspace{-1.0em}
\end{table}

\section{Symmetry-Driven Generalization}
\label{sec:sym_driven_gen}

Having identified the intrinsic symmetries underlying each task, we now examine their functional role during training.

\subsection{Three-Stage Training Dynamics}

A common intuition is that once the network perfectly fits the training data, there is little left to learn and training effectively terminates. However, when intrinsic symmetries are present, training can continue beyond memorization: the model can reduce symmetry violations while maintaining near-zero training loss.

For example, in modular addition, if the training batch contains commutative pairs or associative tuples, the network is exposed to algebraic symmetries of addition. By repeatedly encountering symmetry patterns, the network can implicitly learn these symmetries beyond memorizing individual samples. Analogous symmetry constraints arise across the remaining domains (Figure~\ref{fig:symm}), suggesting a shared mechanism underlying generalization.

Based on these observations, we propose that optimization for \textit{grokking} progresses through three distinct phases: (i) data memorization, (ii) intrinsic symmetry acquisition, and (iii) geometric organization. Figure~\ref{fig:three_stage} illustrates this three-stage training dynamic.

During the data memorization stage, the network fits the observed training samples by minimizing the training loss. Even after achieving the perfect training accuracy, this process may continue by reducing remaining training loss without yielding generalization. 

Following sufficient memorization, training enters the symmetry acquisition stage. As symmetry violations decrease, quantified by the KL divergence between predictions on symmetry-related input pairs,\footnote{See Appendix~\ref{Appendix:training_dynamics} for details.} the network internalizes the intrinsic task symmetries, leading to improved performance on unseen data. Once these constraints are sufficiently satisfied, generalization emerges.

With symmetry constraints enforced, training transitions into the geometric organization phase. At this stage, weight decay exerts the dominant influence on optimization, favoring low-norm solutions among symmetry-consistent representations. This induces low-rank, low-dimensional structure in the embedding space, resulting in the organized geometries observed in Section~\ref{sec:motivation}.\footnote{Although geometric organization can arise near the completion of symmetry acquisition, in most cases it becomes visually apparent only after the effective embedding dimensionality has sufficiently decreased.}

Training dynamics for additional algorithmic tasks are provided in Appendix~\ref{Appendix:training_dynamics}. While the exact timing and sharpness of phase transitions vary across domains, we consistently observe that generalization aligns with symmetry acquisition rather than continued memorization, supporting the view that symmetry learning is mechanistically distinct from overfitting.

\begin{figure*}\
    \centering
    \hspace*{0.01\linewidth}
    \includegraphics[width=0.85\linewidth]{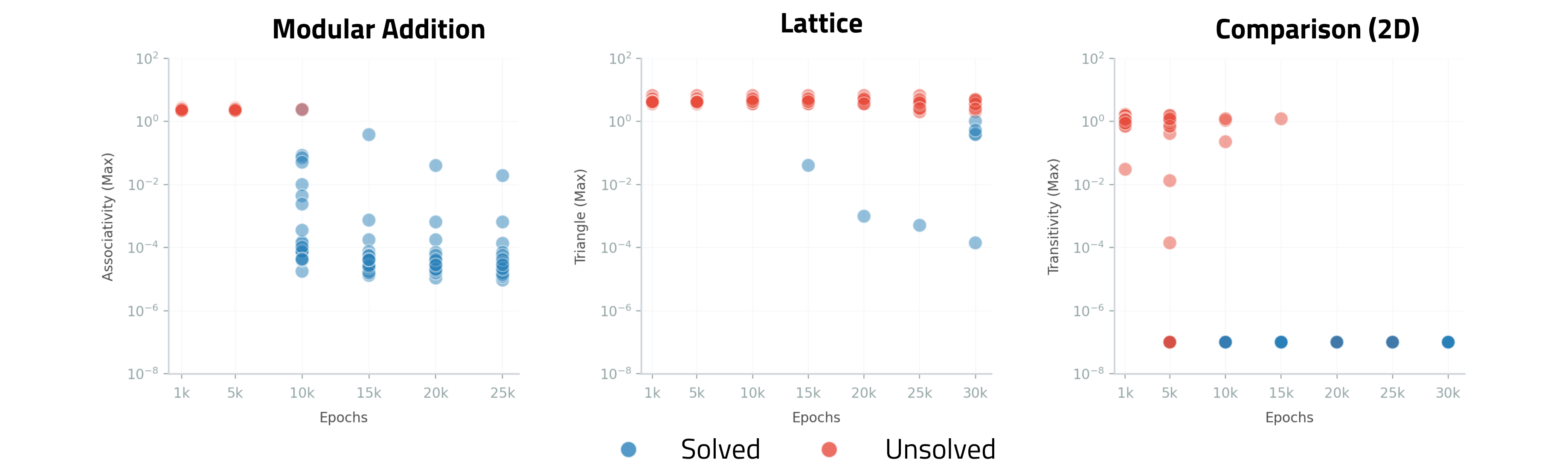}
    \small  
    \vspace{0.4em}
\caption{Symmetry violation metric vs. test accuracy across different seeds. 
A threshold in symmetry violation separates \tcb{\textbf{Solved}} from \tcr{\textbf{Unsolved}} runs, supporting symmetry alignment as a predictor of generalization.}
\label{fig:sym_metric}
\end{figure*}

\subsection{Symmetry-Based Criterion}

Our three-stage framework suggests a concrete, testable prediction: if symmetry acquisition drives generalization, then reductions in symmetry violation should reliably coincide with the onset of test accuracy. We therefore examine whether the symmetry metric defined above can serve as a predictive criterion for generalization.

Figure~\ref{fig:sym_metric} summarizes results across 20 independent training runs per task. Each point corresponds to a single run evaluated at a specific epoch (x-axis), with the y-axis showing the corresponding symmetry violation metric on a logarithmic scale. Points are colored according to whether the run ultimately achieves perfect test accuracy (blue, \textit{solved}) or yet fails to generalize (red, \textit{unsolved}). 

Across tasks, we observe a consistent threshold relationship: once a run’s symmetry violation drops below a characteristic level, it reliably reaches perfect test accuracy, while runs that remain above this level do not. Although the transition is not perfectly sharp, the symmetry metric separates solved from unsolved checkpoints across seeds, supporting symmetry alignment as a predictive criterion.

\begin{figure}
    \centering
    \fbox{%
        \includegraphics[width=0.95\linewidth]{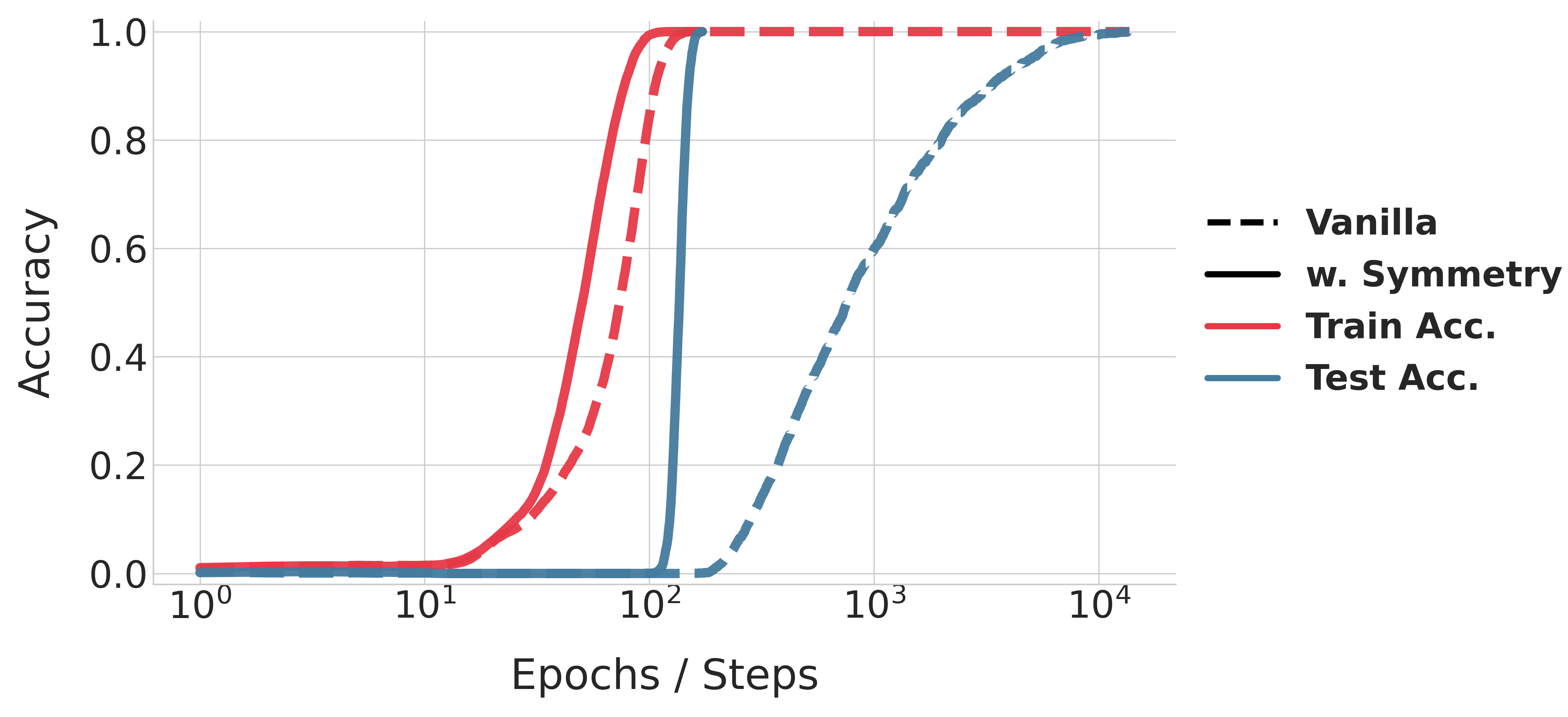}
    }
    \small  
    \caption{We train a two-layer MLP on modular addition using an 80\% training fraction, \textbf{\textit{without}} weight decay.}
    \label{fig:wo_wd}
\end{figure}

\subsection{Discussion}

\textbf{Role of weight decay.} One may argue that optimization biases, such as weight decay, are the primary drivers of generalization, with intrinsic symmetries playing only a secondary role. In particular,~\citet{nanda2023progress} decompose grokking into memorization, circuit formation, and cleanup, attributing circuit formation largely to weight decay selecting low-norm solutions. To examine this possibility, we conduct an additional experiment on modular addition \textbf{\textit{without}} weight decay.  

As shown in \cref{fig:wo_wd}, even in the absence of explicit regularization, the model reliably converges to perfect test accuracy \textbf{\textit{(dashed line)}}, as observed by previous works.~\citep{kumar2024grokkingtransitionlazyrich, prieto2025grokkingAtNumerical}. (We defer discussion of the bold line to Section~\ref{subsec:accl_disc}.) This indicates that weight decay is not a necessary condition for generalization. From our perspective, weight decay plays a supplementary role, primarily stabilizing and simplifying representations during the geometric organization stage. 

\section{Symmetry-Driven Geometric Organization}
\label{sec:sym_driven_geo}

If symmetry acquisition governs the onset of generalization, a natural next question is: 
\emph{what structural consequences does symmetry impose on the learned representation?}
Previously,  we showed that reductions in symmetry violation reliably coincide with generalization. We now examine how satisfying intrinsic task symmetries shapes the geometry of the embedding space. 
\bigskip
\bigskip

Intrinsic symmetries impose algebraic constraints on representations. 
Once internalized, these constraints reduce the effective degrees of freedom of the model’s solution space. 
In algorithmic tasks governed by strict rules, the admissible representations are therefore confined to a structured, lower-dimensional subset.

This perspective aligns with the manifold hypothesis \citep{fefferman2016testing}, which posits that meaningful representations lie on low-dimensional manifolds. 
In our setting, intrinsic symmetry provides the structural mechanism enforcing this dimensional restriction. 
Under mild optimization bias such as weight decay or implicit regularization \cite{barrett2021implicit}, the network preferentially selects low-norm solutions within this constrained space, giving rise to organized geometric structure.

For example, in modular arithmetic, whose intrinsic symmetries are commutativity and associativity, the learned embeddings exhibit structure consistent with an abelian group representation. 
Empirically, this appears as a closed helical geometry in principal component space, as shown in \cref{sec:motivation}. 
We now provide a theoretical account of why such symmetry necessarily induces this geometric structure.\footnote{Detailed proofs are provided in Appendix~\ref{app:theory}.}

\subsection{Theoretical observation }
\subsubsection{Modular arithmetic}

First, we formally verify that the symmetries induce the structured embedding geometry in modular addition task. We consider $\mathcal{X}=\mathbb{Z}_p$ is continuously embedded into the interval connecting endpoints : $\mathbb{Z}_p \subset \mathcal{M} =  [0, p] / \{ 0\sim p\}$. Note that the modular addition in $\mathbb{Z}_p$ is preserved in $\mathcal{M}$. If $\mathcal{M}$ has certain geometry, $\mathbb{Z}_p$ also follows the geometry. If $\mathcal{M}$ has commutativity and associativity, we can say that $\mathcal{M}$ has abelian group structure and $\mathcal{M}$ should be isomorphic to the closed helix in high-dimensional torus \footnote{By definition, $\mathcal{M}$ should have identity and invertible conditions to be an abelian group. Such properties are simple and easy for the network to learn, so we skip the conditions.}.

\begin{proposition}
\label{prop:abelian}
Let $\mathcal{M}$ be an one-dimensional compact abelian topological group and $\mathbb{Z}_p$ is continuously embedded in $ \mathcal{M}$. Then $\mathcal{M}$ is isomorphic to $S^1$ embedded in high-dimensional torus $\mathbb{T}^D$ (closed helix) 
\[ \Phi: \mathcal{M} \cong  S^1 \hookrightarrow \mathbb{T}^D.\]
Hence $\Phi(\mathbb{Z}_p) \hookrightarrow \Phi(\mathcal{M})$ is also confined in the closed helix. 
\end{proposition}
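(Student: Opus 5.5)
The plan is to use the classification of compact connected abelian Lie groups, via Pontryagin duality, and then compose with a character embedding into a torus. The first step is to pin down the structure of $\mathcal{M}$. In our setting $\mathcal{M}=[0,p]/\{0\sim p\}$ is a connected one-dimensional manifold, so it is a compact connected one-dimensional abelian Lie group. (Gleason--Montgomery--Zippin, or elementary reasoning for a topological $1$-manifold, supplies the Lie structure.) By the classification of compact connected abelian Lie groups, $\mathcal{M}\cong \mathbb{T}^k$ with $k=\dim\mathcal{M}=1$, so $\mathcal{M}\cong S^1=\mathbb{R}/\mathbb{Z}$. Concretely, the exponential map $\exp:\mathrm{Lie}(\mathcal{M})\cong\mathbb{R}\to\mathcal{M}$ is a surjective homomorphism, because $\mathcal{M}$ is connected and abelian. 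Its kernel is a discrete subgroup of $\mathbb{R}$, because $\exp$ is a local diffeomorphism. Compactness forces the kernel to be nonzero, so it equals $c\mathbb{Z}$, and we obtain $\psi:\mathcal{M}\xrightarrow{\sim} \mathbb{R}/c\mathbb{Z}\cong S^1$.

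The second step constructs the helix. Choose distinct integer frequencies $k_1,\dots,k_D$ and define
\[
\Phi(m)=\bigl(e^{2\pi i k_1\psi(m)},\dots,e^{2\pi i k_D\psi(m)}\bigr)\in\mathbb{T}^D .
\]
Each coordinate is a continuous character, so $\Phi$ is a continuous homomorphism. It is injective as soon as $\gcd(k_1,\dots,k_D)=1$, for instance by including $k_1=1$. An injective continuous map from a compact space to a Hausdorff space is a homeomorphism onto its image, so $\Phi$ is a topological group isomorphism onto a closed one-parameter subgroup of $\mathbb{T}^D$. This image is a closed helix: in coordinates it is a torus knot/line of slopes $(k_1,\dots,k_D)$. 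Each pair of coordinates projects to a circle, traced at its own frequency, which matches the phase-shifted circular PC projections reported in Section~\ref{sec:motivation}.

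The final step handles $\mathbb{Z}_p$. Since $\mathbb{Z}_p\hookrightarrow\mathcal{M}$ respects addition, its image is a finite subgroup of order $p$. The unique such subgroup of $S^1$ is $\frac{c}{p}\mathbb{Z}/c\mathbb{Z}$, so $\Phi(\mathbb{Z}_p)=\{\Phi(jc/p)\}_{j}$ lies on $\Phi(\mathcal{M})$, as claimed. It then corresponds to the discrete Fourier features $e^{2\pi i k_\ell j/p}$.

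I expect the main obstacle to be the first step, namely justifying the Lie structure and connectedness so that the classification applies. The statement as written only says ``one-dimensional compact abelian topological group'', and a general such group need not be connected; for example, $S^1\times F$ with $F$ finite also qualifies. I would resolve this by using the concrete model $\mathcal{M}=[0,p]/\{0\sim p\}$, which is connected. Alternatively, I would dualize: $\widehat{\mathcal{M}}$ is a discrete torsion-free group of rank one, and injectivity into a torus forces it to be $\mathbb{Z}$. The argument would then be recorded under that interpretation.
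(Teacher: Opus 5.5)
Your proposal is correct and is essentially the paper's argument: identify $\mathcal{M}$ with the circle, then map it into $\mathbb{T}^D$ by finitely many characters $x\mapsto e^{2\pi i n_j x}$ with $\gcd(n_1,\dots,n_D)=1$, which the paper phrases as the Pontryagin evaluation map $\mathbb{T}\cong\hat{\mathbb{Z}}$ truncated to $D$ frequencies. You are more careful than the paper, which simply asserts that the circle is the only one-dimensional connected compact abelian group (tacitly adding connectedness, and also needing the manifold structure you take from the model $[0,p]/\{0\sim p\}$, since solenoids are one-dimensional compact connected abelian groups other than the circle); your only slip is cosmetic, namely that with $\psi$ valued in $\mathbb{R}/c\mathbb{Z}$ the coordinates should read $e^{2\pi i k_j\psi(m)/c}$.
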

Using Proposition \ref{prop:abelian}, since $\mathcal{M}$ is compact abelian group (satisfying commutativity and associativity), $\mathcal{M}$ is isomorphic to the closed helix. Here, \textit{isomorphism} refers to a structure-preserving bijection (homeomorphism in topological sense). 
Consequently, we theoretically observe that if the network sufficiently learns the commutativity and associativity of modular addition, its embeddings naturally organize along a closed helical structure in the representation space. This observation further suggests that other modular operations exhibiting similar symmetries give rise to analogous geometric structures (see Appendix~\ref{app:sym_modular}).

\subsection{Graph metric completion and comparison tasks}

We also provide a partial theoretical analysis of the geometric organization observed in graph metric and comparison tasks. When the embedding space is assumed to be $\mathbb{R}^d$ or $\mathbb{T}^d$ for small $d \in \mathbb{N}$, it is possible to obtain partial explanations for the emergence of structured geometry.

Let $\mathcal{X} = \{v_1, \dots, v_n\}$ be a set of vertices of a graph with an embedding $\Phi : \mathcal{X} \to \mathbb{R}$ into a low-dimensional space. If $\Phi(\cdot)$ incorporates partial distance information (memorization of training data), intrinsic symmetries within the task, and low embedding dimension, then the image $\Phi(\mathcal{X})$ exhibits an organized geometric structure. The following proposition formalizes this intuition for the case of Path or Cycle graphs (Figure \ref{fig:graph_gallery}). If $\mathcal{X}$ is embedded into $\mathbb{R}$ preserving triangular symmetry, the resulting embedding recovers the graph structure.

\begin{proposition}\label{prop:theory_graph}
   Let $ \mathcal{X}=\{v_1 , \dots, v_n\}$ be a one-dimensional Path or Cycle graph (see Figure \ref{fig:graph_gallery}) with $n$ vertices ($n \ge 3$) and $\mathcal{Z} \in \{\mathbb{R}, \mathbb{T}\}$ be a embedding space. 
    Let $\Phi : G \to \mathcal{Z}$ be embedding with partial distance information
    \begin{align*}
        d(\Phi(v_{i}) ,\Phi(v_{i+1}))&=1, \; i=1,\dots n-1, \\
        d(\Phi(v_{1}) ,\Phi(v_{n}))&=1, \; \text{ if $\mathcal{X}$ is Cycle. } 
    \end{align*}
    If $\Phi(\cdot)$ preserve the triangular symmetry, then $\Phi(\mathcal{X})$ has its graph structure in embedding space  $\mathbb{R}$.
\end{proposition}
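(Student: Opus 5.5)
We prove the statement by induction along the path $v_1,v_2,\dots,v_n$. We read ``preserves the triangular symmetry'' as: whenever $v_j$ lies on a shortest path between $v_i$ and $v_k$ in the graph,
\[ d(\Phi(v_i),\Phi(v_j)) + d(\Phi(v_j),\Phi(v_k)) = d(\Phi(v_i),\Phi(v_k)). \]
The goal is to show that $\Phi$ is, up to an isometry of $\mathcal{Z}$, the map $v_i \mapsto i$. For the Cycle graph on $\mathbb{T}$, this means $v_i \mapsto i$ modulo the circumference, and we take $\mathbb{T}=\mathbb{R}/n\mathbb{Z}$ so that the cycle closes. In either case $\Phi$ then reproduces the graph structure, since $d(\Phi(v_i),\Phi(v_j))$ equals the graph distance.

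\textbf{Path in $\mathbb{R}$.} By translation and reflection we normalize $\Phi(v_1)=0$ and $\Phi(v_2)=1$. We then show by induction that $\Phi(v_k)=k-1$.
\begin{itemize}
\item The unit-step constraint gives $\Phi(v_{k+1})=\Phi(v_k)\pm 1$, so $\Phi(v_{k+1})\in\{k-2,\,k\}$.
\item In the path, $v_k$ lies on the shortest path from $v_{k-1}$ to $v_{k+1}$. Triangular symmetry therefore gives $|\Phi(v_{k+1})-\Phi(v_{k-1})| = 2$.
\item This excludes the backtracking choice $\Phi(v_{k+1})=\Phi(v_{k-1})=k-2$, leaving $\Phi(v_{k+1})=k$.
\end{itemize}
Hence $\Phi$ is an isometric embedding, and $\Phi(\mathcal{X})$ is an arithmetic progression of unit steps, i.e.\ the path graph realized in $\mathbb{R}$. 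Distances between non-adjacent vertices are then automatically the graph distances, which is consistent with the remaining triangle equalities.

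\textbf{Cycle.} We apply the same local argument to each consecutive triple $v_{k-1},v_k,v_{k+1}$, indices taken mod $n$. For $n\ge 4$, $v_k$ lies on the unique shortest path between its two neighbours. Triangular symmetry then forces $d(\Phi(v_{k-1}),\Phi(v_{k+1}))=2$, so every step has the same orientation. Consequently $\Phi(v_k)=\Phi(v_1)+(k-1)$ locally, and the closing condition $d(\Phi(v_n),\Phi(v_1))=1$ holds.

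The cases depend on the embedding space:
\begin{itemize}
\item \emph{In $\mathbb{R}$:} the monotone chain gives $|\Phi(v_n)-\Phi(v_1)|=n-1\neq 1$ for $n\ge 3$. So a cycle cannot be embedded in $\mathbb{R}$ preserving triangular symmetry together with the closing edge. This forces the ambient space to be $\mathbb{T}$. We will read the proposition's final phrase ``in embedding space $\mathbb{R}$'' as referring to the Path case, with the Cycle realized in $\mathbb{T}$.
\item \emph{In $\mathbb{T}$:} the monotone unit steps wrap around exactly once, because $n$ steps return to $\Phi(v_1)$. This gives the regular $n$-gon on the circle, i.e.\ the cycle graph.
\end{itemize}

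\textbf{Main obstacle.} The hardest step is the small or degenerate cycle case, together with making precise which triples the triangular symmetry applies to.
\begin{itemize}
\item For $n=3$, no vertex lies strictly between its neighbours, so the triangle equality gives no constraint. Any three points pairwise at distance $1$ on $\mathbb{T}=\mathbb{R}/3\mathbb{Z}$ already form the triangle graph, so we verify that case directly.
\item For even $n$, antipodal triples have two shortest paths. We will only use the adjacent triples $(v_{k-1},v_k,v_{k+1})$, which are unambiguous for $n\ge 4$.
\item On $\mathbb{T}$ we must check that a distance of $2$ excludes backtracking. Backtracking gives distance $0$, and $0\neq 2$ requires a circumference $\neq 2$, which holds whenever the circumference is at least $n\ge 3$ (in particular for our choice $\mathbb{T}=\mathbb{R}/n\mathbb{Z}$).
\end{itemize}
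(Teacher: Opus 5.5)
\textbf{Comparison with the paper.} For the Path in $\mathbb{R}$, your argument is the paper's Case 1-1: a unit step gives $\pm1$, and the triangle equality on $(v_{k-1},v_k,v_{k+1})$ forces distance $2$ and excludes backtracking. You simply write out the induction step the paper leaves implicit.

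For the Cycle you take a different and more careful route. The paper places a $3$-cycle on a circle of parameter $K$ and derives $e^{2\pi i\cdot 3/K}=1$ from the three unit distances alone, so, as you observe, no triangle equality is used. It then asserts the general case ``by induction'', even though an $n$-cycle does not contain an $(n-1)$-cycle. You instead propagate a constant orientation around the cycle through consecutive triples. This genuinely uses the hypothesis for $n\ge 4$, and it also proves that $\mathbb{R}$ is impossible, which the paper only asserts. (A small slip: for $n=4$ the neighbours $v_{k-1},v_{k+1}$ are antipodal and have two shortest paths, so ``unique'' is false. But $v_k$ still lies on one of them, which is all you use.)

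\textbf{Two cases are not covered.} First, you never treat the Path with $\mathcal{Z}=\mathbb{T}$, which the hypothesis allows and the paper handles as Case 1-2. Your local step still yields monotone unit steps there. However, that gives the path metric only if the circumference is at least $2(n-1)$; the paper assumes $K\gg n$. On a shorter circle the chain wraps and $d(\Phi(v_1),\Phi(v_n))<n-1$, and adjacent triples cannot detect this.

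Second, in the Cycle case the choice $\mathbb{T}=\mathbb{R}/n\mathbb{Z}$ is load-bearing rather than a harmless normalization. On a circle of circumference $C$, the constraints you use (unit edges, closing edge, adjacent triples) only give $n/C\in\mathbb{Z}$ with $C\ge 4$. For example, the $8$-cycle wound twice around a circle of circumference $4$ satisfies all of them without reproducing the graph. The paper at least tries to derive the circumference; you impose it.

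\textbf{A uniform repair.} Both gaps close if you also use non-adjacent triples and induct on graph distance. Suppose the graph distance between $v_i$ and $v_j$ is $m\ge 2$. Take the neighbour $v_{j'}$ of $v_j$ on a shortest $v_i$--$v_j$ path; triangular symmetry then gives $d(\Phi(v_i),\Phi(v_j))=d(\Phi(v_i),\Phi(v_{j'}))+1=m$. So $\Phi$ is an isometric embedding of the graph metric into any target space.
\begin{itemize}
\item This settles the Path in $\mathbb{T}$, and shows such a $\Phi$ exists only when the circumference is at least $2(n-1)$.
\item For the Cycle on a circle it forces $C=n$, since $n/C\in\mathbb{Z}$ and the antipodal distance $\lfloor n/2\rfloor$ cannot exceed $C/2$.
\end{itemize}
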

Since other graphs in Figure \ref{fig:graph_gallery} are considered as a composition of Path and Cycle, we can expect similar embedding structure.

An analogous result holds for the comparison task. The following proposition states that if $\mathcal{X}$ is embedded into $\mathbb{R}$ while preserving transitivity, the resulting embedding exhibits a grid structure.
\begin{proposition}\label{prop:theory_comparison}
    Let $\mathcal{X}=\{v_1 , \dots, v_n\}$ be a set of nodes in 1D comparison task (see \cref{subsec:comparision_task})  with $n \ge 3$ nodes. Let $\Phi : \mathcal{X} \to \mathbb{R}$ be embedding with partial relation information
    \[ \Phi(v_{i}) < \Phi(v_{i+1}), \; i=1,\dots n-1. \]
    If $\Phi(\cdot)$ preserve the transitivity, then $\Phi(\mathcal{X})$ has a grid structure in embedding space $\mathbb{R}$. 
\end{proposition}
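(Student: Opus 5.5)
The statement is loose, so the plan starts by fixing what ``preserves transitivity'' and ``grid structure'' mean. In the 1D comparison task (the one-attribute specialization of \cref{subsec:comparision_task}), I take transitivity preservation to mean that the embedding respects every relation implied by transitive closure: whenever $v_i \prec v_j$ follows from the chain $v_i\prec v_{i+1}\prec\dots\prec v_j$, then $\Phi(v_i)<\Phi(v_j)$. For the embedding to also preserve the \emph{relational} content, I additionally read transitivity preservation as compatibility of composed relations with the group structure of $\mathbb{R}$, in the same spirit as the triangular symmetry of Proposition~\ref{prop:theory_graph}. Concretely, the embedded increment of a composed step $v_i\to v_{i+1}\to v_{i+2}$ equals the sum of the two single-step increments, and all single-step relations $v_i\prec v_{i+1}$ are one and the same relation token, so they are represented by one common displacement. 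I will state this interpretation explicitly at the start of the proof. I will then take ``grid structure'' to mean that $\Phi(\mathcal{X})$ is an arithmetic progression $\{a+k\delta : k=0,\dots,n-1\}$ with $\delta>0$, after an affine normalization.

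The main argument has three steps.

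\textbf{Step 1 (order).} Induct on $j-i$, using the given partial relations $\Phi(v_i)<\Phi(v_{i+1})$ and transitivity. This shows $\Phi$ is strictly increasing along the index, so it is injective and its image is ordered exactly as $v_1,\dots,v_n$. In particular, the image is a finite subset of $\mathbb{R}$ with no folding or crossing. This already rules out the disorganized configurations possible in higher dimensions.

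\textbf{Step 2 (spacing).} Define $\delta_i=\Phi(v_{i+1})-\Phi(v_i)>0$. Because the same relation $r_{<}$ links every consecutive pair, the symmetry assumption forces $\delta_i=\delta$ for all $i$. Then, by telescoping, transitivity composes consistently: $\Phi(v_j)-\Phi(v_i)=\sum_{k=i}^{j-1}\delta_k=(j-i)\delta$. Hence $\Phi(v_i)=\Phi(v_1)+(i-1)\delta$, which is the grid, i.e.\ evenly spaced points on a line. This mirrors the route I would take for Proposition~\ref{prop:theory_graph}, where fixing adjacent distances to $1$ and imposing the triangle equality forces $\Phi(v_j)-\Phi(v_i)=\pm(j-i)$ with a consistent sign.

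\textbf{Step 3 (minimality).} As a remark, I will note the role of low-rank or weight-decay bias. If $\delta$ is not pinned down by the symmetry, then among all order-preserving embeddings with bounded norm, the equally spaced one minimizes $\sum_i \delta_i^2$ subject to a fixed total span $\sum_i\delta_i$, by Cauchy--Schwarz. This gives an alternative justification of uniform spacing in the spirit of \cref{sec:sym_driven_geo}.

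The main obstacle is Step 2. Pure order transitivity only yields monotonicity, and every strictly increasing sequence satisfies it, so an honest proof must rest on the stronger additive-consistency reading of the symmetry. Failing that, it must fall back on the norm-minimization argument of Step 3. I will present the additive reading as the primary route and flag the dependence clearly.
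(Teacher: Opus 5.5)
Your Step 1 is the paper's entire proof. The paper sets $\Phi(v_1)=0$, derives $\Phi(v_1)<\Phi(v_3)$ from $\Phi(v_1)<\Phi(v_2)$ and $\Phi(v_2)<\Phi(v_3)$ by transitivity, and inducts on $n$. So ``grid structure'' in the paper means only that the images lie on the line in the order $v_1,\dots,v_n$. As you note, this is essentially a restatement of the hypotheses, since $<$ on $\mathbb{R}$ is already transitive. Under the paper's reading you are finished after Step 1, and Steps 2--3 are not needed.

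The genuine gap is Step 2, which you designate as the primary route to equal spacing.

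\begin{itemize}
\item The rule ``one relation token, hence one common displacement'' cannot be restricted to consecutive pairs. In the comparison task the same token $r^{x}_{<}$ labels every pair $(v_i,v_j)$ with $i<j$. Applying your rule to $(v_1,v_3)$ gives $\Phi(v_3)-\Phi(v_1)=\delta$, while telescoping gives $2\delta$. This forces $\delta=0$, contradicting strict order.
\item The data never single out adjacent pairs as a separate relation. That unit-step information is exactly what Proposition~\ref{prop:theory_graph} is given and what the comparison task lacks.
\item The other half of your additive reading, $\Phi(v_{i+2})-\Phi(v_i)=\delta_i+\delta_{i+1}$, is an identity in $\mathbb{R}$ and carries no content. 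Likewise, the appendix's Bradley--Terry check $\ell_{ac}\approx\ell_{ab}+\ell_{bc}$ holds for any score function.
\item More fundamentally, if $\Phi$ satisfies all comparison constraints, then so does $f\circ\Phi$ for every strictly increasing $f:\mathbb{R}\to\mathbb{R}$. So no symmetry of comparison data can force an arithmetic progression.
\end{itemize}

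Equal spacing therefore needs an extra hypothesis. Your Step 3 is the honest route, though the fixed-span objective does not correspond to weight decay on the embeddings. A cleaner version is to minimize $\sum_i\Phi(v_i)^2$ subject to margins $\Phi(v_{i+1})-\Phi(v_i)\ge 1$. Centering is optimal. The centered sum $\frac1n\sum_{i<j}\bigl(\sum_{k=i}^{j-1}\delta_k\bigr)^2$ is strictly increasing in each gap, so the unique minimizer has all $\delta_k=1$. That is a different proposition, resting on a weight-decay-type assumption, not the one stated.
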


\begin{figure*}\
    \centering
    \hspace*{0.01\linewidth}
    \includegraphics[width=0.92\linewidth]{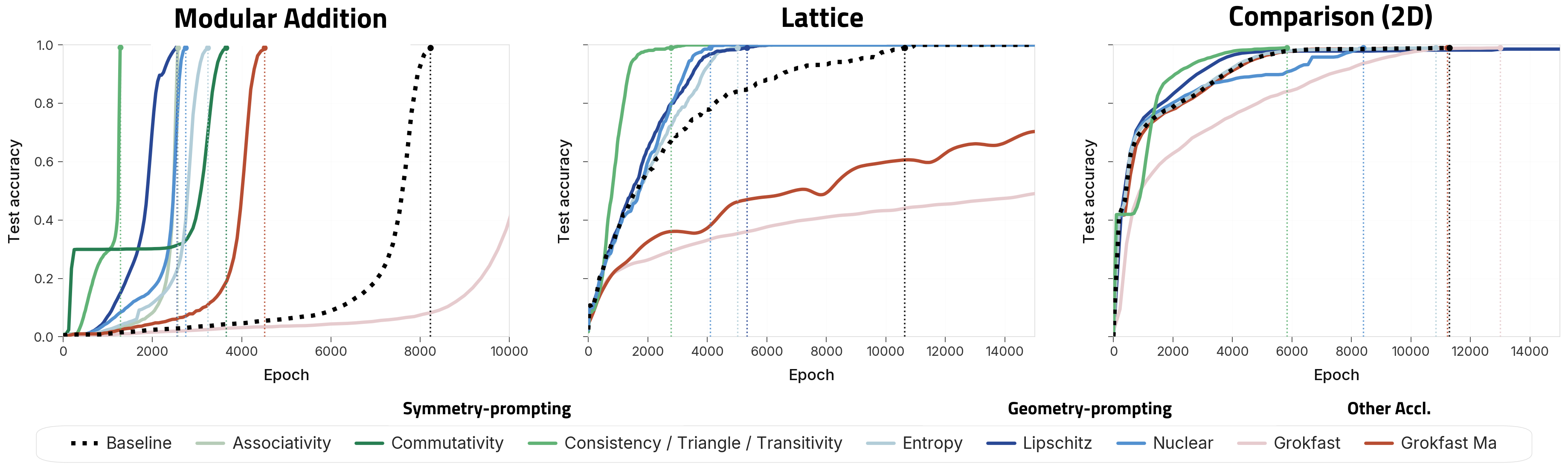}
    \small
\caption{Comparison of convergence speed on Modular Addition, Graph Completion (Lattice), and Comparsion (2D) task with
\textbf{\textcolor{LegendBlue}{Geometry-promoting priors}} (\textit{nuclear}, \textit{entropy}, \textit{Lipschitz});
\textbf{\textcolor{LegendGreen}{Symmetry-prompting losses}} (\textit{commutativity}/\textit{associativity}/etc);
\textbf{\textcolor{LegendRed}{Optimizer-based accelerants}} (\textit{GrokFast}).
Markers denote the first point where test accuracy reaches 100\%. \textit{Consistency} refers to using both Commutativity and Associativity.} 
\vspace{-1em}
\label{fig:faster}
\end{figure*}

Taken together, these observations suggest that geometric organization can be explained through the interplay of intrinsic task symmetries, sufficient memorization of partial information, and low-dimensional embedding constraints.

\section{Accelerating Generalization via Symmetry}
\label{sec:acceleration}
\subsection{Setup}
\label{sec:exp_setup}

Previously, we observed that generalization consistently coincides with the symmetry acquisition phase, and that geometric organization follows as representations contract to symmetry-consistent solutions.
These observations suggest that symmetry alignment plays a crucial role in determining the timing of the grokking transition.

To examine this mechanism more directly, we now intervene on training by introducing auxiliary objectives that explicitly promote intrinsic task symmetries. \textbf{If symmetry acquisition governs the onset of generalization, then systematically encouraging symmetry should shift the time-to-generalization across tasks and seeds.}
Such shifts would provide evidence consistent with a causal role of symmetry alignment in grokking.

Concretely, we consider (i) \emph{symmetry-prompting} losses that penalize violations of intrinsic task symmetries and (ii) \emph{geometry-promoting} priors that encourage low-rank structure in the learned representations.
The total objective is defined as
\[
\mathcal{L}_{\text{total}}
=
\mathcal{L}_{\text{CE}}
+
\alpha \|\theta\|^2
+
\lambda \mathcal{L}_{\text{aux}},
\]
where $\mathcal{L}_{\text{aux}}$ denotes the auxiliary term and $\alpha,\lambda$ are hyperparameters.
Full configuration details are provided in Appendix~\ref{app:config}.

\subsection{Symmetry-prompting loss}

We instantiate $\mathcal{L}_{\text{aux}}$ as a symmetry-prompting objective that directly penalizes the same symmetry violations measured in Section~\ref{sec:sym_driven_gen}.
Whereas previously symmetry violation served as a diagnostic metric, here we incorporate it into the training objective as an explicit regularizer.

For modular arithmetic, we enforce consistency under \textbf{commutativity} and \textbf{associativity} (and task-specific proxy identities for non-commutative operations; see Table~\ref{tab:sym_modular}).
For graph metric completion and comparison tasks, we instead enforce \textbf{triangle equality} and \textbf{transitivity}, respectively. Concretely, the loss encourages agreement between the model’s predictions on symmetry-related input tuples, thereby directly reducing symmetry violations during optimization.

\subsection{Geometry-prompting priors}

Previously, we observed that geometric organization follows symmetry acquisition and corresponds to a contraction of representations onto a low-dimensional manifold.
While symmetry alignment governs the onset of generalization, geometric organization stabilizes symmetry-consistent solutions by favoring low-rank structure.

To bias optimization toward this regime, we introduce geometry-promoting priors that encourage low-dimensional embeddings.
Specifically, we consider:
(i) \textbf{nuclear norm regularization}, which penalizes the sum of singular values and promotes low effective rank;
(ii) \textbf{entropy regularization}, which discourages diffuse, high-energy embeddings; and
(iii) \textbf{Lipschitz regularization}, which enforces local smoothness in representation space.

We note that these choices are consistent with prior findings linking low-rank structure to grokking dynamics~\citep{junior2025grokking} and entropy reduction to representation compression~\citep{demoss2025complexity}.

\subsection{Results} 

Figure~\ref{fig:faster} summarizes the effect of the proposed interventions on time-to-generalization across tasks.
Symmetry-prompting strategies consistently reduce the number of training steps required to reach perfect test accuracy relative to the baseline and others acceleration methods.
This effect holds across domains and seeds, indicating that directly minimizing symmetry violations systematically shifts the grokking transition. 

Geometry-promoting priors can also reduce time-to-generalization relative to the baseline. as noted in prior findings that low-rank structure facilitates representation compression~\citep{walker2025grokaligngeometriccharacterisationacceleration, demoss2025complexity, junior2025grokking}.
However, these improvements are smaller than those achieved through symmetry prompting, reinforcing the distinction between symmetry acquisition (\textit{generalization}) and geometric organization (\textit{stabilization}). In contrast, GrokFast~\citep{lee2024grokfastacceleratedgrokkingamplifying} exhibits task-dependent behavior, accelerating generalization in some settings but delaying convergence in others, suggesting sensitivity to task structure rather than alignment with intrinsic invariances. 

Overall, these results provide intervention-based evidence that symmetry alignment plays a causal role in enabling generalization in algorithmic tasks. Additional results are reported in Appendix~\ref{appendix:faster}.

\subsection{Discussion}
\label{subsec:accl_disc}

\textbf{Comparison with prior interpretations.} As previously noted, the model reliably converges to perfect test accuracy even \textbf{\textit{without}} weight decay (Figure~\ref{fig:wo_wd}). The addition of symmetry constraints (\textbf{bold line}) in this setting substantially accelerates generalization, reducing time-to-generalization by up to two orders of magnitude. 

We also examine \textit{initialization}~\citep{liu2023omnigrokgrokkingalgorithmicdata} and \textit{output scaling}~\citep{kumar2024grokkingtransitionlazyrich}, two factors previously reported to influence the emergence of grokking. As detailed in Appendix~\ref{Appendix:alternatives}, symmetry-enforcing objectives consistently accelerate generalization across a broad range of these regimes. These findings suggest that while scaling modulates the regime and timing of feature learning, intrinsic task symmetry provides the geometric \textbf{direction} that guides models beyond memorization toward generalization.

\section{Conclusion}
\label{sec:results}
We argue that intrinsic task symmetry is a primary driver of generalization in algorithmic reasoning. Across modular arithmetic, graph metric completion, and comparison tasks, we find that generalization arises when models move beyond memorizing training examples and instead acquire the task’s intrinsic symmetries. This observation supports a consistent three-stage dynamic of memorization, symmetry acquisition, and geometric organization, with the transition to perfect test accuracy aligning closely with symmetry acquisition. We further provide empirical and theoretical evidence that, once these symmetry constraints are internalized, mild optimization biases and low-rank pressures steer representations into the structured low-dimensional geometries observed in embedding space.

Building on this mechanism, we show that explicitly promoting symmetry can causally accelerate grokking. Symmetry-prompting objectives and geometry-promoting priors consistently reduce time-to-generalization across domains, with symmetry alignment providing the most reliable gains. Taken together, our results offer a principled explanation for why structured embedding geometries emerge after generalization, and position grokking as a process of symmetry acquisition underlying generalization in algorithmic tasks.

\section*{Acknowledgments}
We sincerely thank Juyoung Suk for his early contributions to the development of this work.

\newpage

\bibliography{arxiv}
\bibliographystyle{icml2025}

\include{appendix}

\include{appendix_theory}
\end{document}

%% file: appendix.tex
\newpage
\appendix
\onecolumn

\section{Training Dynamics of Grokking}
\label{Appendix:training_dynamics}

In this section, we present the figure for complete training dynamics of each task conducted in our experiments, visualizing the distinct three-stage progression: memorization, symmetry acquisition, and geometric organization.

\subsection{Modular Arithmetic Tasks}
For modular arithmetic, we test \textbf{Commutativity and Associativity} as our symmetric criterion, which can be computed from the model’s output logits  by comparing predictions along algebraically equivalent computation paths. For operations that do not naively satisfy these properties (e.g., subtraction), we employ a suitable \emph{proxy identity}, which is detailed in Table~\ref{tab:sym_modular}.

\paragraph{Notation.}
We denote by $p_\theta(\cdot \mid a,b)$ the model’s predicted distribution (softmax over logits) given 
inputs $(a,b)$, restricted to $\{0,\dots,p-1\}$. To quantify similarity between two distributions $p$ 
and $q$, we use the \emph{symmetric KL divergence}:
\[
\mathrm{KL}_{\text{sym}}(p,q) \;=\; \tfrac{1}{2}\Big(\mathrm{KL}(p\;\|\;q) + \mathrm{KL}(q\;\|\;p)\Big).    
\]

\newpage


\subsubsection{Addition: $x+y \pmod{p}$}

\begin{figure}[H]
    \centering
     \includegraphics[width=0.97\linewidth]{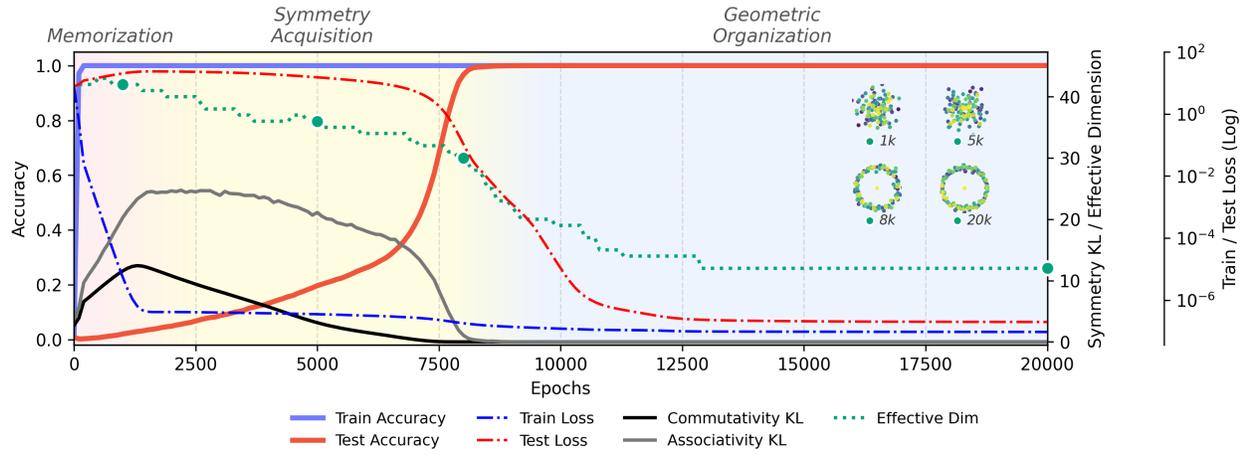}
    \caption{\textbf{Grokking Dynamics in Modular Addition.}}
\end{figure}


\subsubsection{Subtraction: $x-y \pmod{p}$}

\begin{figure}[H]
    \centering
     \includegraphics[width=0.97\linewidth]{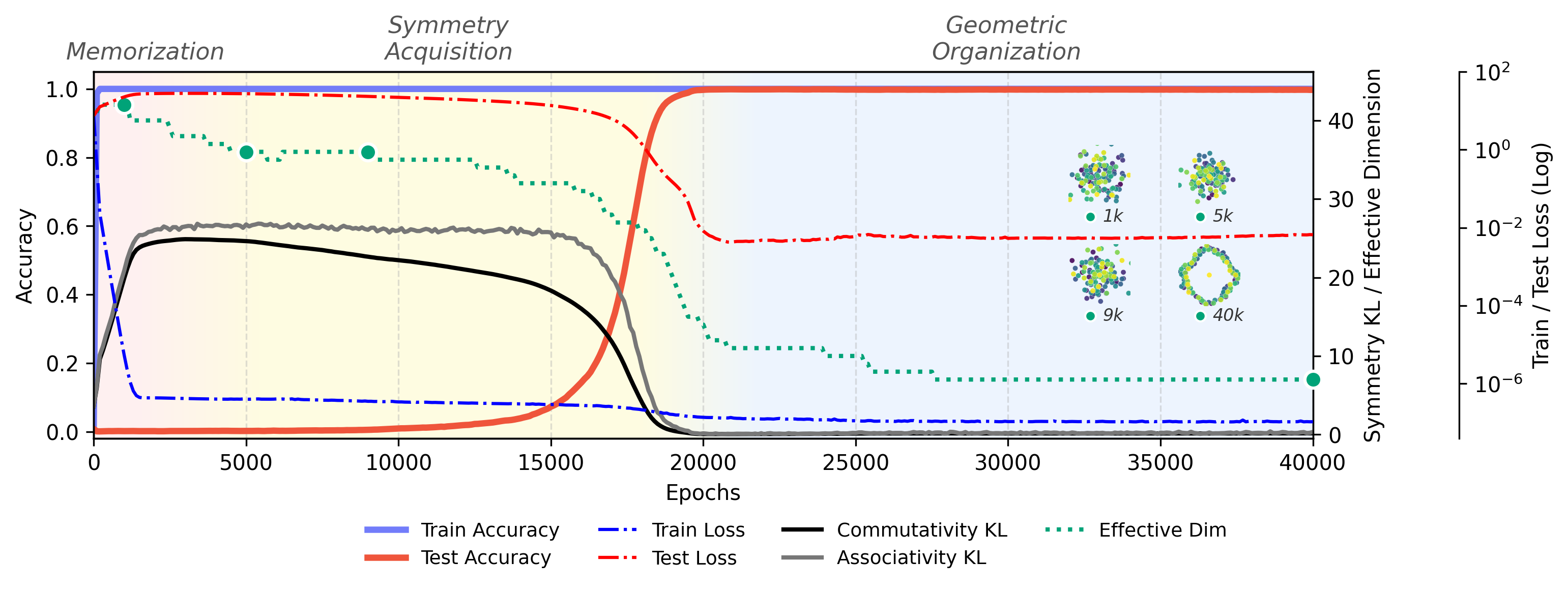}
    \caption{\textbf{Grokking Dynamics in Modular Subtraction.}}
\end{figure}

\newpage


\subsubsection{Squared sum: $(x+y)^2 \pmod{p}$}

\begin{figure}[H]
    \centering
     \includegraphics[width=0.97\linewidth]{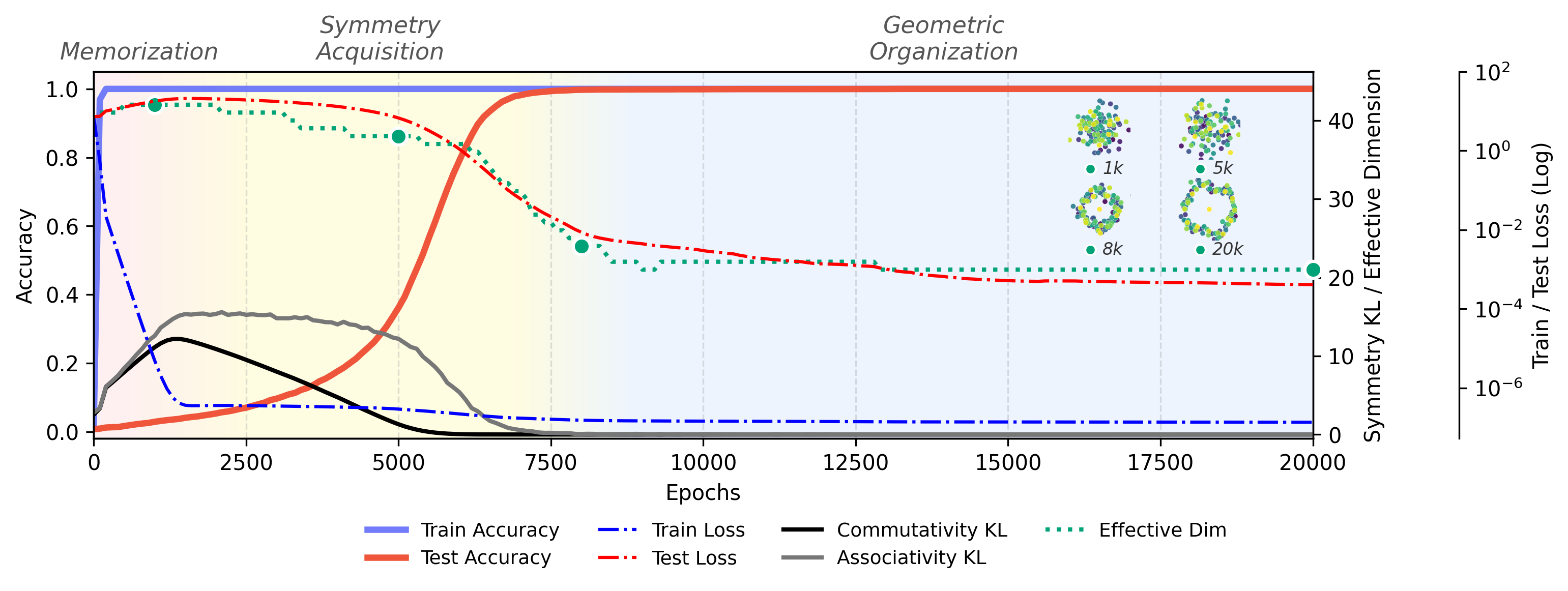}
    \caption{\textbf{Grokking Dynamics in Modular Squared Sum}}
\end{figure}


\subsubsection{Cubed sum: $(x+y)^3 \pmod{p}$}

\begin{figure}[H]
    \centering
     \includegraphics[width=0.97\linewidth]{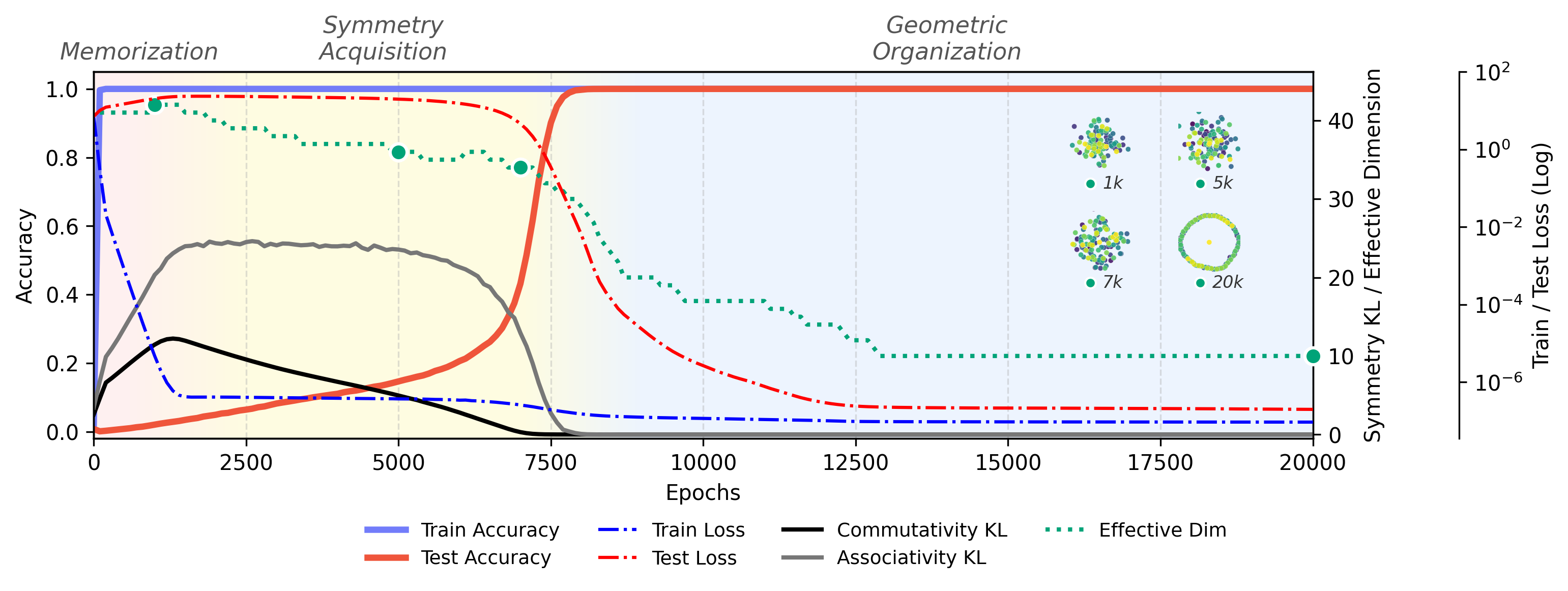}
    \caption{\textbf{Grokking Dynamics in Modular Cubed Sum}}
\end{figure}

\newpage

\subsubsection{Multiplication: $x \times  y \pmod{p}$}

\begin{figure}[H]
    \centering
     \includegraphics[width=0.97\linewidth]{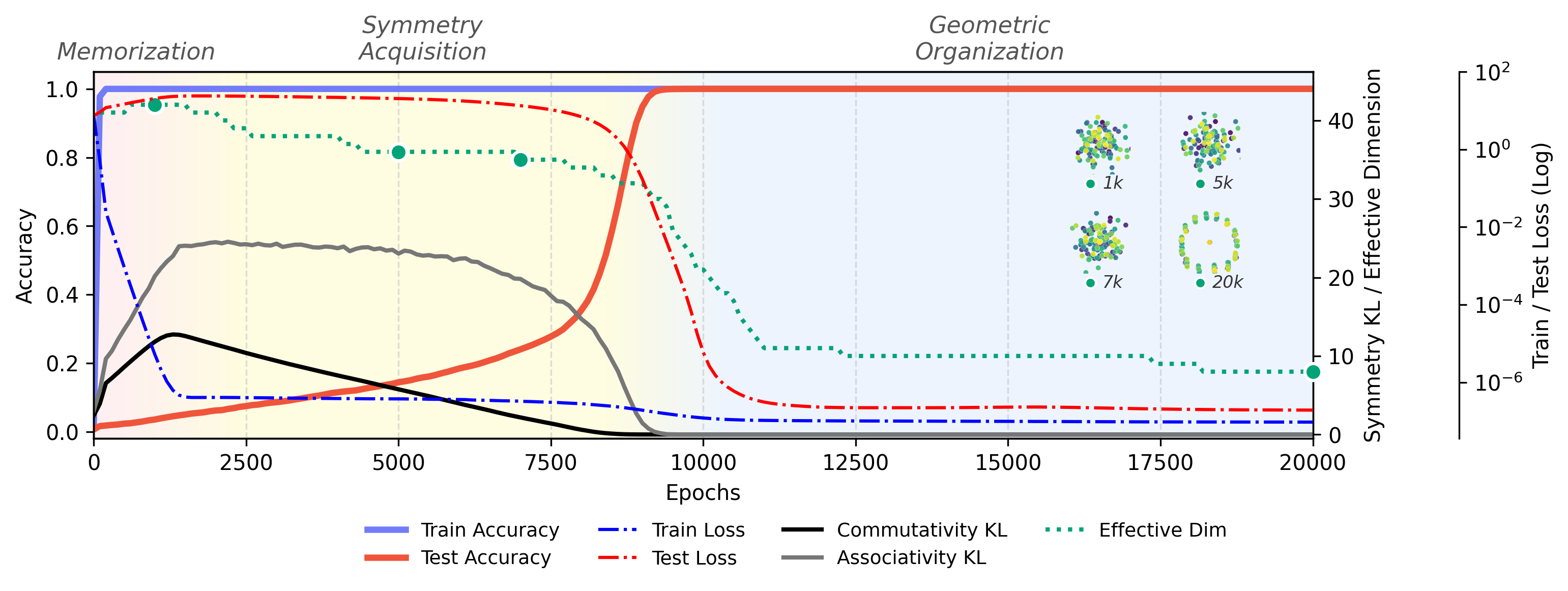}
    \caption{\textbf{Grokking Dynamics in Modular Multiplication}}
\end{figure}


\subsubsection{Affine--bilinear: $x + xy + y$}

\begin{figure}[H]
    \centering
     \includegraphics[width=0.97\linewidth]{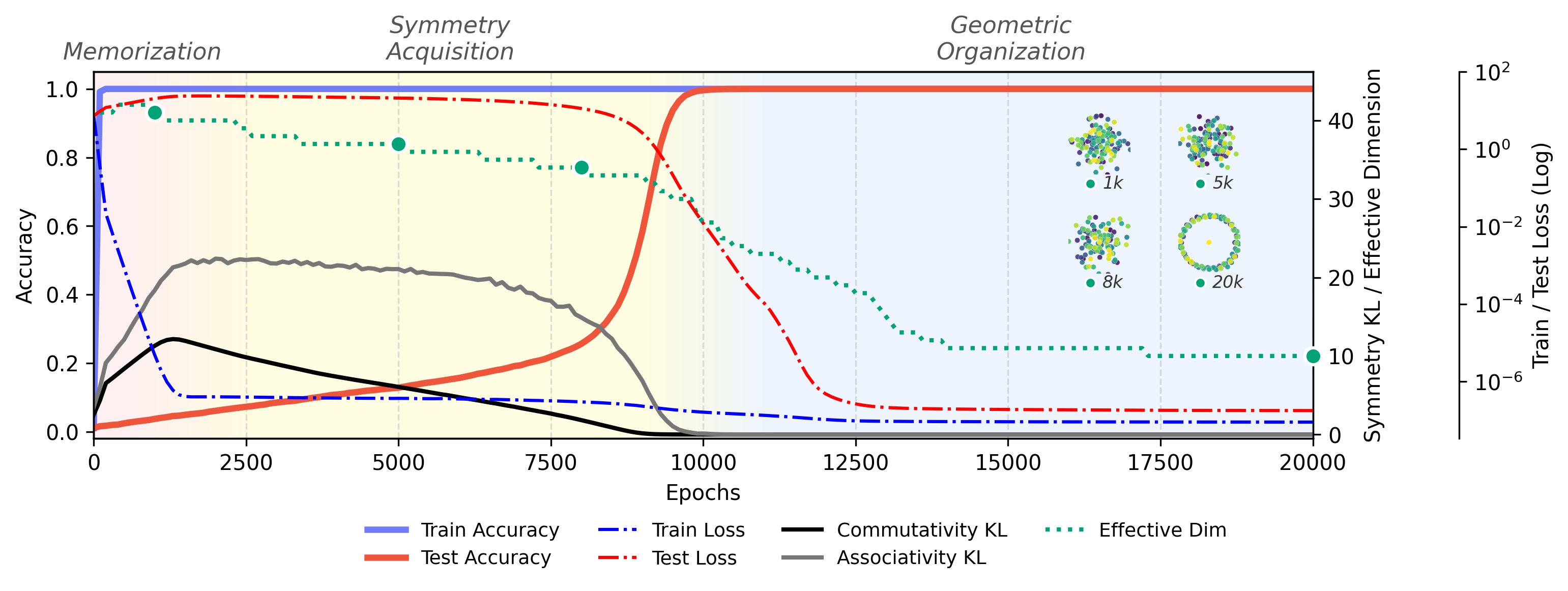}
    \caption{\textbf{Grokking Dynamics in Modular Affine--bilinear}}
\end{figure}












\newpage
\subsection{Geometric Progress Measure for Graph Metric Completion Tasks}
\label{Appendix:C}
For the graph metric completion task, we use the \textbf{Triangle Equality} property. This measure evaluates whether the model's learned representation respects the additive nature of distances along shortest paths. Formally, for any three nodes $u, v, w$, if $v$ lies on a geodesic (shortest path) between $u$ and $w$, the metric must satisfy: $d(u, w) = d(u, v) + d(v, w)$

To quantify this, we sample triplets $(a, b, c)$ such that $b$ is an intermediate node on a shortest path between $a$ and $c$. Then, we evaluate the consistency of the model's predictions by computing the convolution of the predicted probability distributions for $d(a, b)$ and $d(b, c)$, denoted as $P_{a \to b} * P_{b \to c}$. We then measure the Symmetric KL Divergence between this convolved distribution and the model's direct prediction for $d(a, c)$: $\mathcal{L}_{\text{tri}} = D_{\text{KL}}\left( (P_{a \to b} * P_{b \to c}) \parallel P_{a \to c} \right)$.

\subsubsection{Cycle}
\begin{figure}[H]
    \centering
     \includegraphics[width=0.97\linewidth]{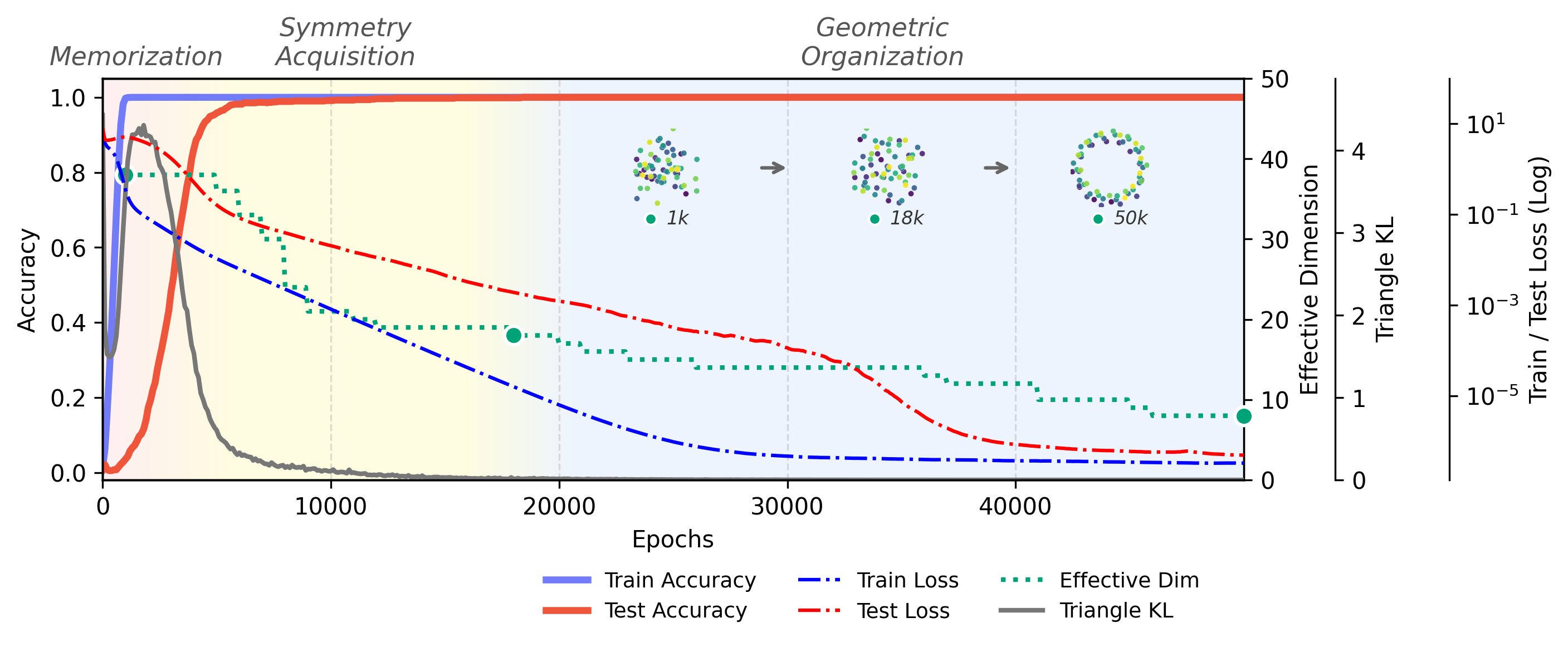}
    \caption{\textbf{Grokking Dynamics in Graph Metric Completion: Cycle}}
\end{figure}

\subsubsection{Path}
\begin{figure}[H]
    \centering
     \includegraphics[width=0.97\linewidth]{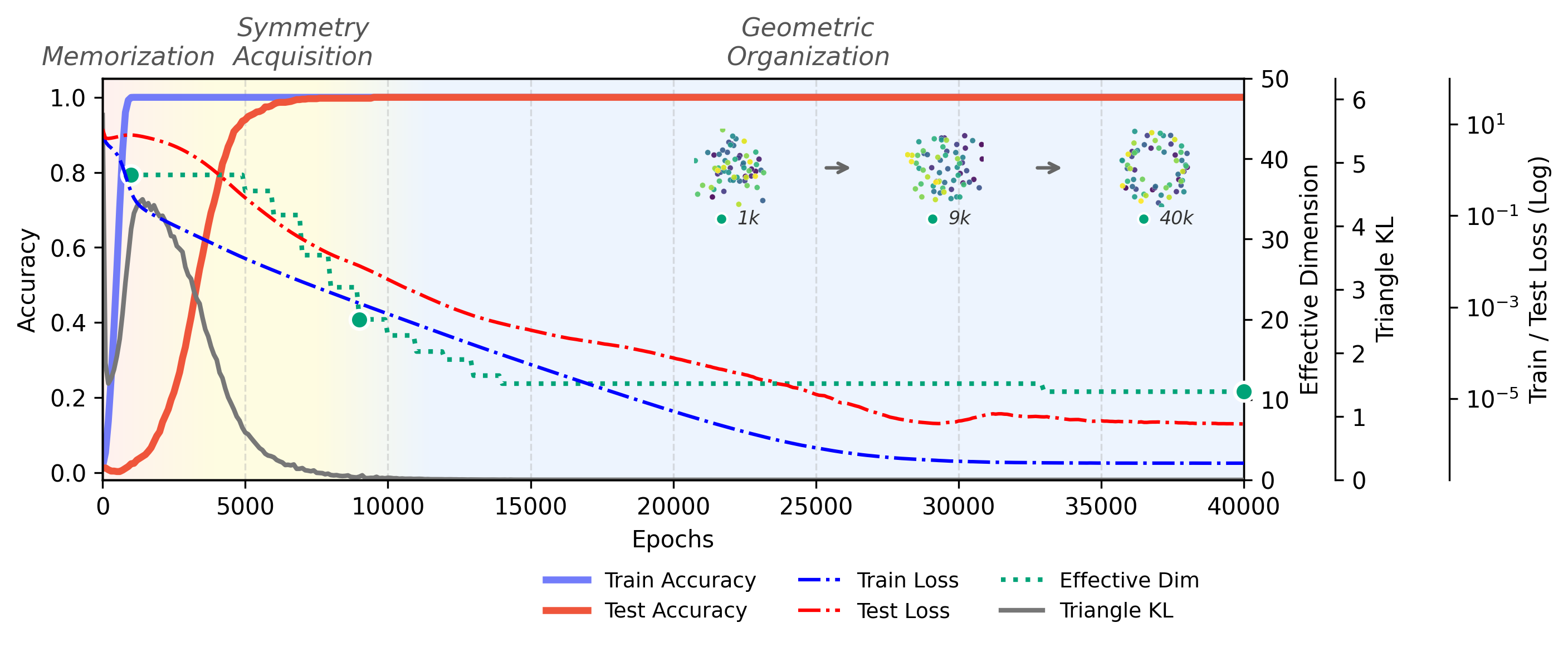}
    \caption{\textbf{Grokking Dynamics in Graph Metric Completion: Path}}
\end{figure}

\subsubsection{Cylinder}
\begin{figure}[H]
    \centering
     \includegraphics[width=0.97\linewidth]{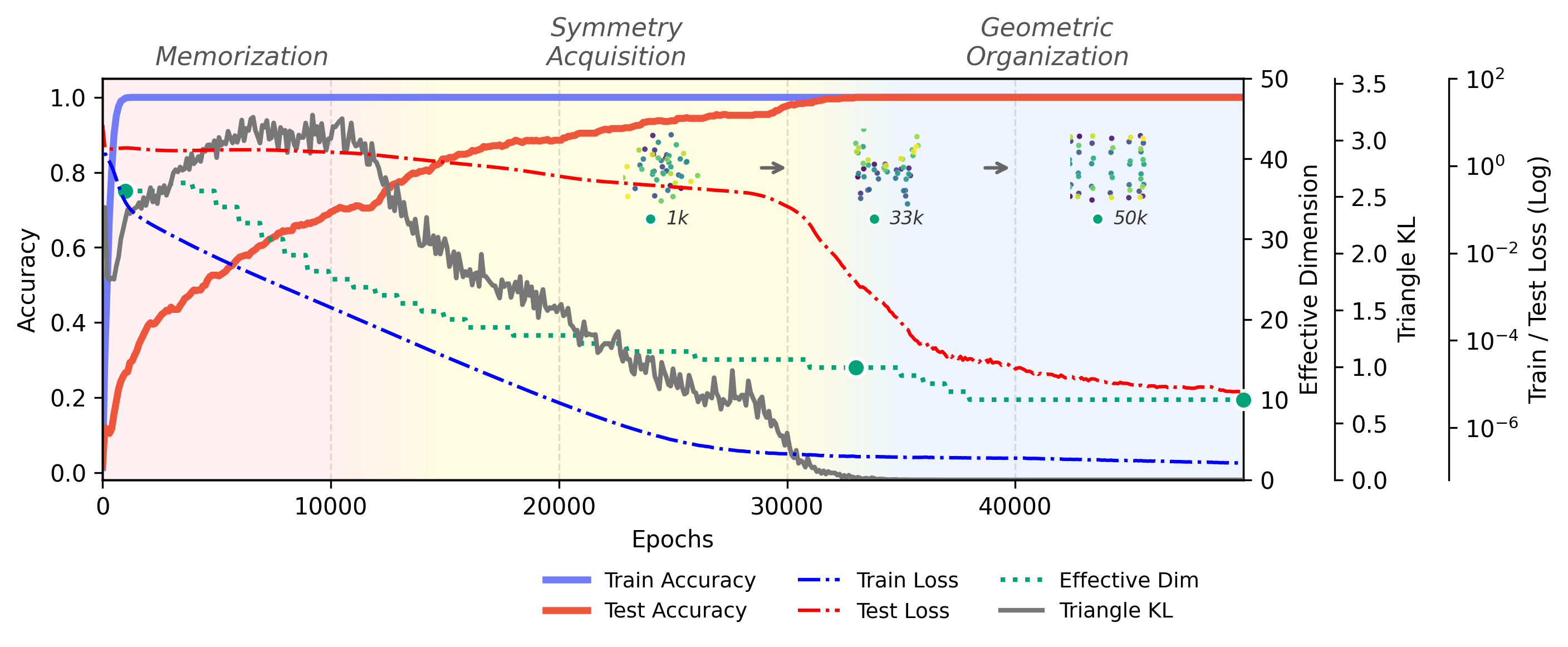}
    \caption{\textbf{Grokking Dynamics in Graph Metric Completion: Cylinder}}
\end{figure}

\subsubsection{Hypercube}
\begin{figure}[H]
    \centering
     \includegraphics[width=0.97\linewidth]{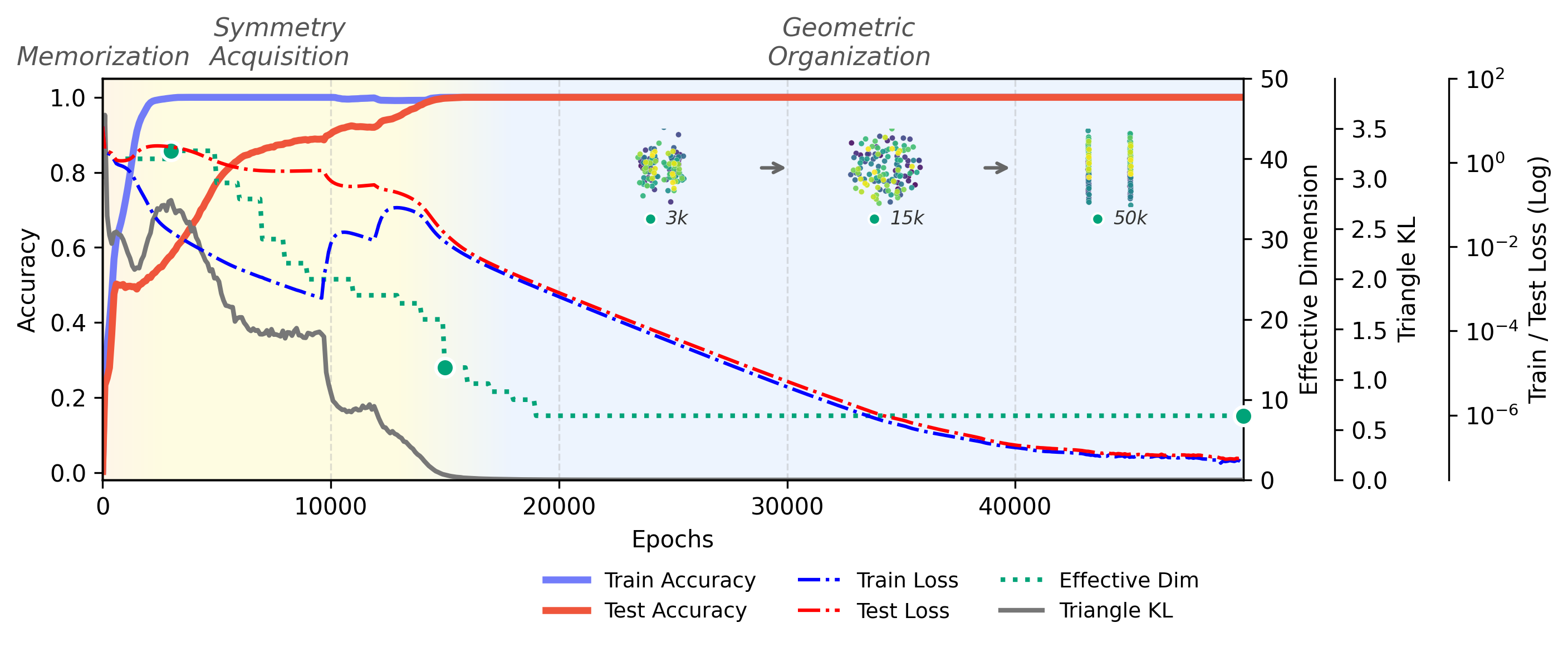}
    \caption{\textbf{Grokking Dynamics in Graph Metric Completion: Hypercube}}
\end{figure}

\subsubsection{2D Lattice}
\begin{figure}[H]
    \centering
     \includegraphics[width=0.97\linewidth]{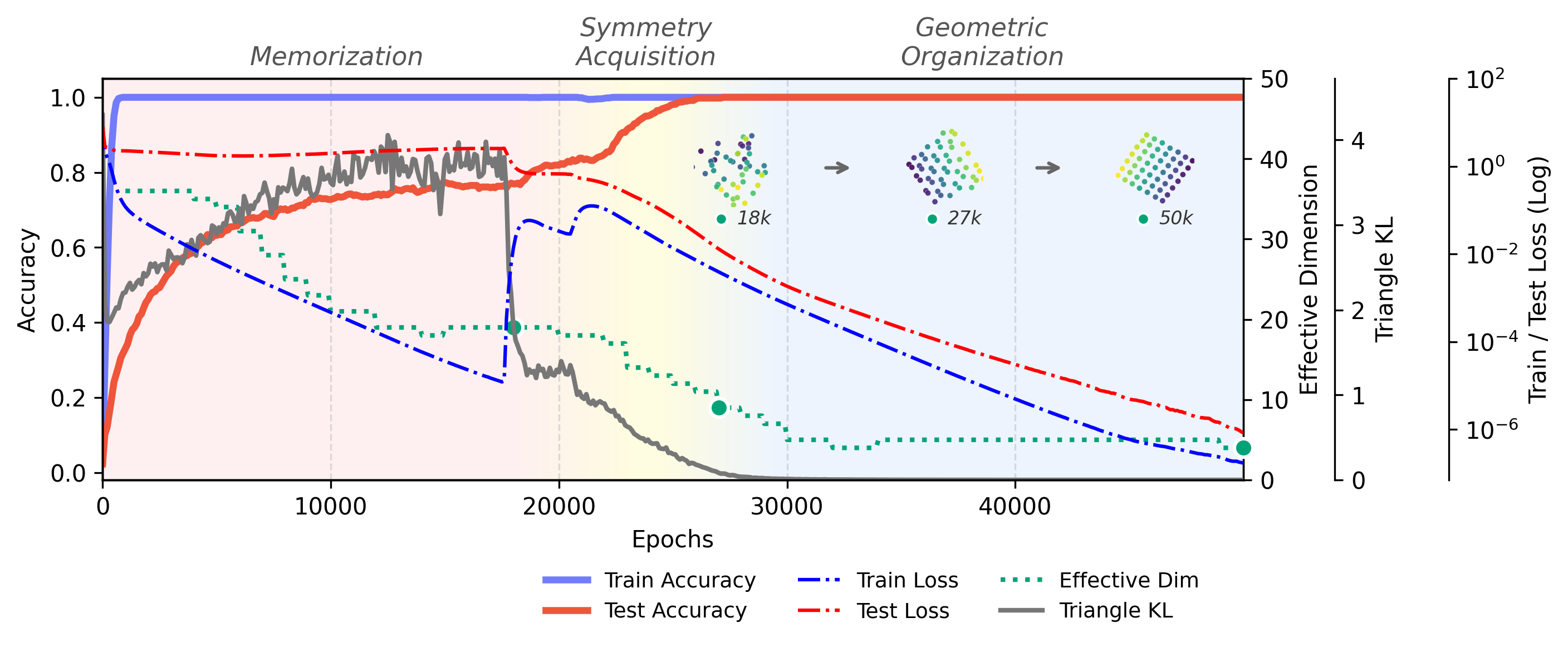}
    \caption{\textbf{Grokking Dynamics in Graph Metric Completion: Lattice}}
\end{figure}

\subsubsection{3D Lattice}
\begin{figure}[H]
    \centering
     \includegraphics[width=0.97\linewidth]{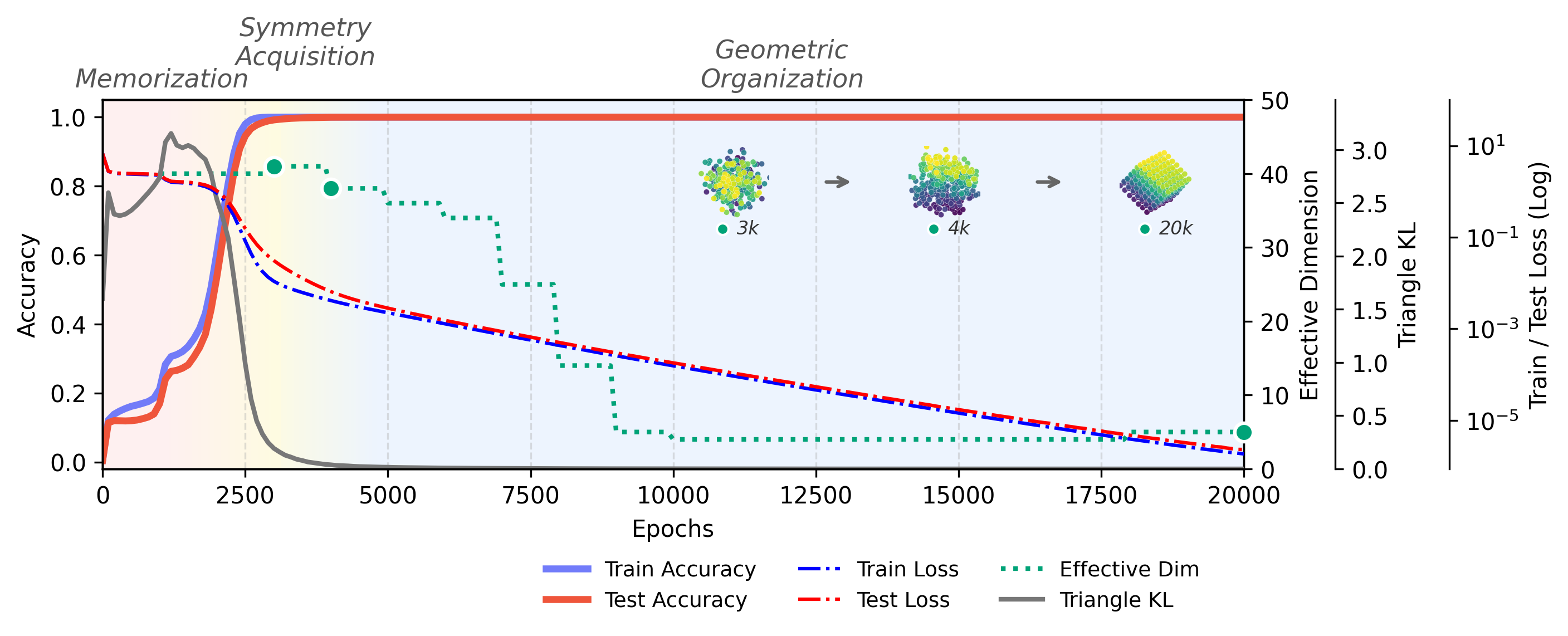}
    \caption{\textbf{Grokking Dynamics in Graph Metric Completion: 3D Lattice}}
\end{figure}

\newpage

\subsection{Comparison Tasks}

For the comparison task, we use the \textbf{Transitivity} property. We evaluate whether the model's pairwise comparisons are consistent with a global linear ordering, specifically testing if the learned representations adhere to a Bradley-Terry model, \textit{i.e.} the probability $P(a > b)$ is determined by the difference in latent scores, implying that the log-odds should be additive: $\ell_{ac} \approx \ell_{ab} + \ell_{bc}$.

In practice, we sample triples $(a, b, c)$ strictly ordered by their ground-truth rank such that $a < b < c$. We first extract the model's binary log-odds for each pair by taking the difference between the \textit{greater} and \textit{less} logits: $\ell = \text{logit}_{>} - \text{logit}_{<}$. After, we compare two distributions: the model's direct prediction $P(a > c) = \sigma(\ell_{ac})$ and the transitive prediction implied by the intermediate steps $Q(a > c) = \sigma(\ell_{ab} + \ell_{bc})$. The final metric is the Symmetric KL Divergence between these two distributions: $\mathcal{L}_{\text{trans}} = \frac{1}{2} \left( D_{\text{KL}}(P \parallel Q) + D_{\text{KL}}(Q \parallel P) \right)$.

\subsubsection{2D Comparison}
\begin{figure}[H]
    \centering
     \includegraphics[width=0.97\linewidth]{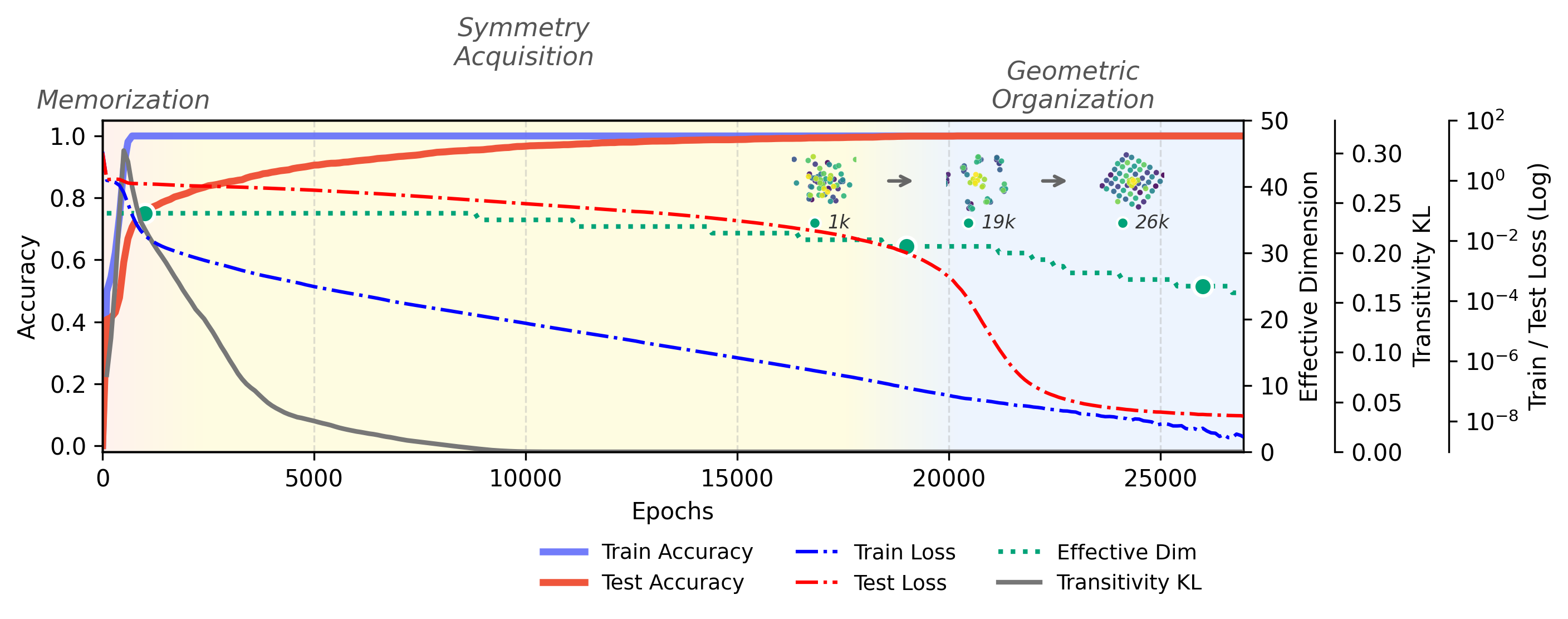}
    \caption{\textbf{Grokking Dynamics in 2D Comparison}}
\end{figure}

\subsubsection{3D Comparison}
\begin{figure}[H]
    \centering
     \includegraphics[width=0.97\linewidth]{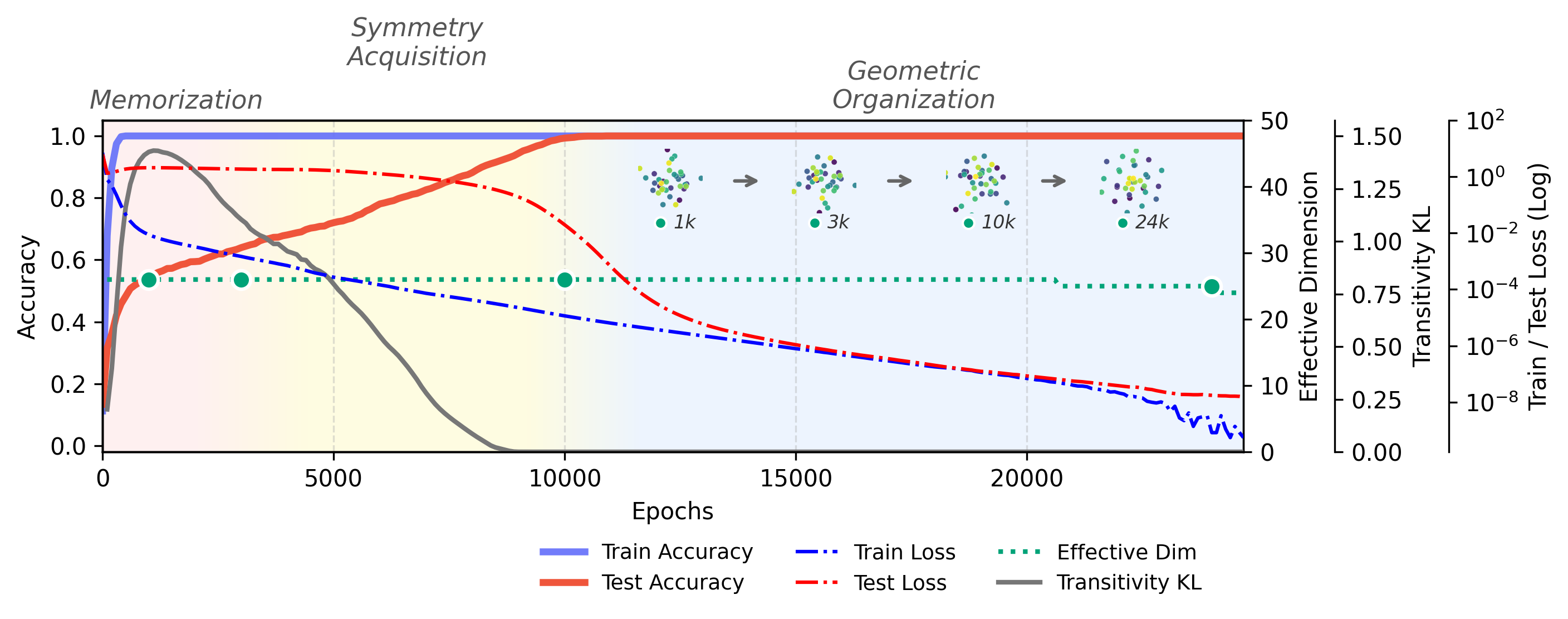}
    \caption{\textbf{Grokking Dynamics in 3D Comparison}}

\end{figure}
\newpage

\section{Characterizing Generalization via Intrinsic Symmetry}
\label{Appendix:sym_metric}

Below we plot the maximum \textit{instance-level} symmetry criterion achieved during training for each task. We observe a high correlation between low maximum instance-level criterion and successful generalization. These results suggest that the elimination of structural outliers represented by the maximum symmetry violation is a key marker of the phase transition from memorization to true algorithmic understanding. 

For each task, we track 20 independent seeds. In the scatter plots, each point represents the state of a run at a specific training epoch: \textbf{blue points} denote runs that have successfully generalized (reached 100\% accuracy) by that epoch, while \textbf{red markers} indicate runs that have not yet generalized. We observe a \textit{\textbf{soft threshold}} distinguishing these states, where generalizing runs consistently exhibit lower maximum symmetry violations compared to their non-generalizing counterparts.

\subsection{Modular Arithmetics}
\begin{figure}[H]
    \centering
    \includegraphics[height=0.40\textheight]{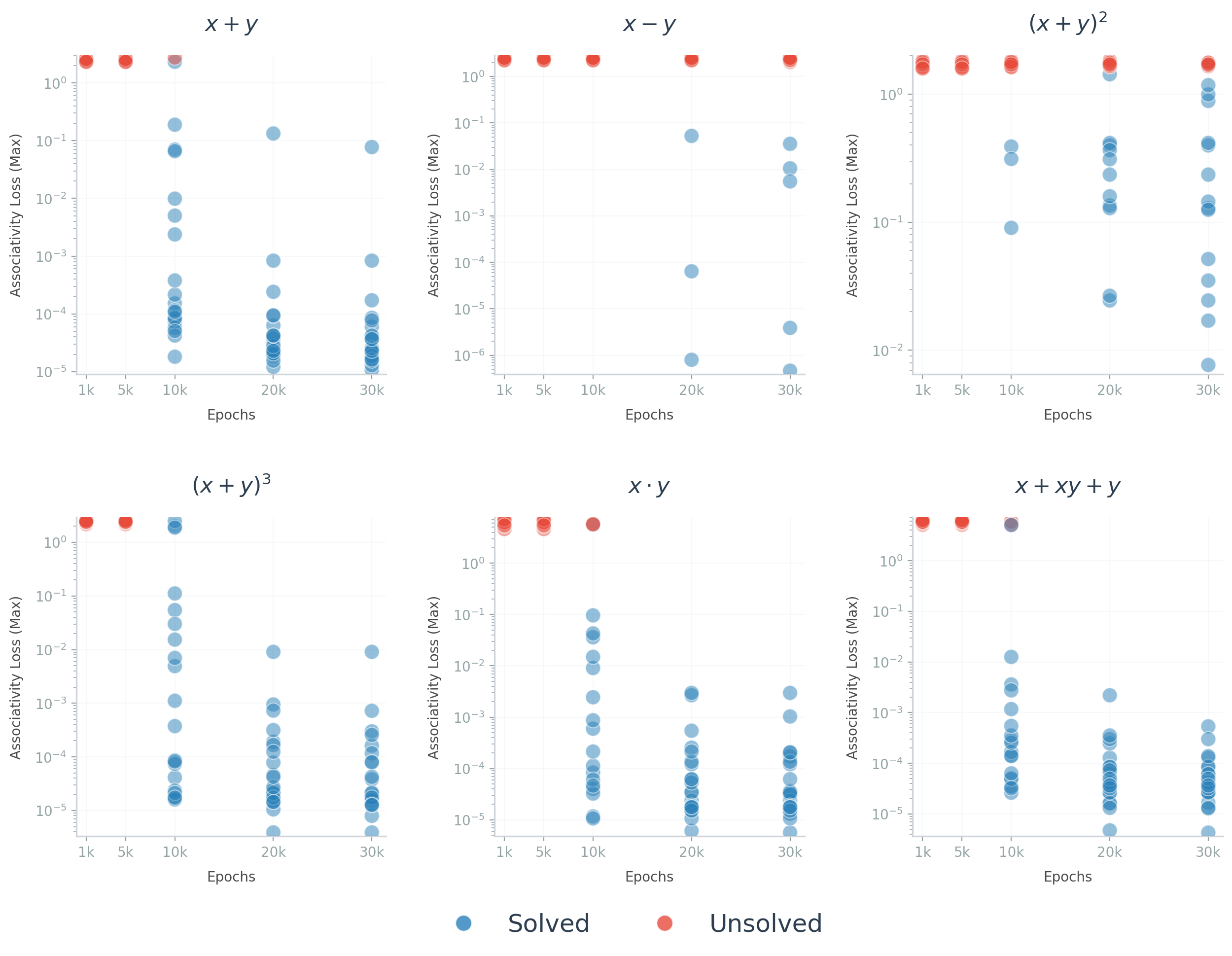}
\caption{\textbf{Maximum Symmetry Violation and Generalization Status for }modular arithmetic\textbf{ tasks.}}
\end{figure}

\newpage

\subsection{Graph Metric Completion}
\begin{figure}[H]
    \centering
    \includegraphics[height=0.40\textheight]{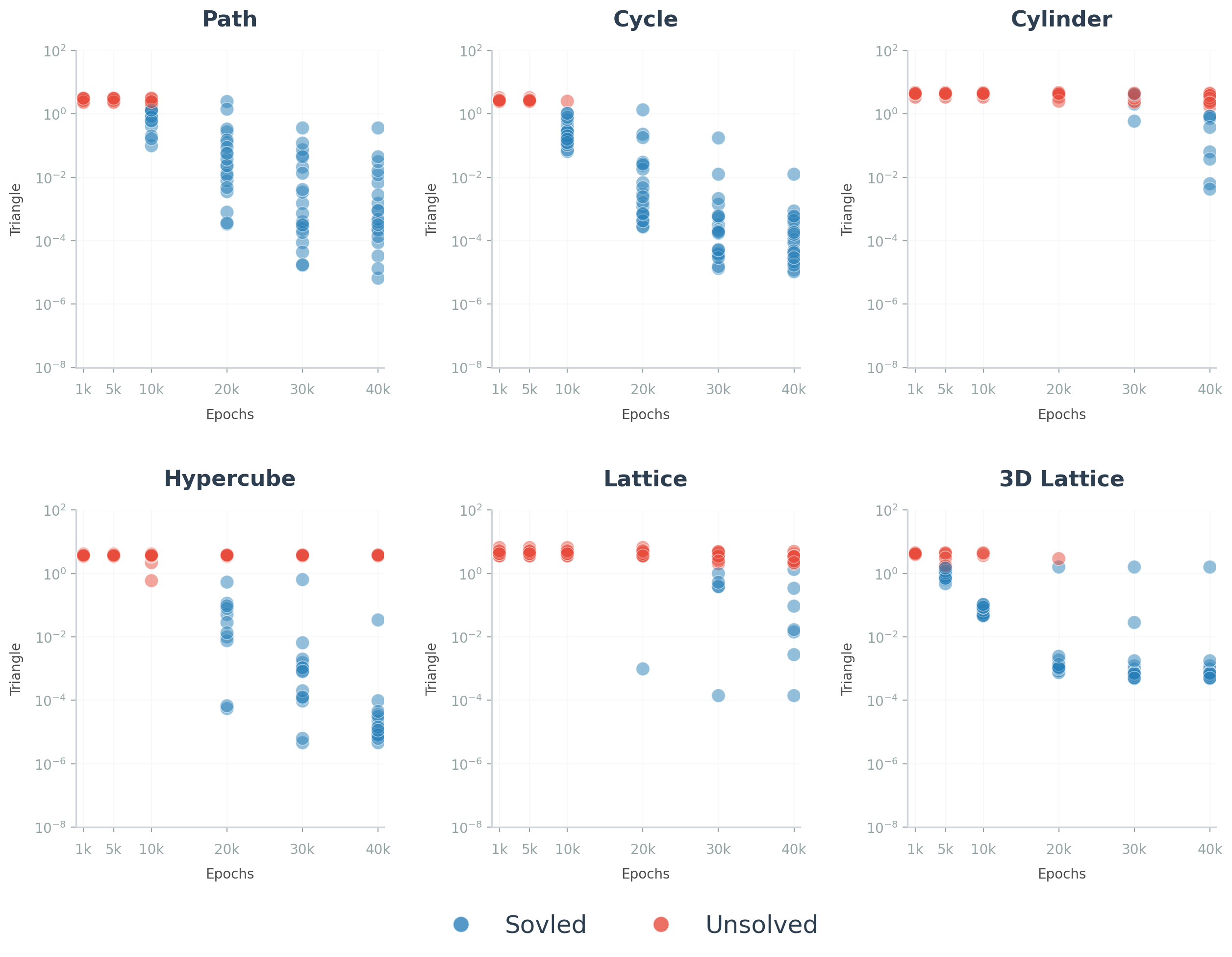}
    \caption{Symmetry Loss vs. Test Accuracy for Graph Metric Completion Tasks}
\end{figure}

\subsection{Comparison}
\begin{figure}[H]
    \centering
    \includegraphics[height=0.18\textheight]{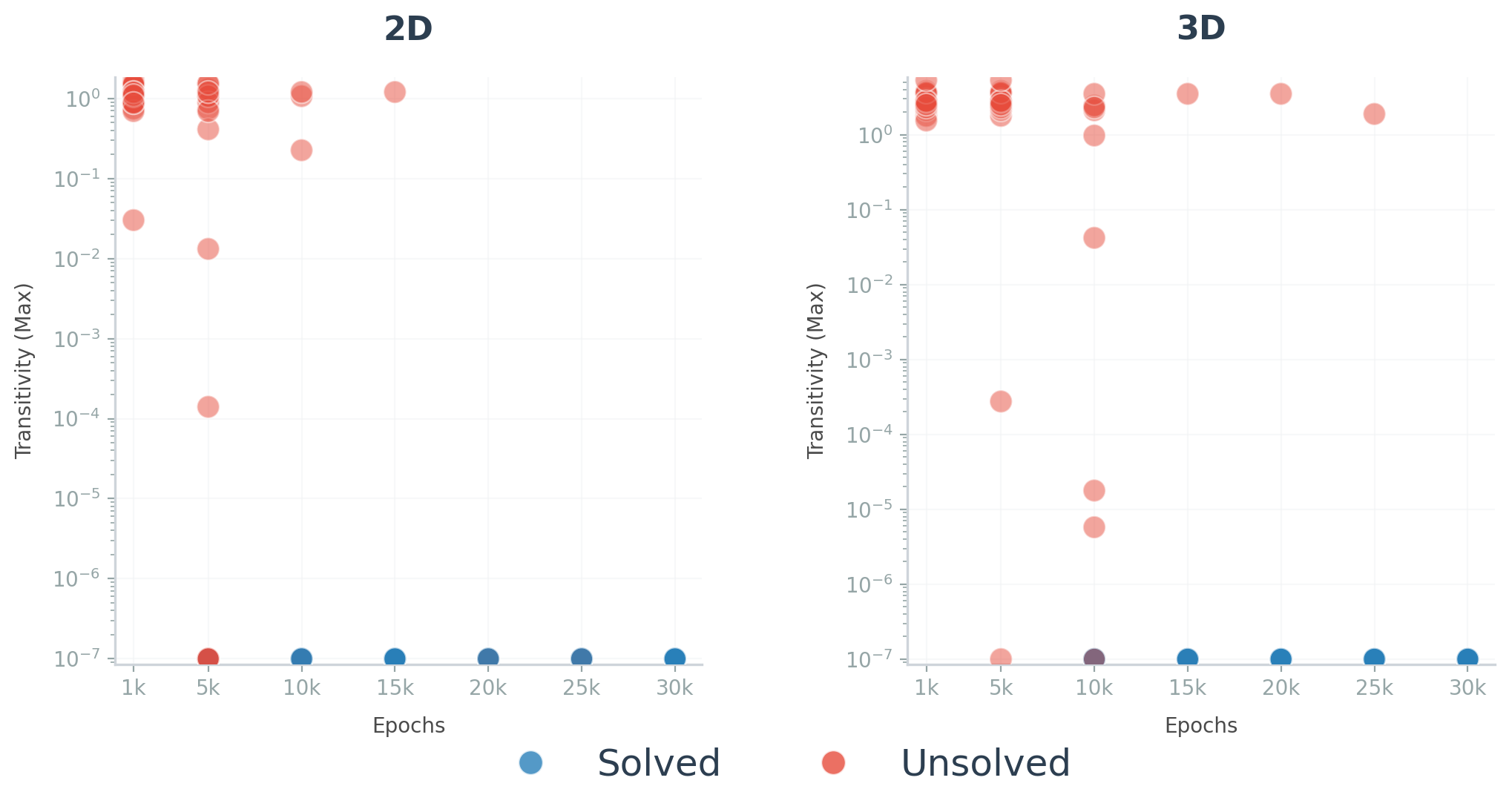}
    \caption{Symmetry Loss vs. Test Accuracy for Comparison Tasks}
\end{figure}

\newpage

\section{Analysis of Embedding Geometry}
\label{Appendix:A}

\subsection{Effective Dimensionality}
We quantify the effective dimensionality of the embedding space using a PCA-based criterion. We apply principal component analysis to the token embedding matrix and define the effective embedding dimension as the number of components whose explained variance exceeds 1\%. This serves as a simple and robust proxy for the dimensionality of the learned representations.

\subsection{Full Visualization}

Here, we visualize the geometric structure of the learned embeddings for each task using Principal Component Analysis (PCA). To fully capture the manifold's shape, we visualize the pairwise projections of the top principal components, plotting up to a maximum of 8 dimensions.

\subsection{Modular Arithmetic}

\subsubsection{Task 1 : Addition: $x+y \pmod{p}$}

\begin{figure}[H]
    \centering
    \includegraphics[height=0.41\textheight]{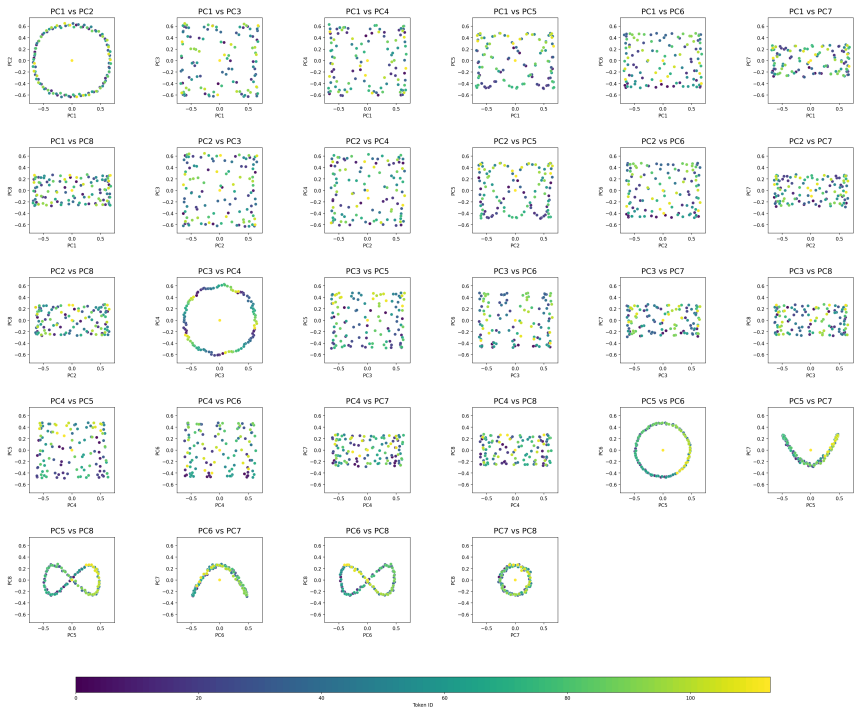}
    \caption{PCA projection of embeddings of model trained on modular addition task.}
\end{figure}

\newpage

\subsubsection{Task 2 : Subtraction: $x-y \pmod{p}$}
\begin{figure}[H]
    \centering
    \includegraphics[height=0.41\textheight]{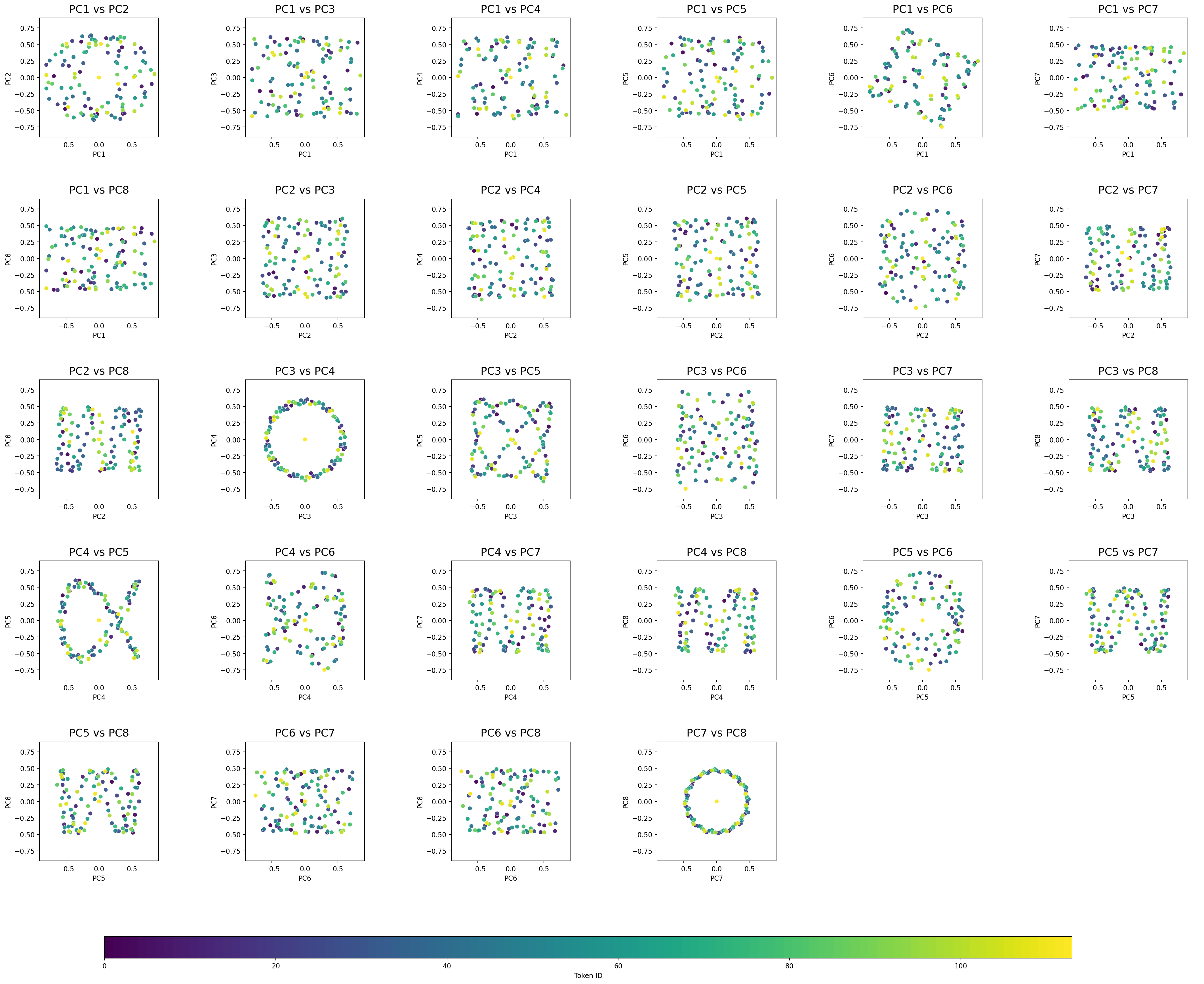}
    \caption{PCA projection of embeddings of model trained on modular subtraction task.}
\end{figure}

\subsubsection{Task 3 : $(x+y)^2 \pmod{p}$}
\begin{figure}[H]
    \centering
    \includegraphics[height=0.41\textheight]{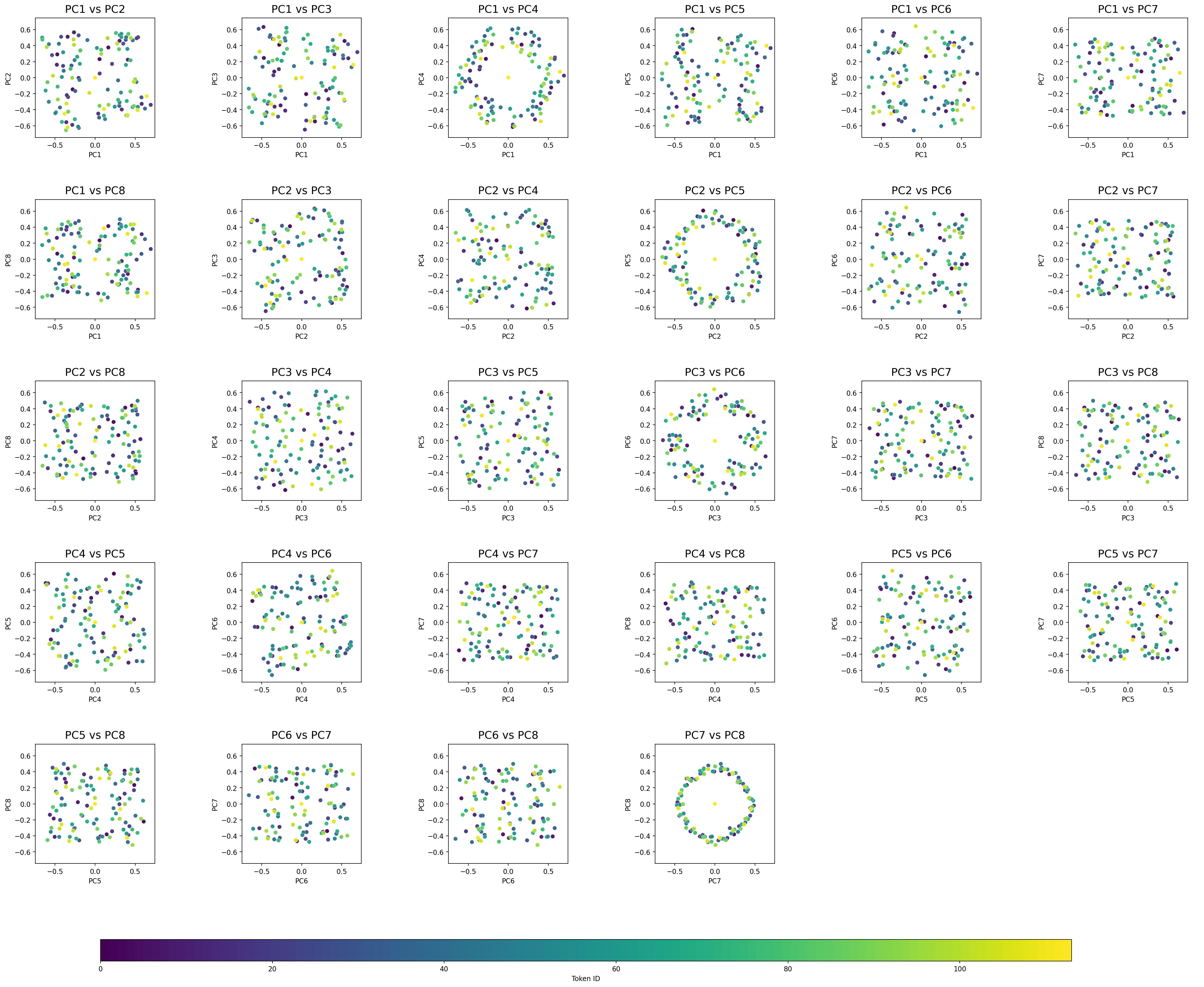}
    \caption{PCA projection of embeddings of model trained on Task 3.}
\end{figure}

\newpage

\subsubsection{Task 4 : $(x+y)^3 \pmod{p}$}
\begin{figure}[H]
    \centering
    \includegraphics[height=0.41\textheight]{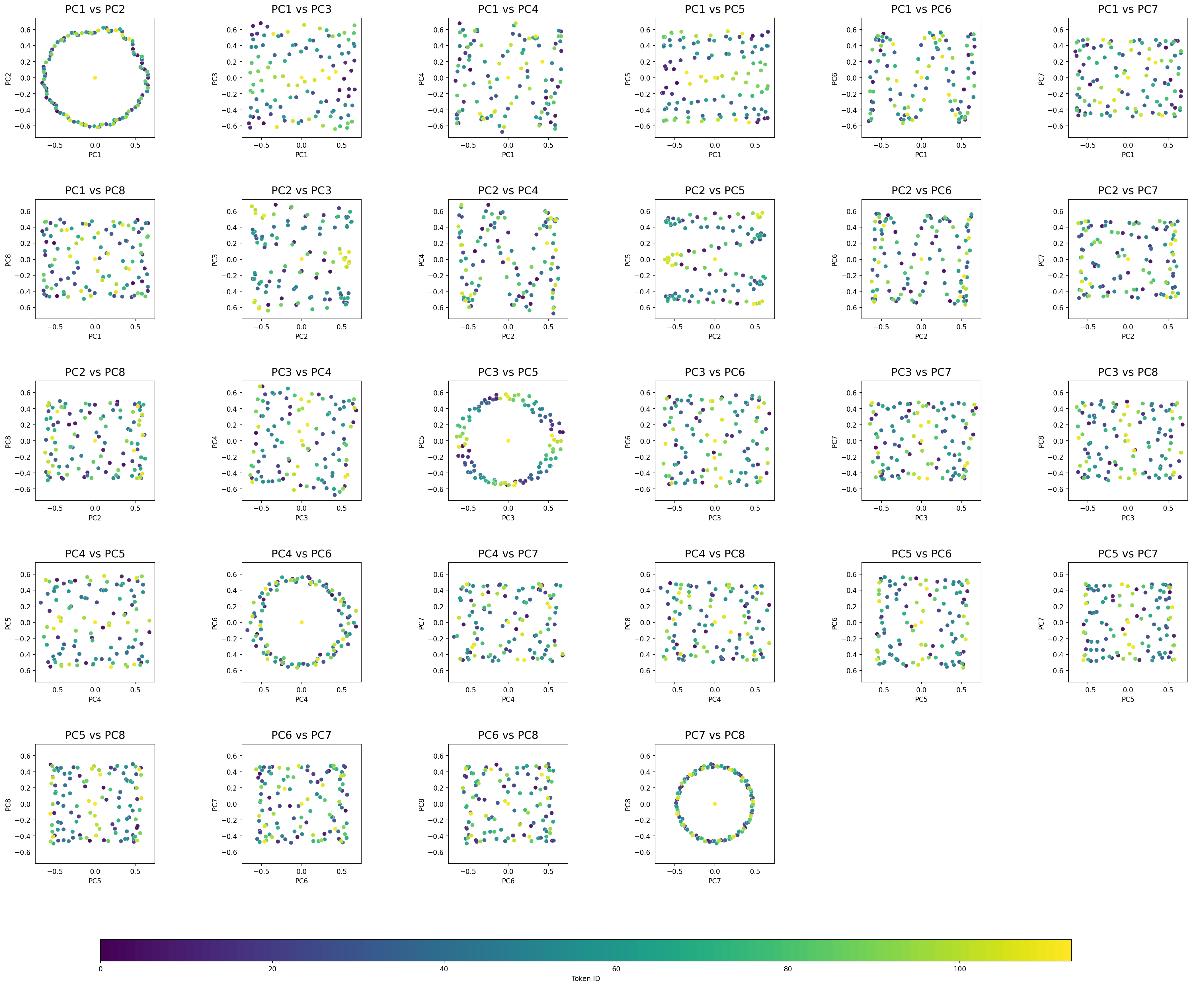}
    \caption{PCA projection of embeddings of model trained on Task 4.}
\end{figure}



\subsubsection{Task 5 : Multiplication: $ x \times y \pmod{p}$}
\begin{figure}[H]
    \centering
    \includegraphics[height=0.41\textheight]{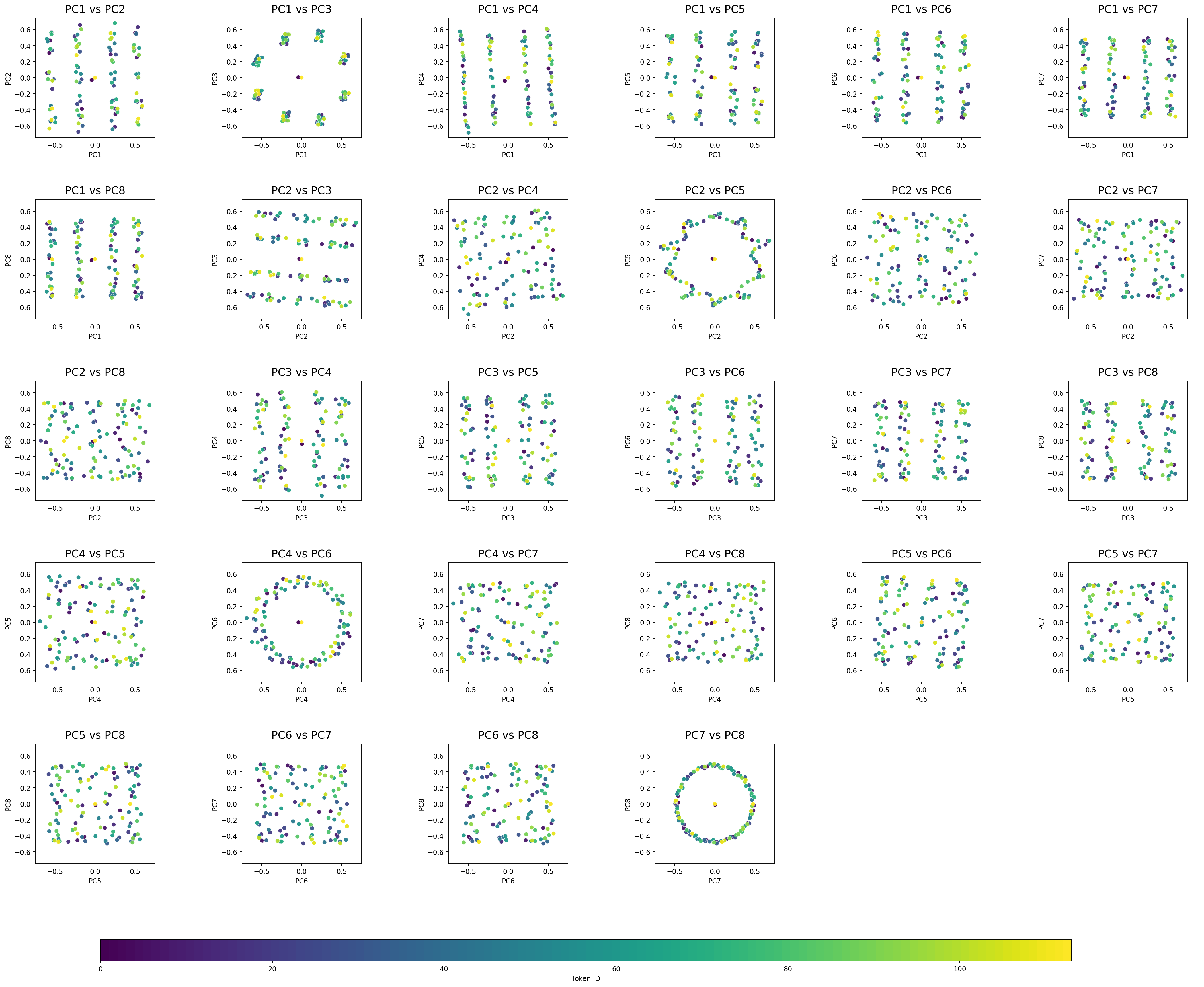}
    \caption{PCA projection of embeddings of model trained on modular multiplication task.}
\end{figure}

\newpage

\subsubsection{Task 6 : $ x+xy+y \pmod{p}$}
\begin{figure}[H]
    \centering
    \includegraphics[height=0.41\textheight]{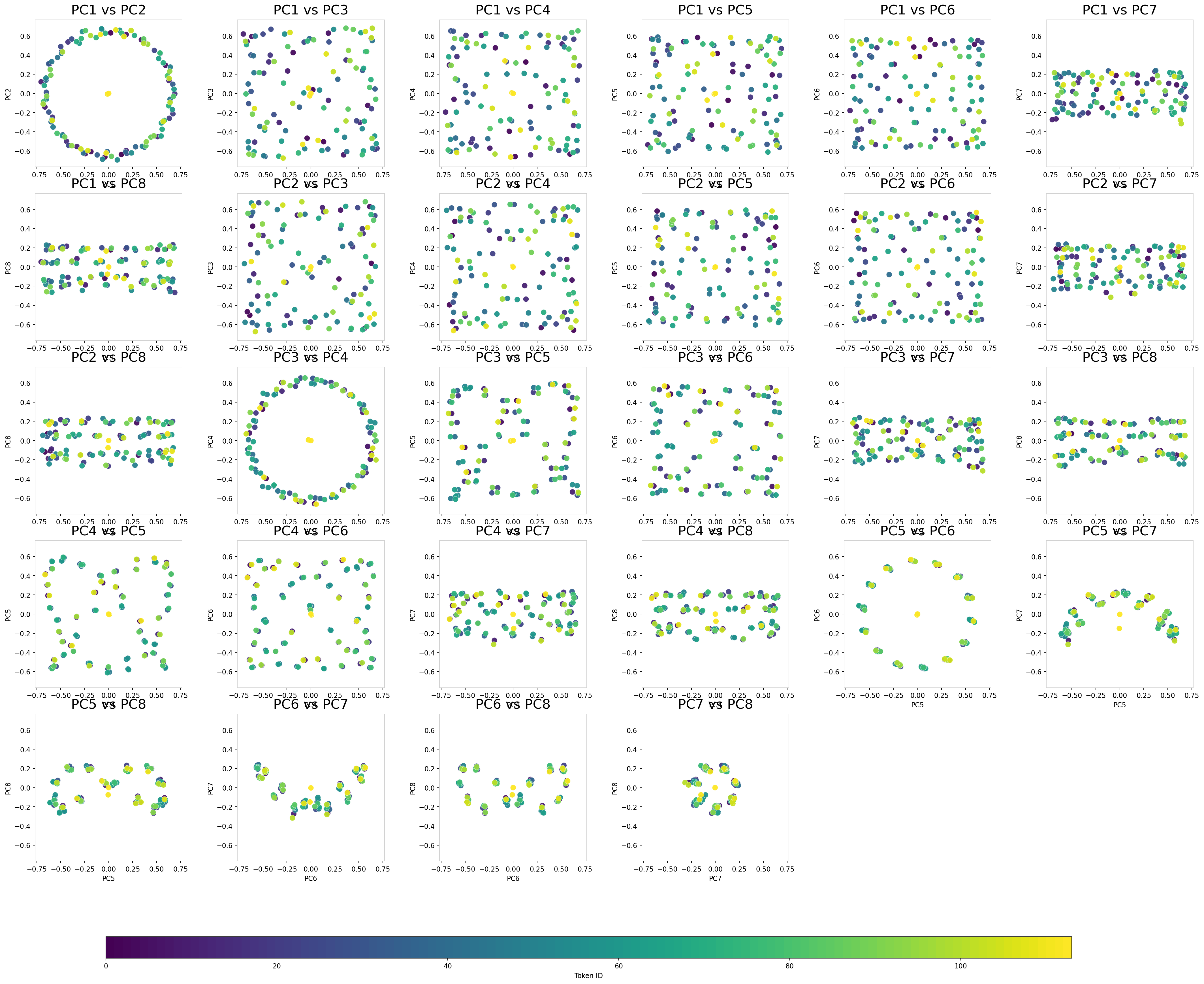}
    \caption{PCA projection of embeddings of model trained on Task 6.}
\end{figure}

\newpage

\subsection{graph metric completion task}

\subsubsection{Path}
\begin{figure}[H]
    \centering
    \includegraphics[height=0.39\textheight]{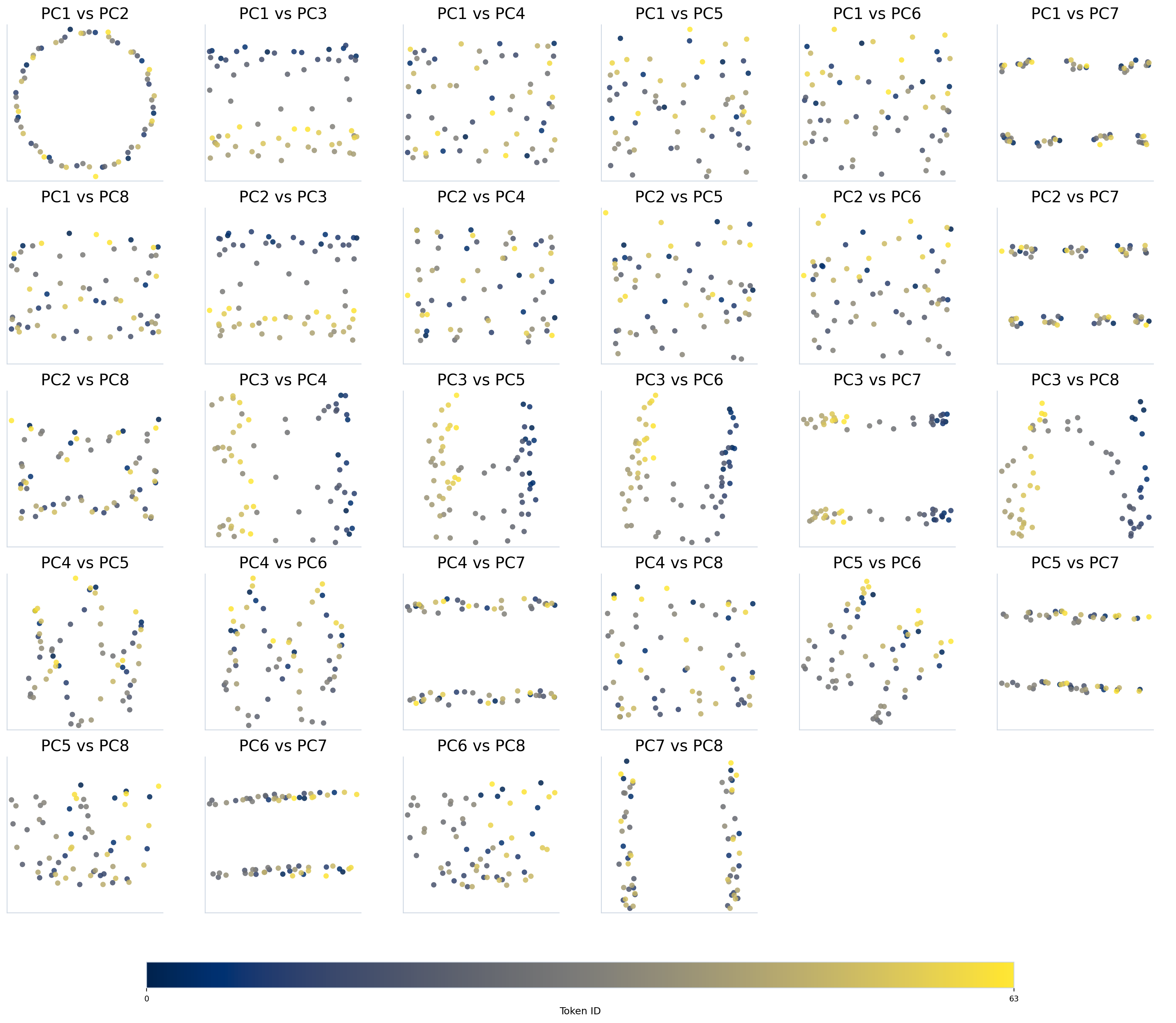}
    \caption{PCA projection of embeddings of model trained on the \textsc{Path} graph metric completion.}
\end{figure}

\subsubsection{Cycle}
\begin{figure}[H]
    \centering
    \includegraphics[height=0.39\textheight]{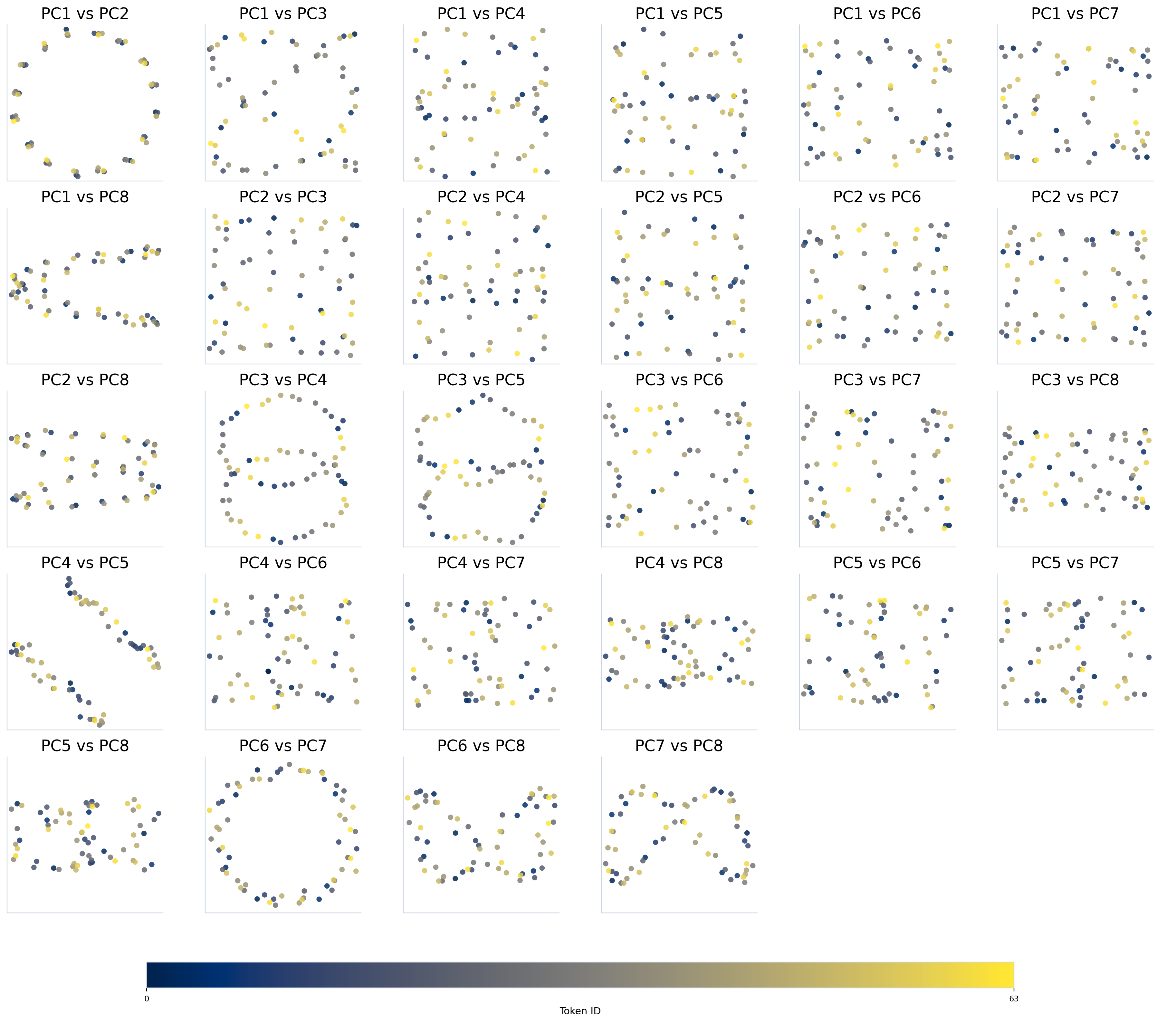}
    \caption{PCA projection of embeddings of model trained on the \textsc{Cycle} graph metric completion.}
\end{figure}

\newpage

\subsubsection{Cylinder}
\begin{figure}[H]
    \centering
    \includegraphics[height=0.39\textheight]{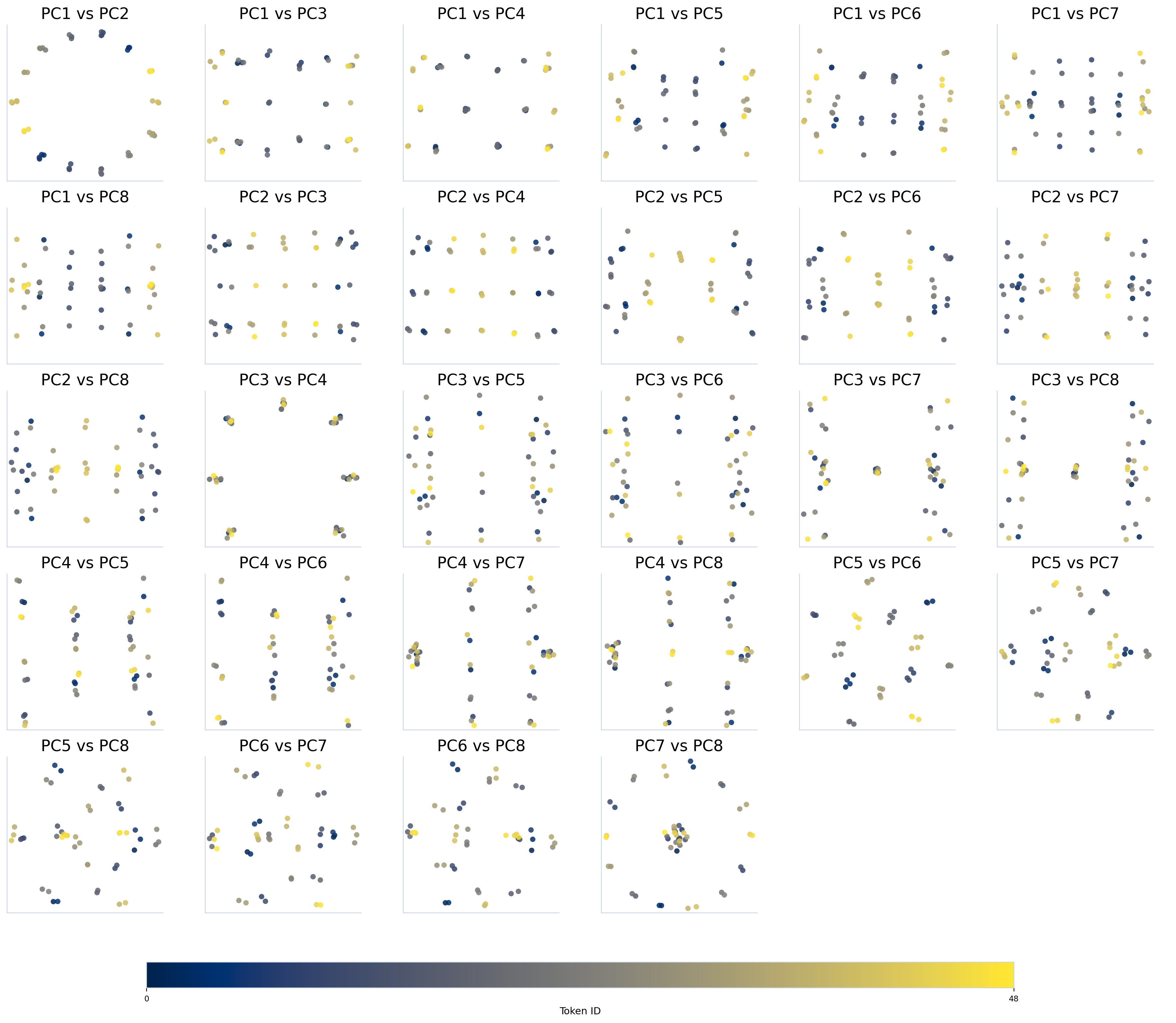}
    \caption{PCA projection of embeddings of model trained on the \textsc{Cylinder} graph metric completion.}
\end{figure}

\subsubsection{Hypercube}
\begin{figure}[H]
    \centering
    \includegraphics[height=0.39\textheight]{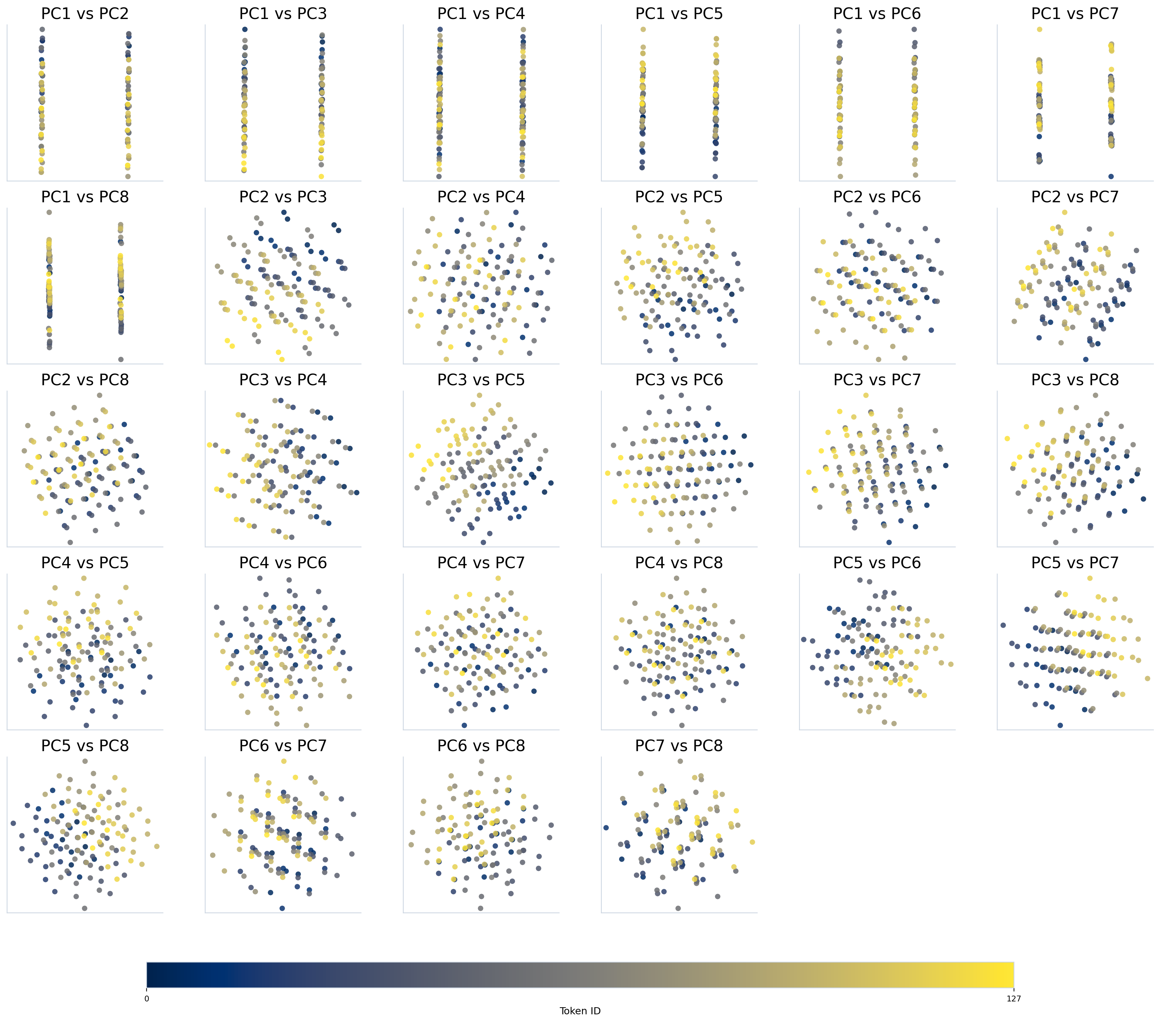}
    \caption{PCA projection of embeddings of model trained on the \textsc{Hypercube} graph metric completion.}
\end{figure}

\newpage

\subsubsection{2D Lattice}
\begin{figure}[H]
    \centering
    \includegraphics[height=0.41\textheight]{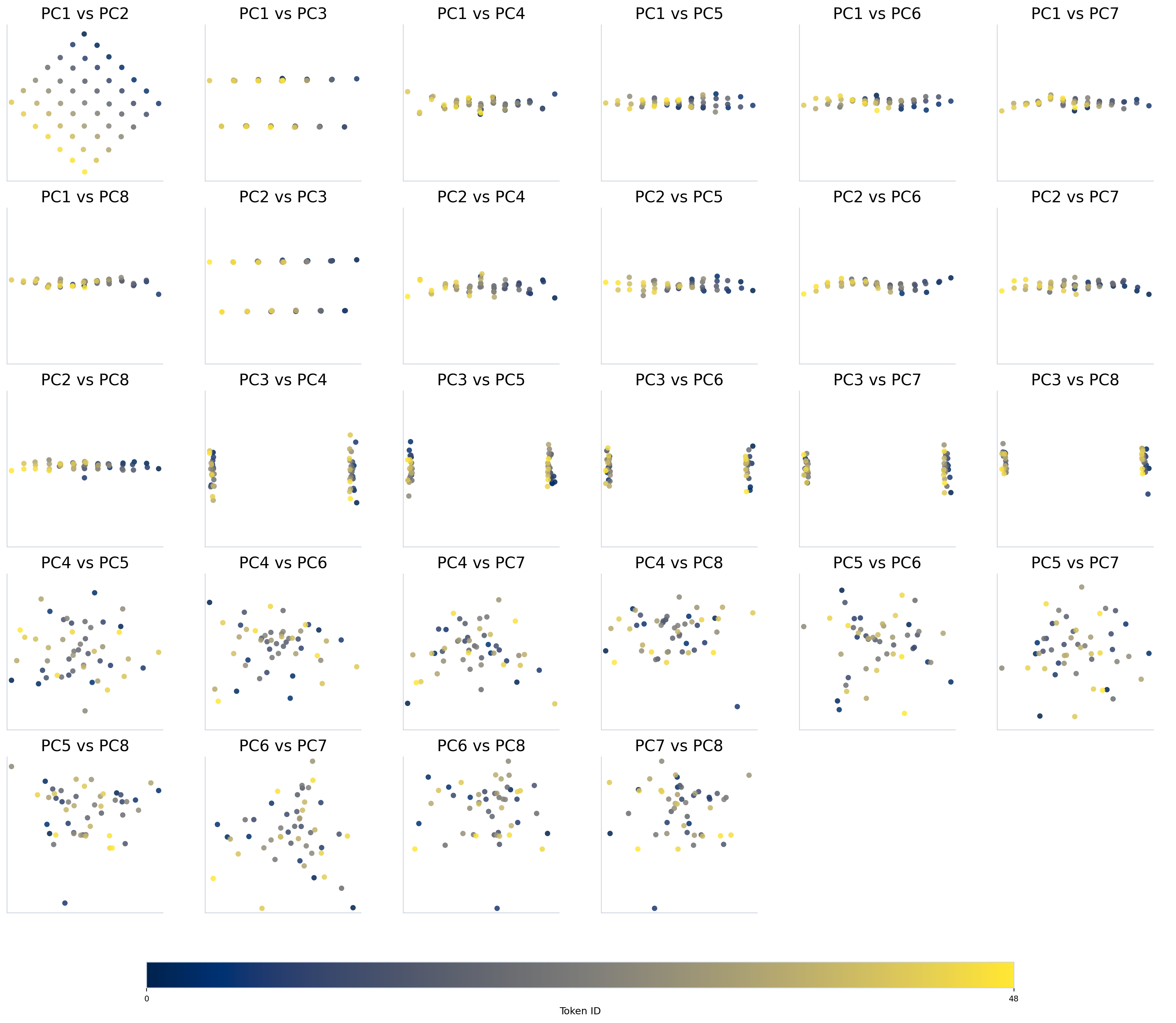}
    \caption{PCA projection of embeddings of model trained on the \textsc{2D Lattice} graph metric completion.}
\end{figure}

\subsubsection{3D Lattice}
\begin{figure}[H]
    \centering
    \includegraphics[height=0.41\textheight]{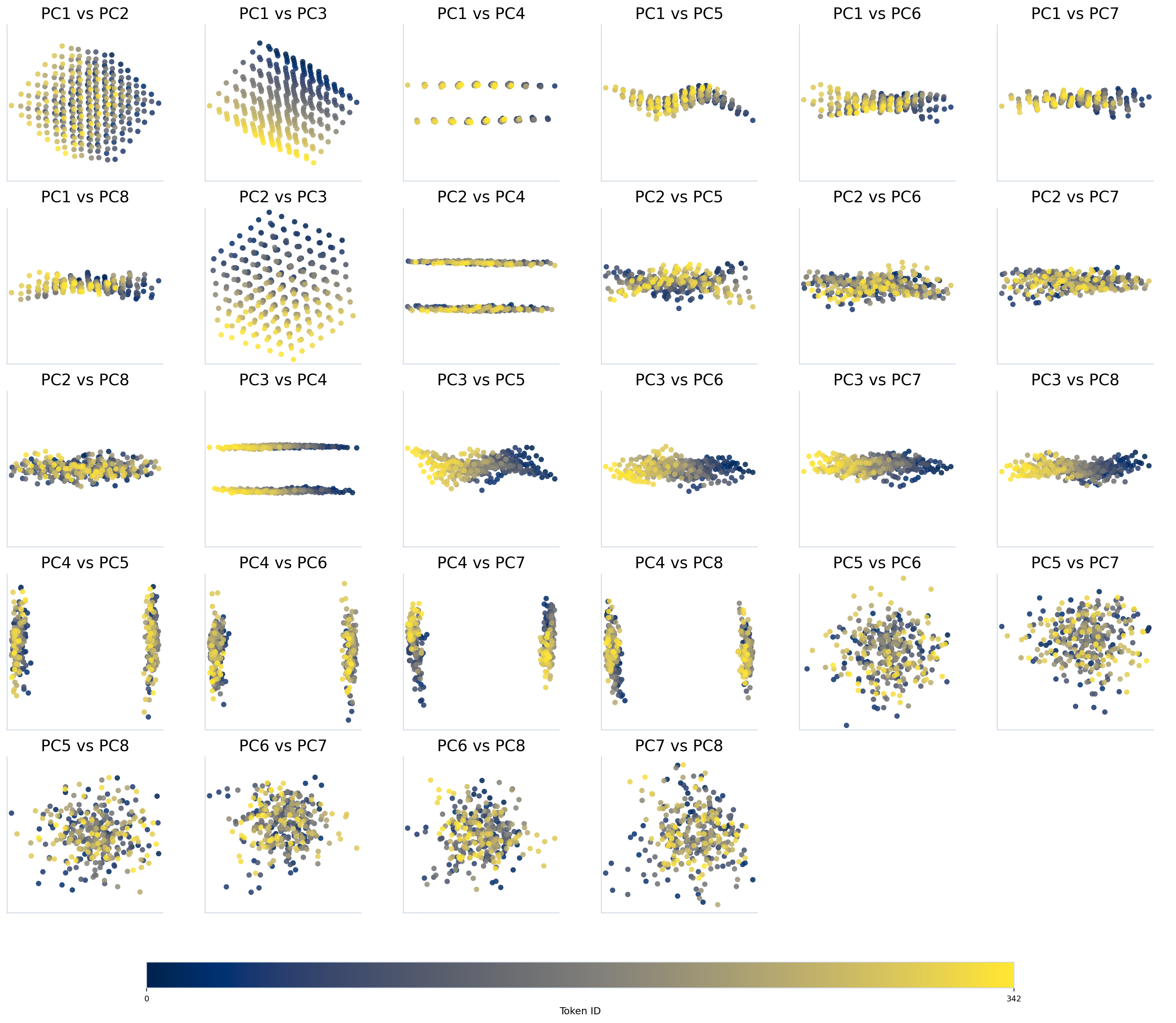}
    \caption{PCA projection of embeddings of model trained on the \textsc{3D Lattice} graph metric completion.}
\end{figure}

\subsection{Comparison}

\subsubsection{2D Comparison}
\begin{figure}[H]
    \centering
    \includegraphics[width=0.6\linewidth]{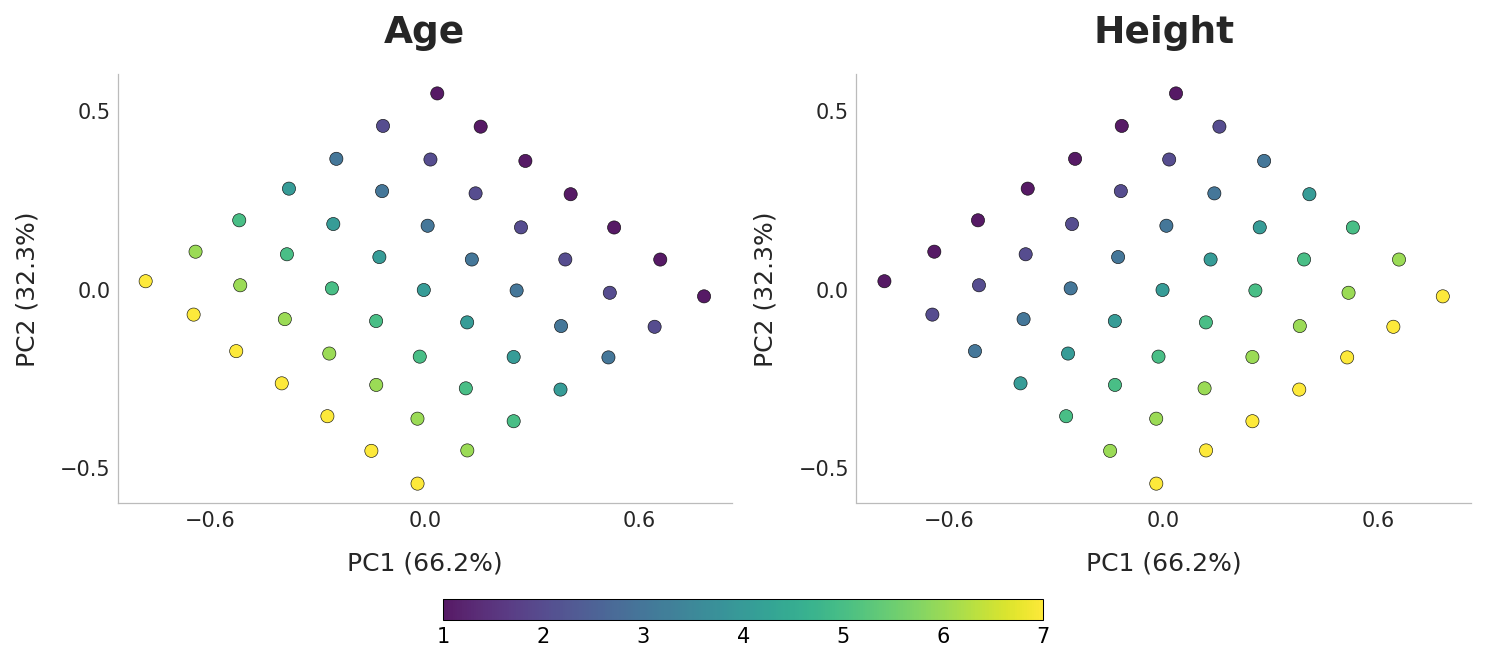}
    \caption{PCA projection of embeddings of model trained on the \textsc{2D Comparison}.}
\end{figure}

\subsubsection{3D Comparison}
\begin{figure}[H]
    \centering
    \includegraphics[width=0.7\linewidth]{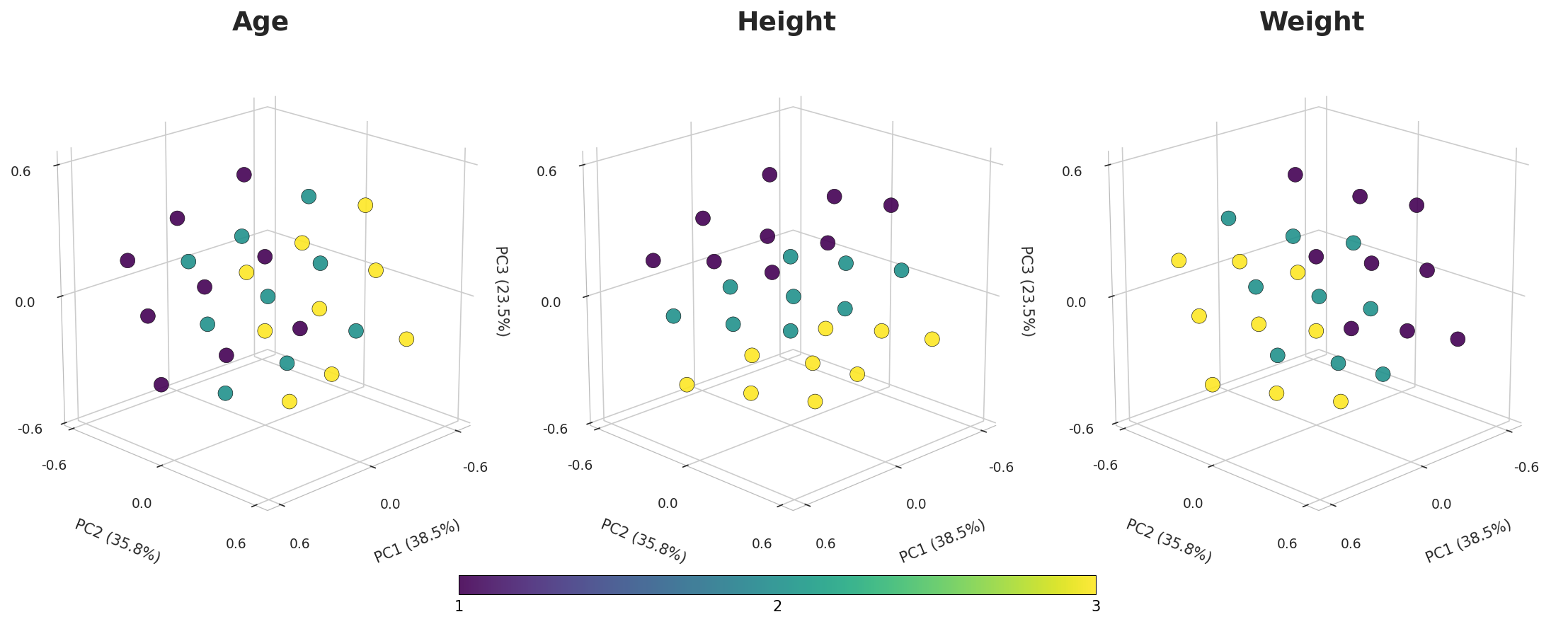}
    \caption{PCA projection of embeddings of model trained on the \textsc{3D Comparison}.}
\end{figure}

\newpage

\section{Accelerating Tasks using Symmetric Regularization}
\label{appendix:faster}
\subsection{Modular Arithmetic}

Note that Consistency refers to joint enforcement of commutativity and associativity.

\subsubsection{Addition: $x+y$}
\begin{figure}[H]
    \centering
    \includegraphics[width=0.82\linewidth]{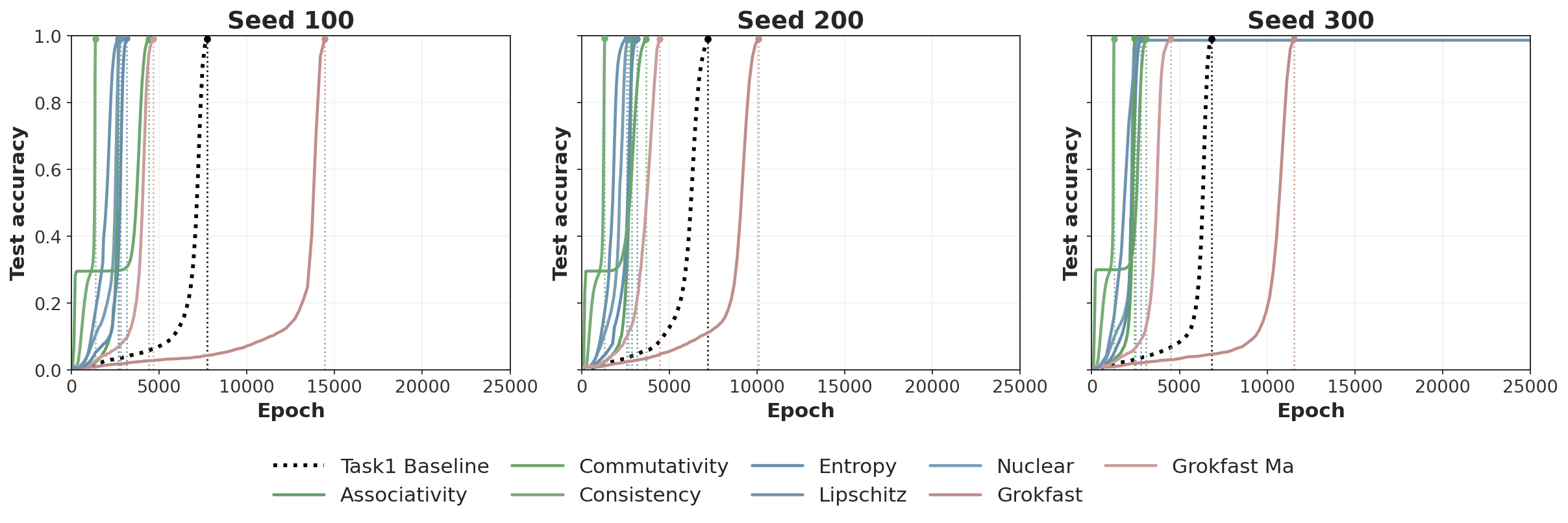}
    \caption{Convergence speed on modular addition when using various regularization.}
\end{figure}

\subsubsection{Subtraction: $x-y$}
\begin{figure}[H]
    \centering
    \includegraphics[width=0.85\linewidth]{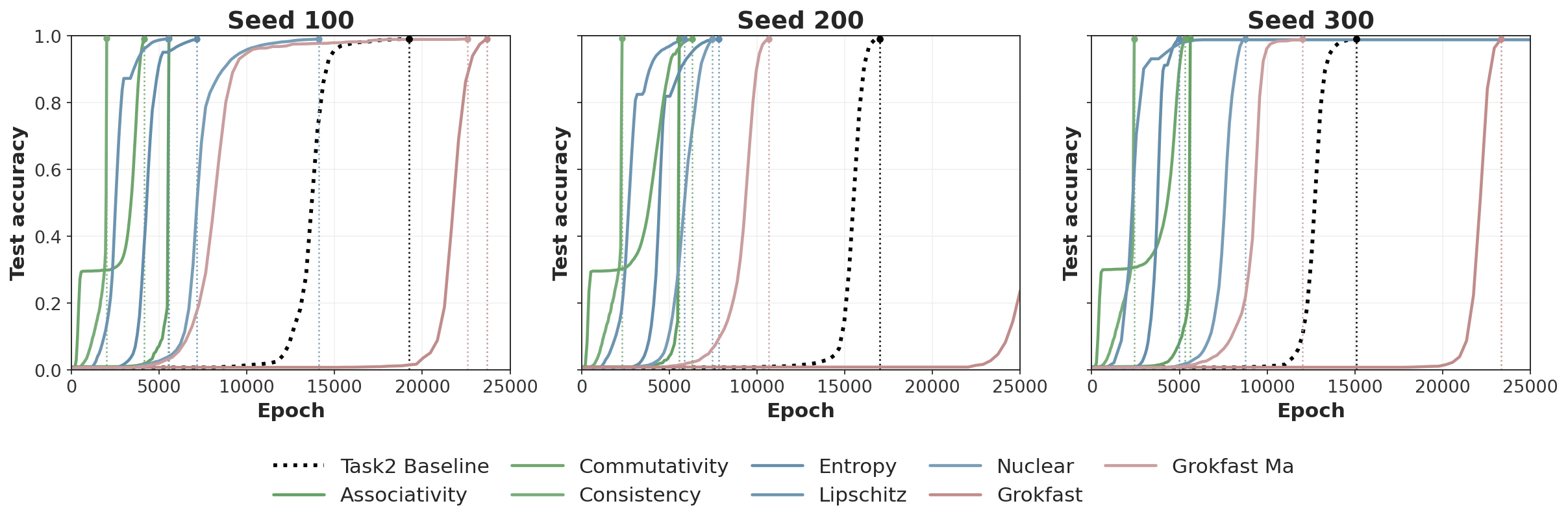}
    \caption{Convergence speed on modular subtraction when using various regularization.}
\end{figure}

\subsubsection{Squared sum: $(x+y)^2$}
\begin{figure}[H]
    \centering
    \includegraphics[width=0.82\linewidth]{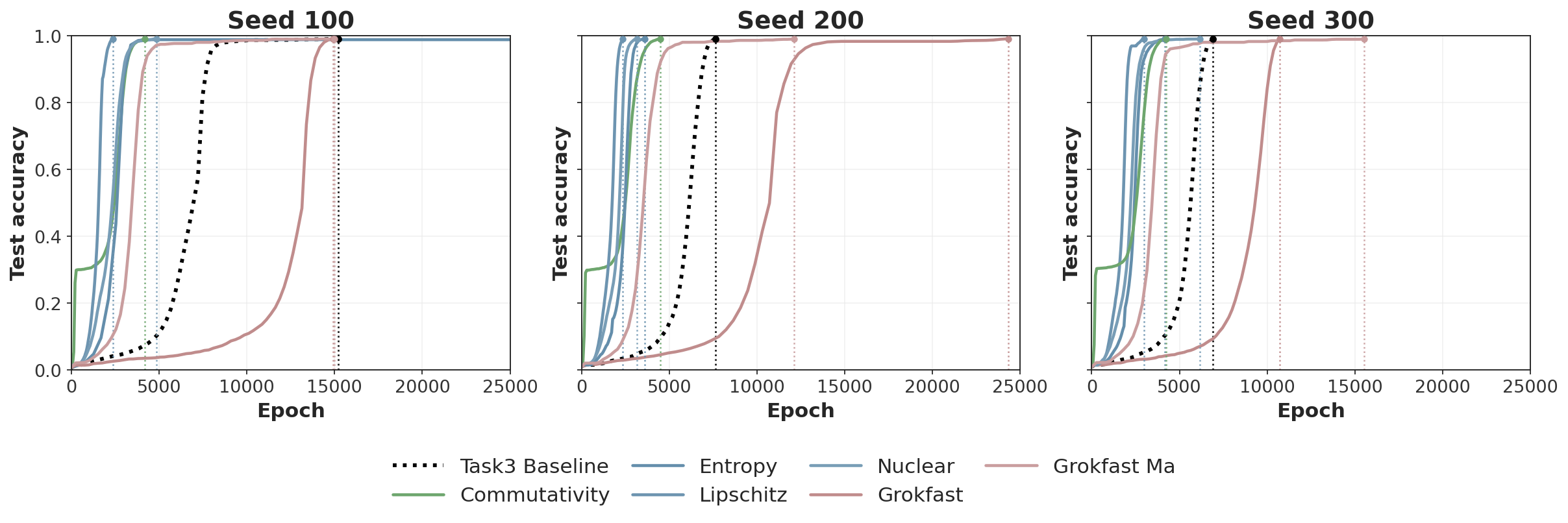}
    \caption{Convergence speed on modular squared sum when using square sum.}

\end{figure}

\newpage

\subsubsection{Cubed sum: $(x+y)^3$}
\begin{figure}[H]
    \centering
    \includegraphics[width=0.82\linewidth]{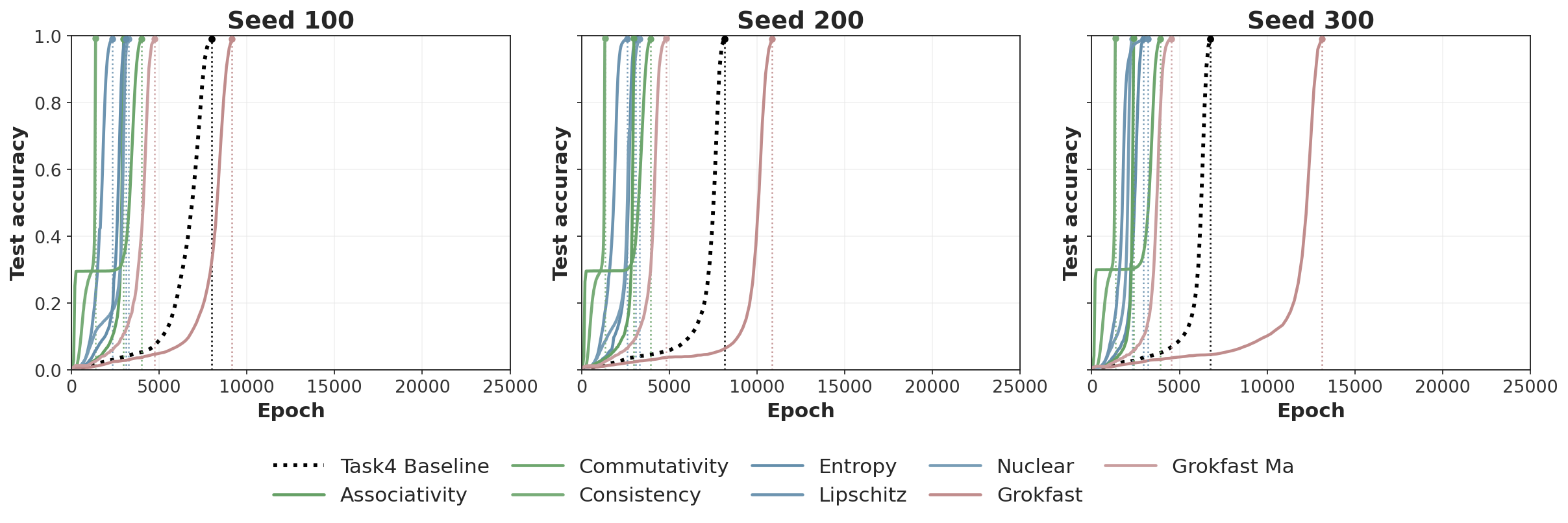}
    \caption{Convergence speed on modular cubed sum when using various regularization.}
\end{figure}


\subsubsection{Multiplication: $x\cdot y$}
\begin{figure}[H]
    \centering
    \includegraphics[width=0.9\linewidth]{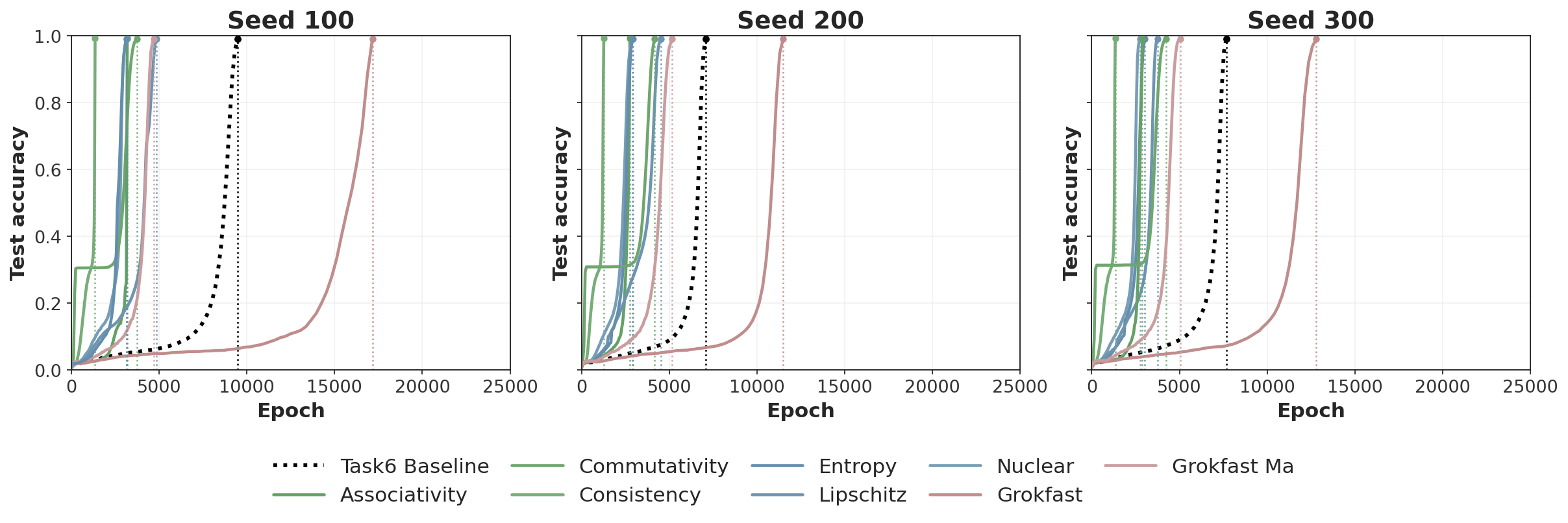}
    \caption{Convergence speed on modular multiplication when using various regularization.}
\end{figure}

\subsubsection{Affine--bilinear: $x + xy + y$}
\begin{figure}[H]
    \centering
    \includegraphics[width=0.85\linewidth]{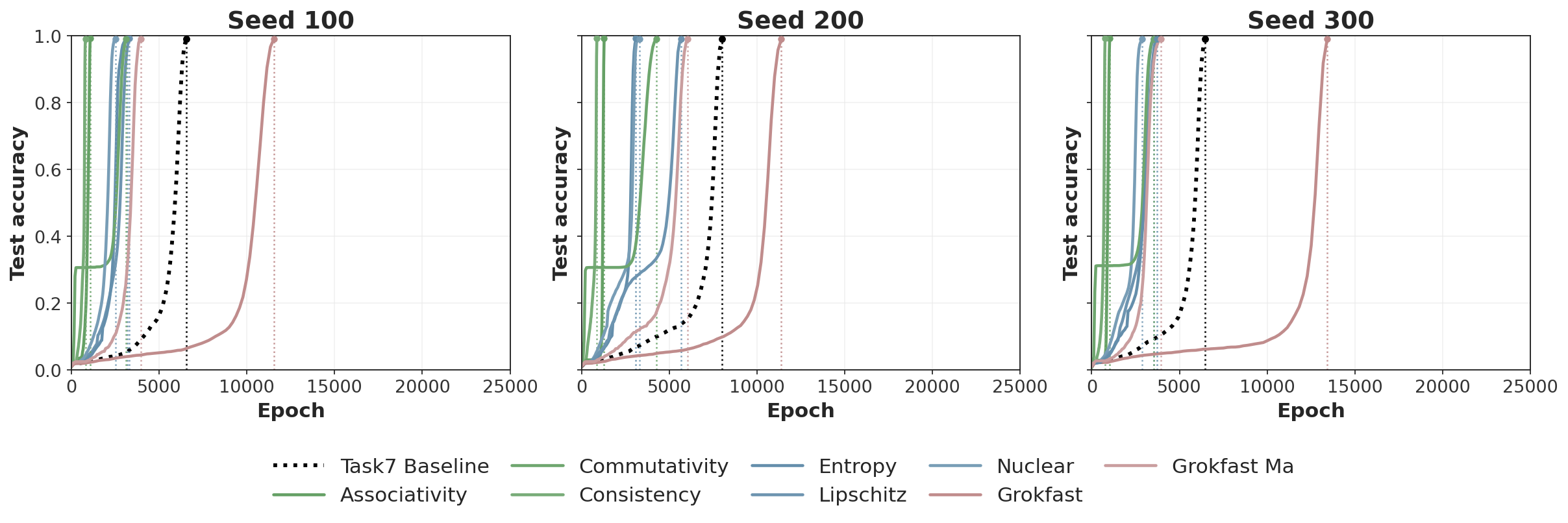}
    \caption{Convergence speed on modular affine--bilinear when using various regularization.}

\newpage

\end{figure}

\subsection{graph metric completion}

\subsubsection{Cycle}
\begin{figure}[H]
    \centering
    \includegraphics[width=0.85\linewidth]{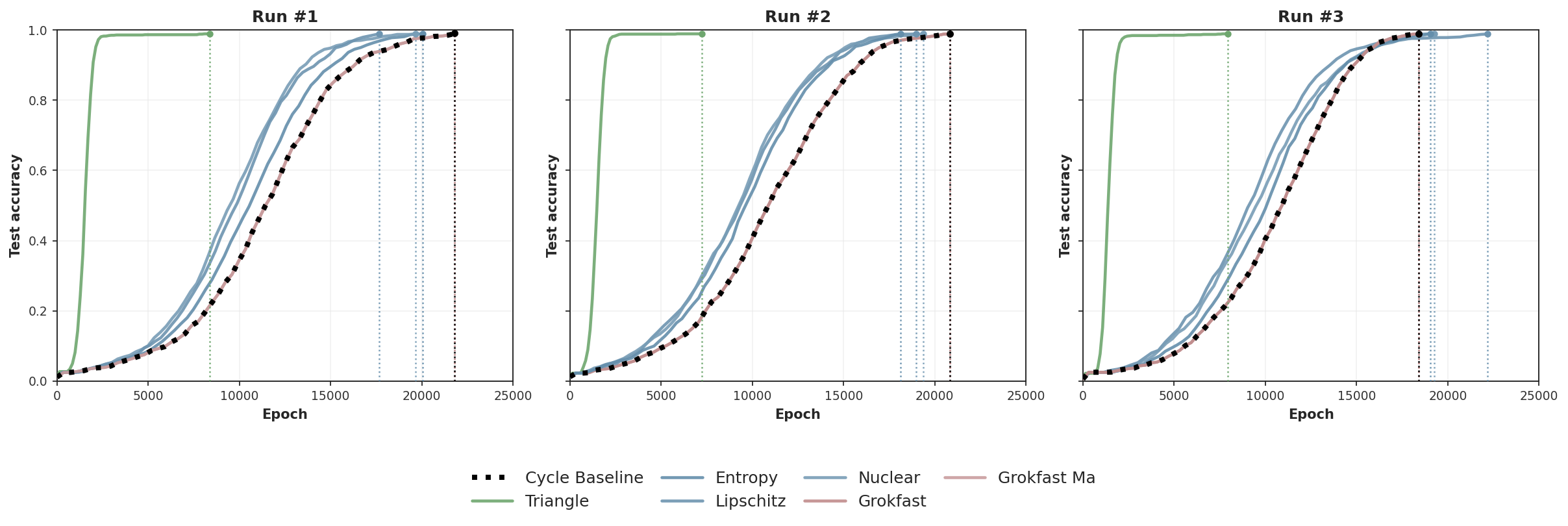}
    \caption{Convergence speed on \textbf{cycle} graph metric completion when using various regularization.}
\end{figure}

\subsubsection{Path}
\begin{figure}[H]
    \centering
    \includegraphics[width=0.85\linewidth]{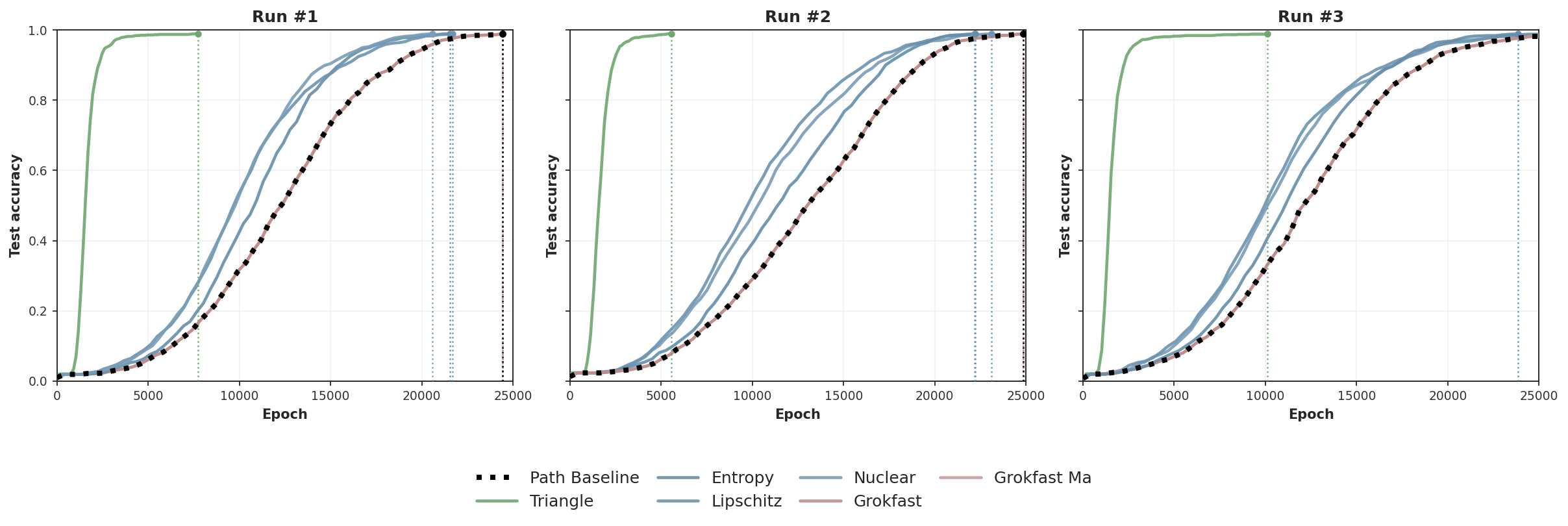}
    \caption{Convergence speed on \textbf{path} graph metric completion when using various regularization.}
\end{figure}

\subsubsection{Cylinder}
\begin{figure}[H]
    \centering
    \includegraphics[width=0.85\linewidth]{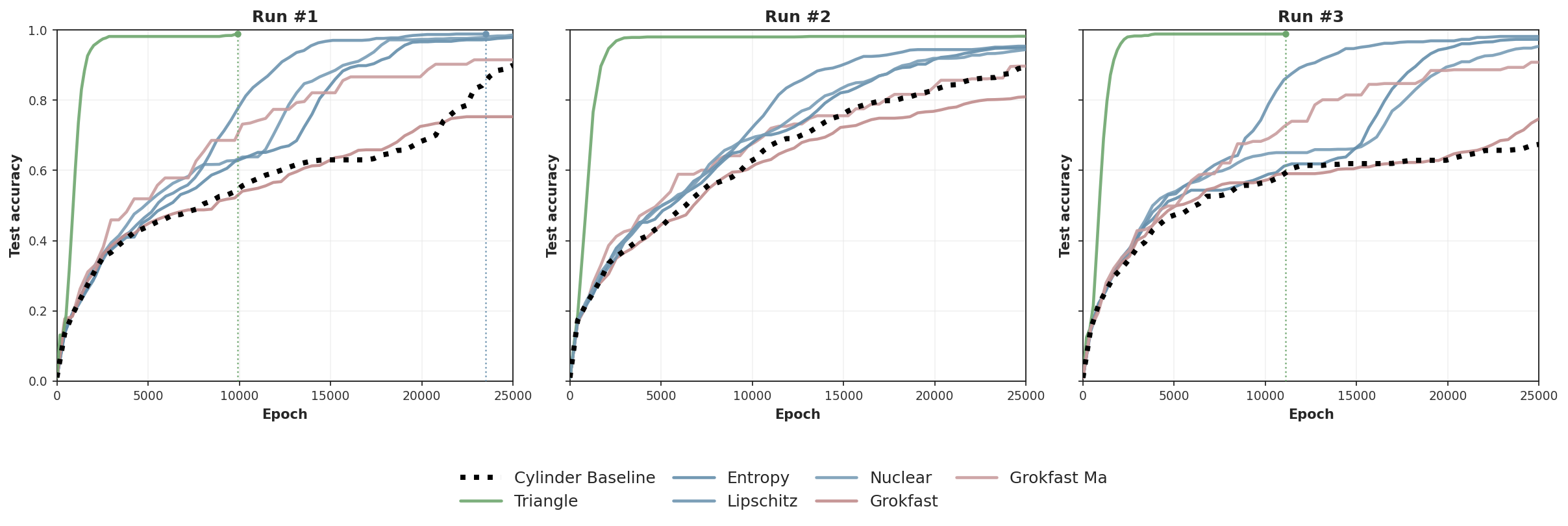}
    \caption{Convergence speed on \textbf{cylinder} graph metric completion when using various regularization.}
\end{figure}

\newpage

\subsubsection{Hypercube}
\begin{figure}[H]
    \centering
    \includegraphics[width=0.85\linewidth]{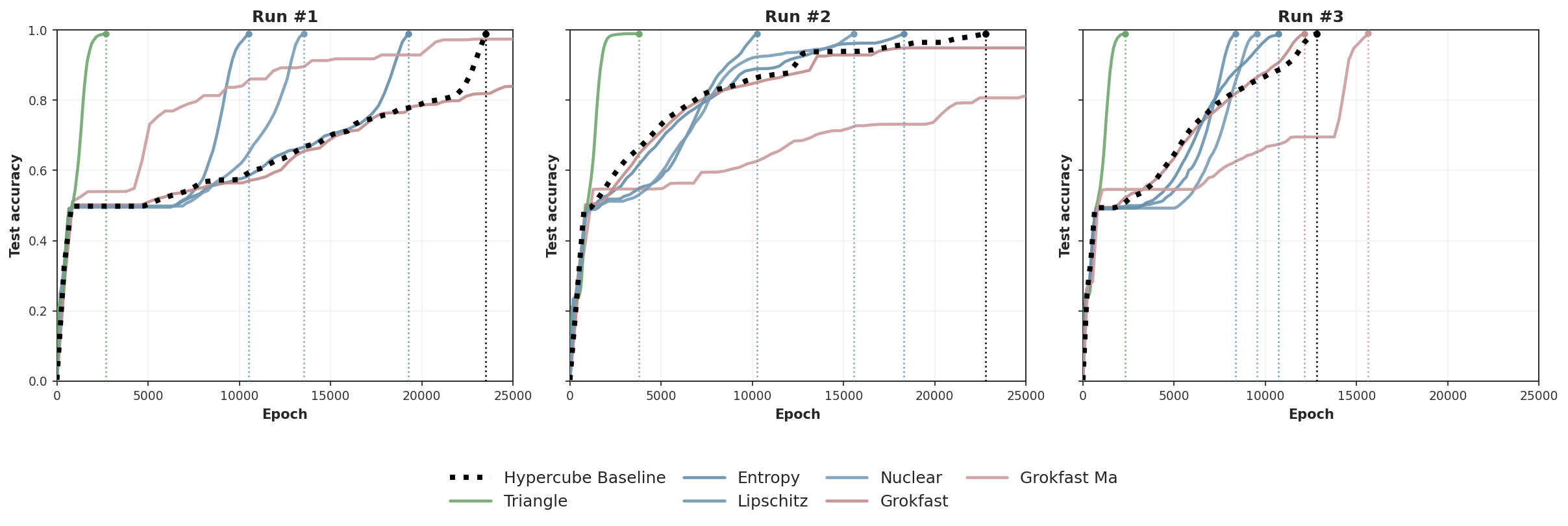}
    \caption{Convergence speed on \textbf{hypercube} graph metric completion when using various regularization.}

\end{figure}

\subsubsection{2D Lattice}
\begin{figure}[H]
    \centering
    \includegraphics[width=0.85\linewidth]{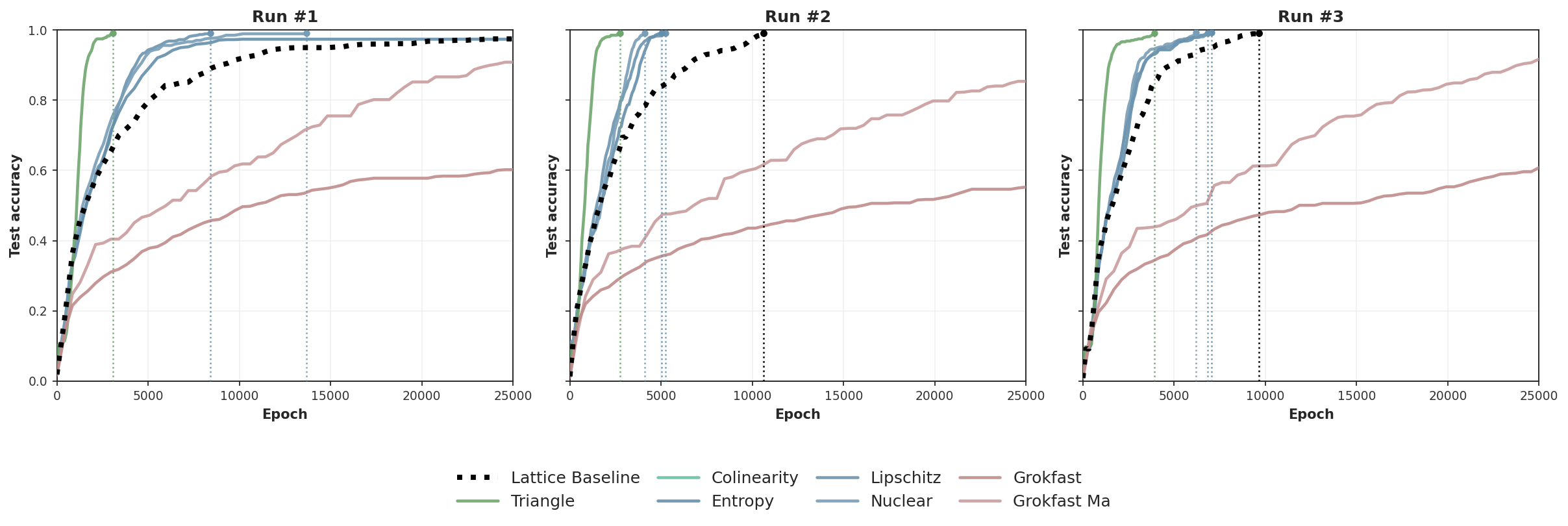}
    \caption{Convergence speed on \textbf{2d lattice} graph metric completion when using various regularization.}
\end{figure}

\subsubsection{3D Lattice}
\begin{figure}[H]
    \centering
    \includegraphics[width=0.85\linewidth]{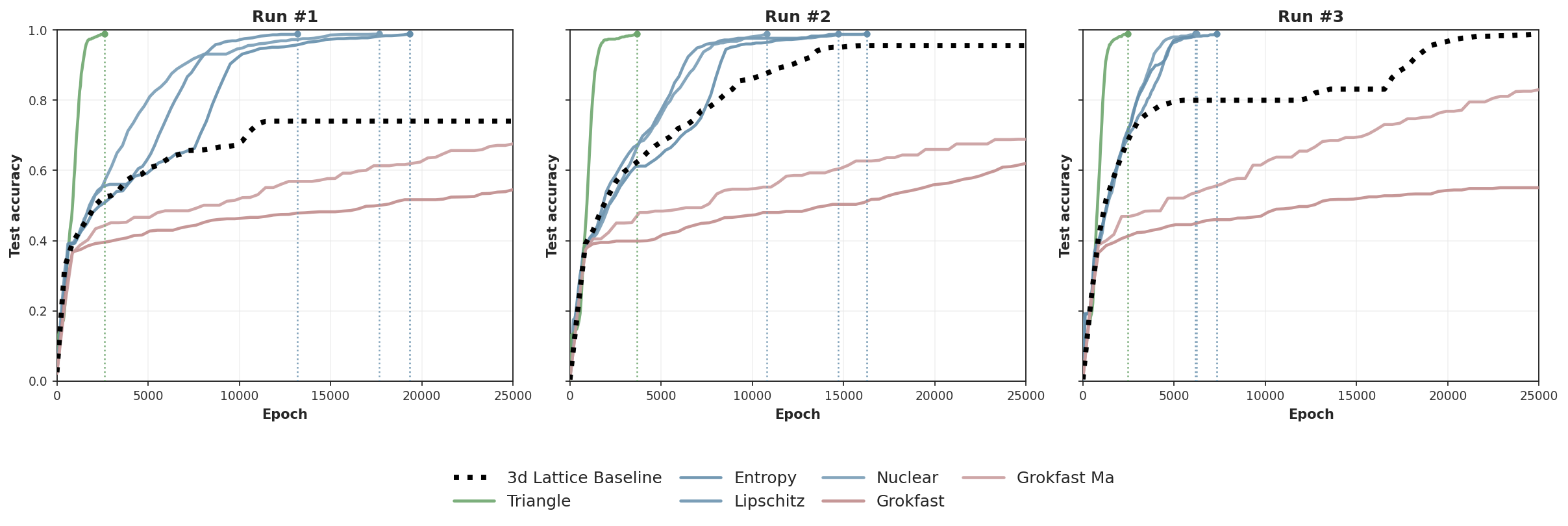}
    \caption{Convergence speed on \textbf{3d lattice} graph metric completion when using various regularization.}
\end{figure}

\newpage

\subsection{Comparison}
\subsubsection{2D Comparison}
\begin{figure}[H]
    \centering
    \includegraphics[width=0.85\linewidth]{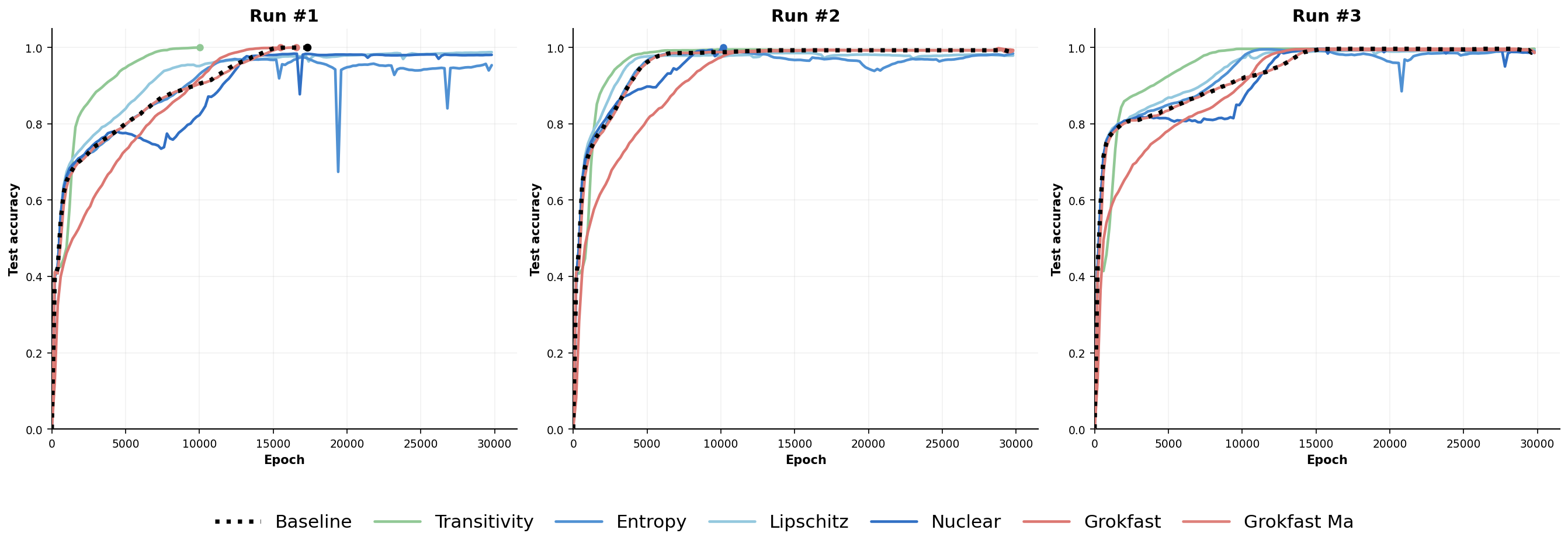}
    \caption{Convergence speed on \textbf{Comparison (2D)} when using various regularization.}
\end{figure}

\subsubsection{3D Comparison}
\begin{figure}[H]
    \centering
    \includegraphics[width=0.85\linewidth]{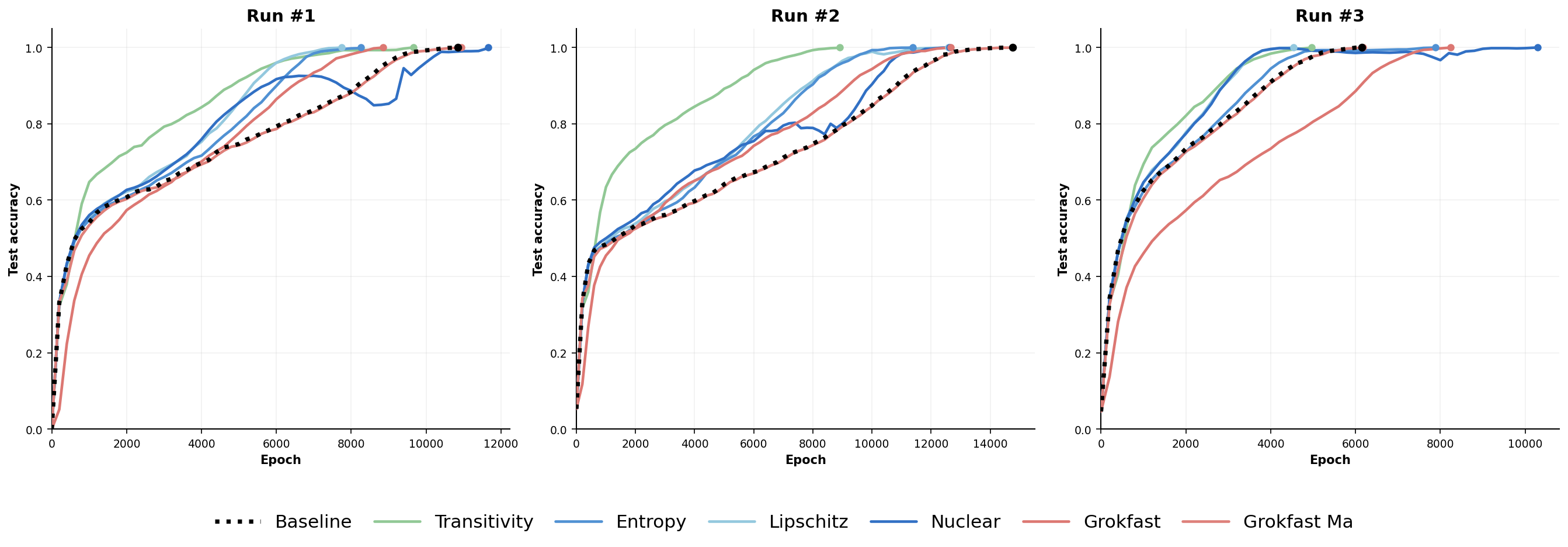}
    \caption{Convergence speed on \textbf{Comparison (3D)} when using various regularization.}
\end{figure}

\newpage
\section{Ablations on Alternative Explanations for Grokking}
\label{Appendix:alternatives}
We investigate other factors previously noted to affect grokking and its emergence timing, specifically \textbf{output rescaling} and \textbf{initialization scaling}. Following \tcb{(Kumar, 2023)}, we scale the network output by a factor $\alpha$ to modulate the transition between \textit{lazy} and \textit{rich} dynamics; And similar to \tcb{(Liu, 2023)}, we vary the initial weight norm of the embedding matrix by adjusting the standard deviation of a Gaussian distribution.  Both works demonstrate that these factors can push models into a regime that favors initial memorization, either through infinitesimal weight updates in the \textit{lazy} regime or by starting far from the generalizing weight norm. 

However, we show that our symmetry-enforcing objectives (commutativity + associativity losses) consistently accelerate generalization across various ranges of these factors. In this light, our work is complementary to previous interpretations: while scaling factors primarily modulate the \textit{laziness} or initial state of feature learning, intrinsic task symmetry provides the necessary \textbf{geometric direction} to move beyond memorization, which ultimately provides a structural explanation for generalization in algorithmic reasoning.

\begin{figure}[ht]
    \centering
    \begin{subfigure}[t]{0.45\linewidth}
        \centering
        \includegraphics[width=\linewidth]{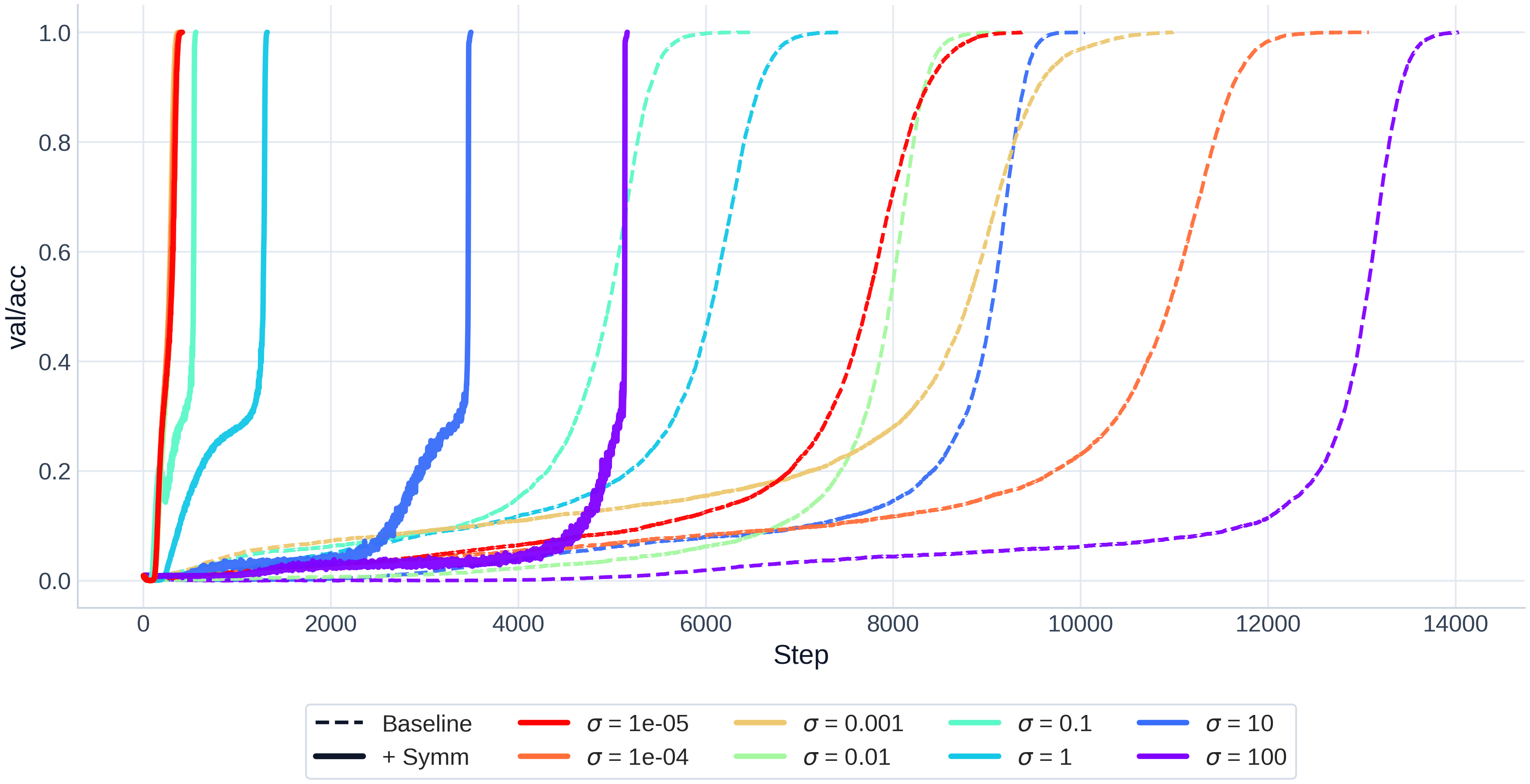}
        \caption{Embedding initialization}
        \label{fig:emb_init}
    \end{subfigure}
    \hspace{0.02\linewidth}    
    \begin{subfigure}[t]{0.45\linewidth}
        \centering
        \includegraphics[width=\linewidth]{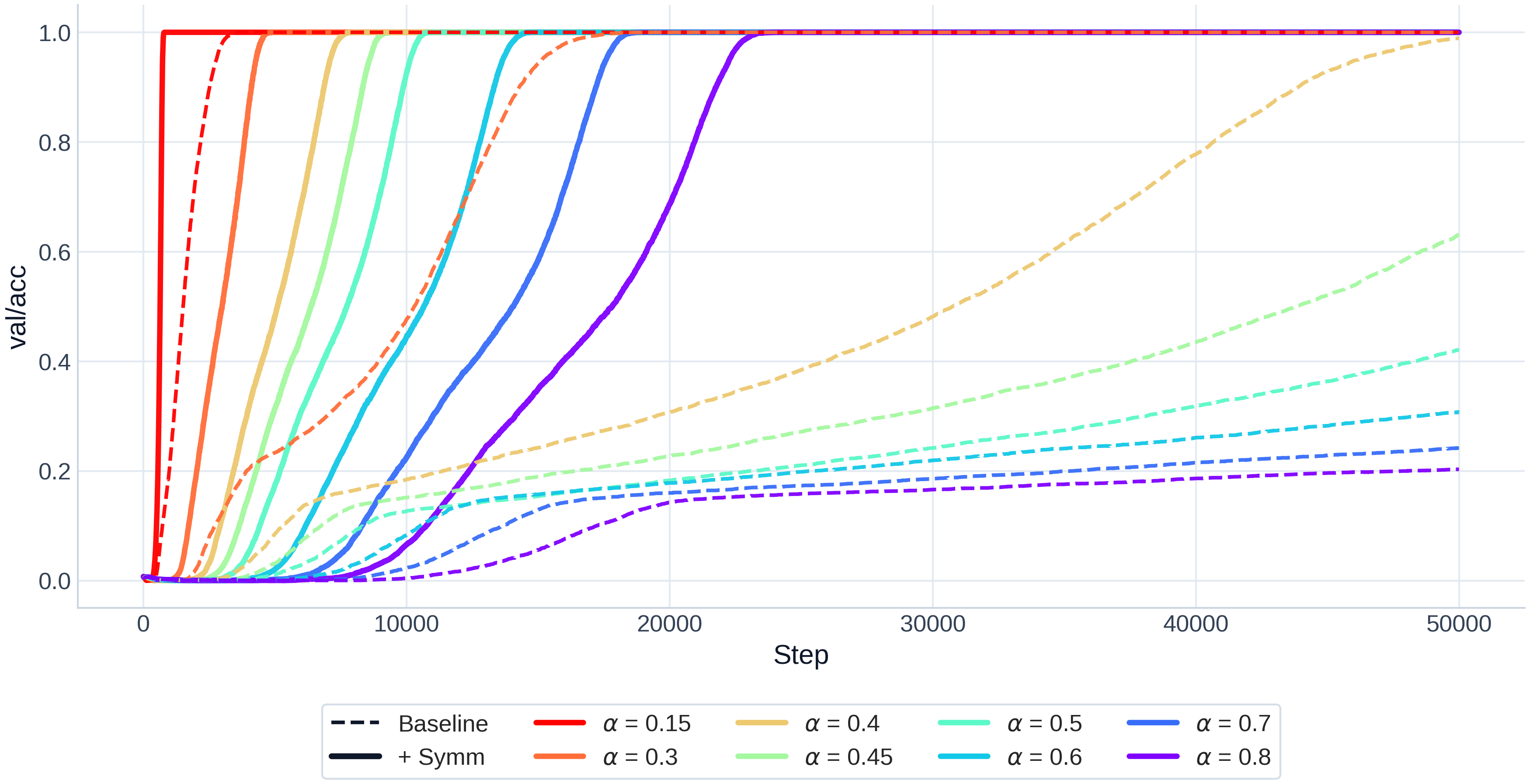}
        \caption{Output scaling}
        \label{fig:output_scale}
    \end{subfigure}
    \caption{Comparison of embedding initialization and output scaling on Modular Addition, 1-Layer Transformer}
    \label{fig:appendix_comparison}
\end{figure}

\newpage

\section{Training Configuration}
\label{app:config}
\textbf{Model Configuration.} We employ a standard single-layer ($L=1$) decoder-only Transformer with $H=4$ attention heads. The architecture uses an embedding dimension of $d_{\text{model}} = 128$, a feed-forward (MLP) dimension of $d_{\text{mlp}} = 512$, and a head dimension of $d_{\text{head}} = 32$. All models utilize ReLU activations and omit Layer Normalization.

\textbf{Training } All models are trained for a maximum of $100,000$ epochs using the AdamW optimizer. 

For each task, sepcifically, we employ following configurations:

\noindent \textbf{Modular Arithmetic.} Following standard protocols, we use modulus $p=113$, a training split of $r_{\text{train}}=0.3$, and a weight decay of $w = 1$.

\noindent \textbf{Graph Metric Completion.} We increase the training split to $r_{\text{train}}=0.8$ and weight decay to $w = 3$. We evaluate performance across six distinct graph structures:

\vspace{0.5em}
\noindent   
\renewcommand{\arraystretch}{1.2}
\begin{tabular}{lll}
(i) 2D Grid Lattices ($7 \times 7$) & (ii) 3D Grid Lattices ($7 \times 7 \times 7$) & (iii) Simple Cycles ($N=64$) \\
(iv) Path Graphs ($N=64$) & (v) Cylinders ($7 \times 7$ periodic) & (vi) 7D Hypercubes ($2^7$ nodes) \\
\end{tabular}
\vspace{0.5em}

\noindent \textbf{Comparison.} we use a reduced training split of $r_{\text{train}} = 0.2$ with weight decay $w = 1$. The dataset consists of entities arranged in either 2D ($7 \times 7$) or 3D ($3 \times 3 \times 3$) structures. Each entity is assigned a unique combination of attributes (height, weight, age), sampled such that values lie within a bounded range $[v_{\min}, v_{\max}]$.

\textbf{Acceleration.} For the acceleration experiments, we set the weighting coefficient to $\lambda = 1.0$. The total training objective is thus defined as the unweighted sum of the primary and auxiliary losses:

$$\mathcal{L}_{\text{total}} = \mathcal{L}_{\text{CE}} + \mathcal{L}_{\text{additional}}$$

%% file: appendix_theory.tex
\begin{table}[h]
\centering
\caption{Predicted embedding geometry and associated algorithm of various modular arithmetic tasks. Note that $q \in \mathbb{N}, \; \gcd(p,q)=1 $ and  $g$ is the generator of multiplicative group $\mathbb{Z}_p^\times$, $\ind_g(\cdot)$ denote the discrete logarithm with base $g$, and $\log(\cdot)$ denote the complex logarithm.  }

\small
\renewcommand{\arraystretch}{1.6} 
\begin{adjustbox}{max width=0.95\textwidth}
\begin{tabular}{@{} cccc @{}}
\toprule
\textbf{Task} & \textbf{Embedding function and associated algorithm} & \textbf{Embedding geometry} & \textbf{Notes}  \\
\midrule
$x+y=z\pmod p$
 & \ka{$\phi(x)=e^{ 2\pi i q x/p} $\\
      $\psi(x)=p \log(x) / (2 \pi i q)$\\
      $z = \psi(\phi(x)\phi(y))$}
 & \ka{ Circle (closed helix) \\
 $\angle(i,i+1) = \frac{2 \pi q}{p}$} & \ka{  $\phi(x)\phi(y)=\phi(z)$ }  \\
\grayrule
$x-y=z\pmod p$
 & \ka{$\phi(x)=e^{ 2\pi i q   x / p}$\\
      $\psi(x)=p \log(x) / (2 \pi i q)$\\
      $z= \phi(x)\phi(y)^{-1} $}
  & \ka{ Circle (closed helix) \\
 $\angle(i,i+1) = \frac{2 \pi q}{p}$}   & $\phi(x)\phi(y)^{-1}=\phi(z)$ \\
\grayrule
$(x+y)^{\alpha}=z$ $\pmod p$ $ \alpha \in \mathbb{N}$
 & \ka{$\phi(x)=e^{ 2\pi i q x /p}$\\
      $\psi(x)= h_\alpha (p \log (x)/(2\pi i q) )$\\
      $z= \phi(x)\phi(y)$}
  & \ka{ Circle (closed helix) \\
 $\angle(i,i+1) = \frac{2 \pi q}{p}$}  & \ka{ $h_\alpha: \mathbb{Z}_p^\times \to \mathbb{Z}_p^\times, $ \\
 $ h_\alpha(x) : x \mapsto x^\alpha \pmod{p}$ \\ $\phi(x)\phi(y)=\phi(z)$ } \\
\grayrule
$x \times y=z\pmod p$
 & \ka{
      $\phi(x)=e^{ 2 \pi i q \cdot \ind_g(x) /p}$\\
      $\psi(x)=g^{p \cdot \log(x) / (2 \pi i q )}$\\
      $z = \psi( \phi(x)\phi(y))$}
 & \ka{ Circle (closed helix) \\
 $\angle(g^i,g^{i+1}) = \frac{2 \pi q}{p}$}  & \ka{$\phi(x)\phi(y)=\phi(z)$} \\
\grayrule 
$x+xy+y=z\pmod p$
 & \ka{$\phi(x)=e^{ 2 \pi i q \cdot \ind_g(x+1) /p}$\\
      $\psi(x)=g^{p \cdot \log(x) / (2 \pi i q )}$\\
      $z = \psi( \phi(x)\phi(y)) - 1$}
 & \ka{ Circle (closed helix) \\
 $\angle(g^i - 1,g^{i+1} - 1) = \frac{2 \pi q}{p}$} & \ka{$\phi(x)\phi(y)=\phi(z)$} \\
\bottomrule
\end{tabular}
\end{adjustbox}
\label{tab:predicted_embedding}
\end{table}

\section{Symmetries in modular arithmetic}
\label{app:sym_modular}
We present the predicted embedding geometry of various modular arithmetic tasks. For modular addition, the exact algorithm using trigonometric identities has been reported in \citep{nanda2023progress,zhong2023clockpizzastoriesmechanistic,park2024accelerationgrokkinglearningarithmetic}.
If we consider the embedding function $\phi : \mathbb{Z}_p \to \mathbb{C}$ and outer function $\psi : \mathbb{C} \to \mathbb{Z}_p$, there is a simple identity to formulate modular addition.
\begin{align*}
\phi(x) &= e^{2\pi i qx/p}, \\[4pt]
\psi(x) &= \tfrac{p}{2\pi i q}\log(x),\qquad q\in\mathbb{Z}_p,\ \gcd(p,q)=1, \\[8pt]
z &= \psi(\phi(x)\phi(y)), \\[4pt]
z &= \psi\!\left(e^{2\pi i q(x+y)/p}\right) = x+y \pmod{p}.
\end{align*}

where $\log(\cdot)$ denotes complex logarithm. The embedding $\phi(\cdot)$ says that the embedding geometry $\{\phi(x) : x \in \mathbb{Z}_p \}$ have shape of the circle with the consistent angle between adjacent tokens being $ \frac{2 \pi q }{p}$. \\
For modular multiplication, $\mathbb{Z}_p ^\times=\mathbb{Z}_p \setminus \{0\}$ become cyclic group with some generator $g \in \mathbb{Z}_p^\times$ (\textit{i.e.} $\mathbb{Z}_p ^\times=\{ g^i : i=0,1,...,p-1\}$ ). Hence we have embedding function
$\phi : \mathbb{Z}_p^\times \to \mathbb{C}$ and outer function $\psi : \mathbb{C} \to \mathbb{Z}_p^*$, there is a simple identity to formulate modular addition.
\begin{align*}
\phi(x) &= e^{2\pi i q\,\ind_g(x)/p}, \\[4pt]
\psi(x) &= \tfrac{p}{2\pi i q}\log(x),\qquad q\in\mathbb{Z}_p,\ \gcd(p,q)=1, \\[8pt]
z &= \psi(\phi(x)\phi(y)), \\[4pt]
z &= \psi\!\left(e^{2\pi i q(\ind_g(x)+\ind_g(y))/p}\right) = x\times y \pmod{p}.
\end{align*}
where $\ind_g(\cdot)$ denotes the discrete logarithm with base $g$ in $\mathbb{Z}_p$. For $T_6: z= x+xy+y \pmod{p}$, we can rewrite it as $z+1=(x+1)(y+1) \pmod{p}$. Hence the embedding is similar to multiplication with translation by $1$.

Similarly, we present the predicted embedding geometry and algorithm of other various modular arithmetic tasks which are presented in Table \ref{tab:predicted_embedding} in Appendix \ref{app:sym_modular}.

We present the embedding geometry and proxy symmetries of various modular arithmetic in Tables \ref{tab:predicted_embedding} and \ref{tab:sym_modular}.

\section{Theoretical observation}
\label{app:theory}

\subsection{Theoretical observation on modular arithmetic}

First, we state the Pontryagin duality theorem in Lie groups. 

\begin{definition}[Pontryagin dual]
    For a locally compact abelian group $G$, the Pontryagin dual is the group $\hat{G}$ of continuous group homomorphisms from $G$ to the circle group $\mathbb{T}$, \textit{i.e.}
    \[ \hat{G} := \text{Hom}(G,\mathbb{T}).\]
\end{definition}

\begin{lemma}
    The Pontryagin dual of $\mathbb{T}$ is $\mathbb{Z}$ via the map
    \begin{align*}
        \Phi :  \mathbb{Z} &\cong \hat{\mathbb{T}} \\ 
        n &\mapsto \left( z \mapsto  e^{2 \pi i n z} \right).
    \end{align*}
\end{lemma}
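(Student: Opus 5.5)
We identify $\mathbb{T}$ with $\mathbb{R}/\mathbb{Z}$ when it is the domain, and with the unit circle $S^1\subset\mathbb{C}$ when it is the target, as in the statement. The plan has four steps: check that $\Phi$ is a well-defined group homomorphism, prove it is injective, and prove it is surjective, which is the real content. Well-definedness is immediate. For $n\in\mathbb{Z}$ and $z\in\mathbb{R}$, the value $e^{2\pi i n (z+1)} = e^{2\pi i n z}$, so $\chi_n(z):=e^{2\pi i n z}$ descends to $\mathbb{R}/\mathbb{Z}$. It is continuous and multiplicative in $z$, so $\chi_n\in\hat{\mathbb{T}}$. Moreover $\chi_{n+m}=\chi_n\chi_m$, so $\Phi$ is a homomorphism into the group $\hat{\mathbb{T}}$ under pointwise multiplication.

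For injectivity, suppose $\chi_n\equiv 1$. Then $e^{2\pi i n z}=1$ for all $z$. Taking $z$ small and nonzero forces $nz\in\mathbb{Z}$, which is possible only if $n=0$.

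For surjectivity, fix a continuous homomorphism $\chi:\mathbb{R}/\mathbb{Z}\to S^1$. Compose with the quotient map $\pi:\mathbb{R}\to\mathbb{R}/\mathbb{Z}$ to get a continuous homomorphism $\tilde\chi=\chi\circ\pi:\mathbb{R}\to S^1$ with $\tilde\chi(1)=1$. By the path-lifting property of the covering $\mathbb{R}\to S^1$, $t\mapsto e^{2\pi i t}$, there is a unique continuous $f:\mathbb{R}\to\mathbb{R}$ with $f(0)=0$ and $\tilde\chi(t)=e^{2\pi i f(t)}$.

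Next we show $f$ is additive. Fix $s$ and consider $g(t)=f(s+t)-f(s)-f(t)$. It is continuous in $t$, it takes integer values because $\tilde\chi$ is a homomorphism, and $g(0)=0$. By connectedness of $\mathbb{R}$, $g\equiv 0$. A continuous additive function $\mathbb{R}\to\mathbb{R}$ is linear: additivity gives $f(q)=qf(1)$ for all rationals $q$, and density of $\mathbb{Q}$ together with continuity extends this to all reals. Hence $f(t)=ct$ for some $c\in\mathbb{R}$. Finally, $\tilde\chi(1)=1$ gives $e^{2\pi i c}=1$, so $c=n\in\mathbb{Z}$ and $\chi=\chi_n=\Phi(n)$.

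The main obstacle is the surjectivity step, specifically producing the continuous lift $f$ and showing it is additive. If we want to avoid covering-space theory, we can argue locally instead. Choose $\varepsilon>0$ such that $\tilde\chi([-\varepsilon,\varepsilon])$ lies in the open right half-plane, where a continuous branch of $\log$ exists. Define $f(t)=\frac{1}{2\pi i}\log\tilde\chi(t)$ on $[-\varepsilon,\varepsilon]$; it is additive whenever $s$, $t$ and $s+t$ all lie in this interval. A local version of the rational-density argument then gives $f(t)=ct$ near $0$. Extending by $\tilde\chi(t)=\tilde\chi(t/N)^N$ for large $N$ yields $\tilde\chi(t)=e^{2\pi i c t}$ on all of $\mathbb{R}$, and $c\in\mathbb{Z}$ follows as before.
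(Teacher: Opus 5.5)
The paper does not prove this lemma. It is stated as a standard ingredient of Pontryagin duality, just before the duality theorem, and implicitly deferred to the cited references (Pontryagin, Morris, Hewitt--Ross).

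Your argument is correct and supplies the standard self-contained proof:
\begin{itemize}
\item You read the domain $\mathbb{T}$ as $\mathbb{R}/\mathbb{Z}$. This is what makes the formula $z\mapsto e^{2\pi i n z}$ a well-defined character, and it is the convention the paper's later proof uses.
\item You lift a character to a continuous homomorphism $\tilde\chi:\mathbb{R}\to S^1$ and write it as $e^{2\pi i f}$.
\item You show $f$ is additive by the connectedness argument, so $f$ is linear by the continuous Cauchy equation.
\item Integrality of the slope then follows from $\tilde\chi(1)=1$.
\end{itemize}
The local-logarithm variant is also sound. On $[-\varepsilon,\varepsilon]$ all values lie in the right half-plane, so the principal arguments add without wrap-around. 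The identity $\tilde\chi(t)=\tilde\chi(t/N)^N$ then propagates the formula to all of $\mathbb{R}$. Compared with the paper's bare citation, your route costs a paragraph but makes the identification checkable using only covering-space lifting (or a branch of $\log$) and the Cauchy equation.

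Two small remarks. First, your plan announces four steps but lists three. If the missing one was meant to be topological, note that the paper defines $\hat G$ purely as a group, so a group isomorphism suffices here. If you want $\Phi$ to be an isomorphism of topological groups, add one line. Since $\mathbb{T}$ is compact, the compact-open topology on $\hat{\mathbb{T}}$ is uniform convergence, and $\sup_z|\chi_n(z)-\chi_m(z)|=2$ for $n\neq m$. Hence $\hat{\mathbb{T}}$ is discrete and $\Phi$ is a homeomorphism.

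Second, path lifting as usually stated is for maps from $[0,1]$. For a map from $\mathbb{R}$, either invoke the lifting criterion for the simply connected, locally path-connected space $\mathbb{R}$, or lift on each $[-N,N]$ and glue the lifts using uniqueness.
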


\begin{lemma}
    The Pontryagin dual of $\mathbb{Z}$ is $\mathbb{T}$ via the map
    \begin{align*}
        \Phi :  \mathbb{T} &\cong \hat{\mathbb{Z}} \\ 
        e^{i \theta} &\mapsto \left( n \mapsto  e^{i n \theta} \right).
    \end{align*}
\end{lemma}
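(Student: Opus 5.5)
The plan is to verify directly that the given map, call it $\Phi:\mathbb{T}\to\hat{\mathbb{Z}}$, $\Phi(e^{i\theta})=\chi_\theta$ with $\chi_\theta(n)=e^{in\theta}$, is an isomorphism of topological groups. Throughout, $\mathbb{T}$ is the unit circle in $\mathbb{C}$ under multiplication. $\hat{\mathbb{Z}}=\mathrm{Hom}(\mathbb{Z},\mathbb{T})$ carries pointwise multiplication and the compact-open topology.

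\emph{Algebraic part.} First, $\Phi$ is well defined. The value $e^{in\theta}=(e^{i\theta})^n$ depends only on the point $e^{i\theta}$ and not on the choice of $\theta$. The map $n\mapsto (e^{i\theta})^n$ is a group homomorphism $\mathbb{Z}\to\mathbb{T}$ by the exponent law. It is automatically continuous because $\mathbb{Z}$ is discrete.

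Second, $\Phi$ is a homomorphism, since $\chi_{\theta+\theta'}(n)=\chi_\theta(n)\chi_{\theta'}(n)$.

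Third, $\Phi$ is injective. If $\chi_\theta\equiv 1$, then evaluating at $n=1$ gives $e^{i\theta}=1$.

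Fourth, $\Phi$ is surjective, which is the key structural fact: $\mathbb{Z}$ is free on the generator $1$. Any homomorphism $\chi:\mathbb{Z}\to\mathbb{T}$ satisfies $\chi(n)=\chi(1)^n$, first for $n\ge 0$ by induction and then for $n<0$ via $\chi(-n)=\chi(n)^{-1}$. Writing $\chi(1)=e^{i\theta}$ therefore gives $\chi=\chi_\theta$.

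\emph{Topological part.} I expect this to be the main point needing care. The compact subsets of the discrete space $\mathbb{Z}$ are exactly the finite sets. Hence the compact-open topology on $\hat{\mathbb{Z}}$ coincides with the topology of pointwise convergence, that is, the subspace topology from the product $\mathbb{T}^{\mathbb{Z}}$. In this topology $\Phi$ is continuous, since each coordinate $e^{i\theta}\mapsto e^{in\theta}$ is continuous. Now $\Phi$ is a continuous bijection from the compact space $\mathbb{T}$ onto $\hat{\mathbb{Z}}$, which is Hausdorff as a subspace of a product of Hausdorff spaces. It is therefore a closed map and hence a homeomorphism.

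Combining the algebraic and topological parts, $\Phi$ is an isomorphism of topological groups $\mathbb{T}\cong\hat{\mathbb{Z}}$.
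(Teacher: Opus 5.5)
Your proof is correct and complete. The paper gives no proof of this lemma. It states it, together with its companion $\hat{\mathbb{T}}\cong\mathbb{Z}$, as a standard ingredient of Pontryagin duality, supported only by citations to the classical references. It then uses the lemma implicitly in the proof of Proposition~\ref{prop:abelian} to write $\Phi(x)=(n\mapsto e^{2\pi i n x})$. So your argument is less a different route than a self-contained verification of something the paper takes for granted.

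The algebraic half rests on the one real idea: $\mathbb{Z}$ is free on $1$, so a character is determined by $\chi(1)\in\mathbb{T}$. The topological half is never addressed in the paper, and you handle it cleanly. Compact subsets of $\mathbb{Z}$ are finite, so the compact-open topology is the pointwise one. A continuous bijection from a compact space to a Hausdorff space is then a homeomorphism. Your version therefore proves that $\Phi$ is an isomorphism of topological groups, which is the sense in which duality statements are meant.

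One point worth making explicit is that the paper uses $\mathbb{T}$ inconsistently. In the preceding lemma and in the proof of Proposition~\ref{prop:abelian}, $\mathbb{T}$ is $\mathbb{R}/\mathbb{Z}$. In this statement it is the unit circle in $\mathbb{C}$. Your convention matches the statement as written. Composing with the topological isomorphism $\mathbb{R}/\mathbb{Z}\to\mathbb{T}$, $x\mapsto e^{2\pi i x}$, transports your result to the additive model in which the paper later uses it.
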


\begin{theorem}[Pontryagin duality \citep{pontrjagin1934theory,morris1977pontryagin,hewitt2013abstract}]
    Let $G$ be a locally compact abelian topological group. Then there is a canonical isomorphism such that 
    \[ G \cong \hat{\hat{G}}, \]
where $\hat{\hat{G}}$ is Pontryagin double dual of $G$. 
\label{thm:pontryagin}
\end{theorem}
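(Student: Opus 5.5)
The plan is to prove that the canonical evaluation map is an isomorphism of topological groups. Define
\[
\alpha_G : G \to \hat{\hat{G}}, \qquad \alpha_G(x)(\chi) := \chi(x) \quad (\chi \in \hat{G}),
\]
where $\hat{G}$ carries the compact-open topology. The argument splits into four claims: (i) $\alpha_G$ is a well-defined continuous homomorphism; (ii) it is injective; (iii) it is surjective; (iv) it is open onto its image.

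For (i), $\alpha_G(x)$ is a homomorphism $\hat{G} \to \mathbb{T}$ because the group law in $\hat{G}$ is pointwise multiplication. Continuity of $\alpha_G(x)$, and of $\alpha_G$ itself, follows from the definition of the compact-open topology together with local compactness of $G$. One needs that on compact sets $K \subset G$ and compact sets of characters, $(x,\chi) \mapsto \chi(x)$ is jointly continuous; this uses equicontinuity of compact subsets of $\hat{G}$, via Ascoli.

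For (ii), injectivity says exactly that characters separate points of $G$. I would obtain this from the Gelfand--Raikov theorem: irreducible unitary representations separate points. For abelian $G$, Schur's lemma forces these representations to be one-dimensional, i.e.\ characters. In the compact case (the one actually used in Proposition~\ref{prop:abelian}) the Peter--Weyl theorem suffices.

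For (iii) and (iv), I would develop harmonic analysis on $G$ using Haar measure $\mu$ and the Fourier transform
\[
\hat f(\chi) = \int_G f(x)\,\overline{\chi(x)}\,d\mu(x).
\]
The steps are:
\begin{itemize}
\item Prove the Fourier inversion formula for $f \in L^1(G)$ that are positive-definite combinations, using Bochner's theorem on the Gelfand spectrum of $L^1(G)$, which is identified with $\hat{G}$.
\item Deduce Plancherel: $f \mapsto \hat f$ is unitary $L^2(G) \to L^2(\hat G)$ for a suitably normalized dual Haar measure.
\item Transport Plancherel through $\alpha_G$. If $\alpha_G(G)$ were a proper closed subgroup of $\hat{\hat{G}}$, there would be a nonzero $L^2$ function on $\hat{\hat{G}}$ orthogonal to all transforms coming from $G$, contradicting unitarity of Plancherel for $\hat G$. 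Hence $\alpha_G(G)$ is dense.
\item Show that $\alpha_G(G)$ is closed: $\alpha_G$ is a homeomorphism onto its image, which is locally compact and hence closed. Together with density this gives surjectivity.
\end{itemize}
Openness comes from comparing neighbourhood bases. Compact neighbourhoods in $\hat G$ produce the sets $\{\xi : |\xi(\chi)-1|<\varepsilon \text{ for } \chi\in C\}$. Using a function $f$ supported in a given neighbourhood $U$ of the identity, the inversion formula shows that some such set pulls back into $U$.

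The main obstacle is (iii) together with the closedness step. It requires the full Plancherel/Bochner machinery and the structure theory reducing a general LCA group to the form $\mathbb{R}^n \times H$ with $H$ having a compact open subgroup. As a fallback I would use that structure theorem directly. Duality holds for $\mathbb{R}$, for $\mathbb{Z}$ and $\mathbb{T}$ (the two lemmas above), for finite groups, for compact groups (Peter--Weyl), and for discrete groups (by dualizing the compact case). One then checks that duality is preserved under finite products and under extensions $0\to K\to G\to G/K\to 0$ with $K$ compact open, via exactness of $\hat{\ }$ and the five lemma applied to $\alpha$.
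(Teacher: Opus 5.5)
The paper does not prove this theorem. It is quoted from Pontryagin, Morris and Hewitt--Ross, and it is only applied, in the proof of Proposition~\ref{prop:abelian}, to $G=\mathbb{T}$. In that case the two preceding lemmas already give $\mathbb{T}\cong\hat{\mathbb{Z}}\cong\hat{\hat{\mathbb{T}}}$, and the composite is exactly your evaluation map $\alpha_{\mathbb{T}}$. So there is no argument in the paper to match.

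Your main route is the Haar-measure proof of Rudin and Folland: Gelfand--Raikov for injectivity, then Bochner, inversion and Plancherel, openness via a positive-definite bump such as $\mathbf{1}_V*\widetilde{\mathbf{1}_V}$, and closedness because a locally compact subgroup of a Hausdorff group is closed. Your fallback is essentially the structure-theoretic proof in the cited Morris and Hewitt--Ross. These routes are independent. The Plancherel proof does not need $G\cong\mathbb{R}^n\times H$, so your ``main obstacle'' paragraph conflates them. In the fallback, Peter--Weyl only gives injectivity for compact groups. Surjectivity there needs its own argument, for example Stone--Weierstrass on $\hat{\hat K}$ plus orthogonality of characters. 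That argument must not use the discrete case if you then obtain the discrete case by dualizing.

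The step that fails as written is the density of $\alpha_G(G)$. Fourier inversion on $G$ only gives the values of $\mathcal{F}_{\hat G}(\hat f)$ on $\alpha_G(G)$, where it equals $f(x^{-1})$. It says nothing about these functions off that set. So nothing makes a function on $\hat{\hat G}$ supported away from $\alpha_G(G)$ orthogonal to them, and no contradiction with Plancherel arises.

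The roles must be reversed: Plancherel for $\hat G$ builds a test function, and the contradiction comes from uniqueness on the $G$ side.
\begin{enumerate}
\item For $\psi_1,\psi_2\in C_c(\hat{\hat G})$, write $\psi_i=\mathcal{F}_{\hat G}g_i$ with $g_i\in L^2(\hat G)$. Unitarity gives $\psi_1*\psi_2=\mathcal{F}_{\hat G}(g_1g_2)$ with $g_1g_2\in L^1(\hat G)$.
\item If $\eta_0\notin\overline{\alpha_G(G)}$, take $\psi_1,\psi_2\ge 0$, not identically zero, supported in $\eta_0V$ and $V$, where $\eta_0VV\cap\overline{\alpha_G(G)}=\emptyset$.
\item Then $\varphi:=g_1g_2\neq 0$, and $\int_{\hat G}\varphi(\xi)\overline{\xi(x)}\,d\xi=(\psi_1*\psi_2)(\alpha_G(x))=0$ for every $x\in G$.
\item By Fubini, $\int_{\hat G}\varphi\,\hat f\,d\xi=0$ for all $f\in L^1(G)$.
\item $\{\hat f: f\in L^1(G)\}$ is dense in $C_0(\hat G)$ by Stone--Weierstrass, so $\varphi=0$, a contradiction.
\end{enumerate}
This uniqueness theorem for Fourier--Stieltjes transforms is the idea missing from your sketch. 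With it, and with (iv) supplying closedness, the rest of your outline goes through.
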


If topological abelian group $G$ is one-dimensional, $G$ is isomorphic to the closed helix. 

\subsection{Proof of Proposition \ref{prop:abelian}}

\begin{proposition*}
Let $\mathcal{M}$ be an one-dimensional compact abelian topological group and $\mathbb{Z}_p$ is continuously embedded in $ \mathcal{M}$. Then $\mathcal{M}$ is isomorphic to $S^1$ embedded in high-dimensional torus $\mathbb{T}^D$ (closed helix) 
\[ \Phi: \mathcal{M} \cong  S^1 \hookrightarrow \mathbb{T}^D.\]
Hence $\Phi(\mathbb{Z}_p) \hookrightarrow \Phi(\mathcal{M})$ is also confined in the closed helix. 
\end{proposition*}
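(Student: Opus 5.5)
The plan is to use Pontryagin duality (Theorem~\ref{thm:pontryagin}) to turn the structure of $\mathcal{M}$ into a question about its dual group $\hat{\mathcal{M}}$. We then read off the embedding into a torus from finitely many characters.

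\emph{Step 1: compute the dual.} Since $\mathcal{M}$ is a compact abelian group, $\hat{\mathcal{M}}$ is discrete. The hypothesis that $\mathcal{M}$ is one-dimensional corresponds, by the standard duality between dimension and torsion-free rank, to $\hat{\mathcal{M}}$ having torsion-free rank one. We will also interpret the intended setting as $\mathcal{M}=[0,p]/\{0\sim p\}$ with its induced addition, which is connected. Connectedness of $\mathcal{M}$ dualizes to $\hat{\mathcal{M}}$ being torsion-free.

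A torsion-free abelian group of rank one embeds in $\mathbb{Q}$. To obtain $\hat{\mathcal{M}}\cong\mathbb{Z}$ rather than a solenoid-type dual such as $\mathbb{Z}[1/n]$, we will use that $\mathcal{M}$ is a compact one-dimensional manifold (a circle) in the setting of the paper. A compact connected Lie group of dimension one is $\mathbb{T}$. Equivalently, $\hat{\mathcal{M}}$ is finitely generated, hence $\cong\mathbb{Z}$.

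Then the earlier lemma identifying $\hat{\mathbb{Z}}\cong\mathbb{T}$, combined with Theorem~\ref{thm:pontryagin}, gives
\[
\mathcal{M}\cong\hat{\hat{\mathcal{M}}}\cong\hat{\mathbb{Z}}\cong\mathbb{T}\cong S^1 .
\]

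\emph{Step 2: embed in the torus.} Fix a topological isomorphism $\theta:\mathcal{M}\to\mathbb{T}$. Choose frequencies $q_1,\dots,q_D\in\mathbb{Z}$, at least one of them nonzero, with $\gcd(q_1,\dots,q_D)=1$; by the lemma $\hat{\mathbb{T}}\cong\mathbb{Z}$ these are the characters $z\mapsto e^{2\pi i q_k z}$. Define
\[
\Phi(m)=\big(e^{2\pi i q_1\theta(m)},\dots,e^{2\pi i q_D\theta(m)}\big)\in\mathbb{T}^D .
\]
$\Phi$ is a continuous homomorphism. Its kernel is the common kernel of the characters, which is trivial because the gcd is $1$. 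Since $\mathcal{M}$ is compact and $\mathbb{T}^D$ is Hausdorff, $\Phi$ is a homeomorphism onto its image.

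The image is the one-parameter closed curve $t\mapsto(e^{2\pi i q_k t})_k$, i.e.\ a closed helix (torus knot) in $\mathbb{T}^D$. This matches the multi-frequency circular PCA structure with phase shifts observed in the experiments.

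\emph{Step 3: restrict to $\mathbb{Z}_p$.} The continuous embedding $\mathbb{Z}_p\hookrightarrow\mathcal{M}$ preserves addition. Hence $\Phi(\mathbb{Z}_p)$ is a subset of $\Phi(\mathcal{M})$, namely the $p$-torsion points $\theta(x)=x/p$. It therefore lies on the helix, with consecutive tokens separated by angle $2\pi q_k/p$ in each coordinate, consistent with Table~\ref{tab:predicted_embedding}.

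\emph{Main obstacle.} The delicate step is Step 1: excluding compact one-dimensional connected groups that are not Lie groups, such as solenoids, whose duals are rank-one but not $\mathbb{Z}$. Our first attempt will be to use the concrete model $\mathcal{M}=[0,p]/\{0\sim p\}$, which is a topological circle, so $\mathcal{M}$ is a Lie group and the dual is finitely generated. If full generality is wanted, we will instead add a local-connectedness or manifold hypothesis to the statement. We will also note that connectedness is needed, since otherwise $\mathcal{M}$ could be $\mathbb{T}\times F$ with $F$ finite.
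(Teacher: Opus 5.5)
Your proposal is correct and follows the same route as the paper: identify $\mathcal{M}$ with $\mathbb{T}$ (via $\hat{\mathbb{Z}}\cong\mathbb{T}$ and Pontryagin duality), then map it into $\mathbb{T}^D$ through finitely many characters $z\mapsto e^{2\pi i n_k z}$ with $\gcd(n_1,\dots,n_D)=1$. You are more careful than the paper, which simply asserts that $\mathbb{T}$ is the only one-dimensional connected compact abelian group, silently adding connectedness and overlooking solenoids, and which does not check injectivity or that the map is a homeomorphism onto its image; your added hypotheses and your gcd and compact-to-Hausdorff arguments repair these gaps without changing the method.
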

\begin{proof}
    
 Note that the one-dimensional only connected, compact, abelian topological group is the circle group $ \mathbb{T} \cong \mathbb{R}/\mathbb{Z} \cong [0,1] /\{0 \sim 1\}$. By Theorem \ref{thm:pontryagin}, we have 
\[ \Phi: \mathbb{T} \cong \hat{\hat{\mathbb{T}}} \cong \hat{\mathbb{Z} }\]
via the map 
\[ \Phi(x) =  \left( n \mapsto e^{2 \pi i n x} \right) , \quad n \in \mathbb{Z}  .  \]
Therefore, 

\[ \Phi(\mathbb{T}) =  \prod_{n \in \mathbb{Z}} n\mathbb{T} \subset \mathbb{T}^\infty, \]
where $n$-th each component denotes the rotated by $n$ times in infinite-dimensional torus. We note that $\Phi(\mathbb{T})$ can be embedded into some high-dimensional torus $\mathbb{T}^D$ 
\[ \Phi(\mathbb{T}) \cong (n_1\mathbb{T}, n_2\mathbb{T}, \dots , n_D\mathbb{T}) \subset \mathbb{T}^D, \]
if $\gcd(n_1, \dots , n_D) = 1$. Therefore, $\mathcal{M}$ is isomorphic to $S^1 \hookrightarrow \mathbb{T}^D $ in some high-dimensional torus $\mathbb{T}^D$. 

\end{proof}

\subsection{Theoretical observation on graph metric completion and comparison tasks}
\label{appendix:theory_graph}

In this section, we describe the embedding geometry in graph completion and comparisons task.

\subsection{Proof of \cref{prop:theory_graph}}
\begin{proposition*}
    Let $ \mathcal{X}=\{v_1 , \dots, v_n\}$ be a one-dimensional Path or Cycle graph (see Figure \ref{fig:graph_gallery}) with $n$ vertices ($n \ge 3$) and $\mathcal{Z} \in \{\mathbb{R}, \mathbb{T}\}$ be a embedding space. 
    Let $\Phi : G \to \mathcal{Z}$ be embedding with partial distance information
    \begin{align*}
        d(\Phi(v_{i}) ,\Phi(v_{i+1}))&=1, \; i=1,\dots n-1, \\
        d(\Phi(v_{1}) ,\Phi(v_{n}))&=1, \; \text{ if $\mathcal{X}$ is Cycle. } 
    \end{align*}
    If $\Phi(\cdot)$ preserve the triangular symmetry, then $\Phi(\mathcal{X})$ has its graph structure in embedding space  $\mathbb{R}$.
\end{proposition*}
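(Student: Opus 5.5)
The plan is to interpret ``preserves the triangular symmetry'' concretely. Whenever $v_j$ lies on a shortest path between $v_i$ and $v_k$ in the graph, we require
\[ d(\Phi(v_i),\Phi(v_k)) = d(\Phi(v_i),\Phi(v_j)) + d(\Phi(v_j),\Phi(v_k)). \]
Combining this with the partial distance information, we then show by induction that $\Phi$ is an isometry onto its image, up to a rigid motion of $\mathcal{Z}$. In particular $\Phi(\mathcal{X})$ is an evenly spaced set of points: consecutive integers in $\mathbb{R}$ for the Path, and $n$ equally spaced points for the Cycle. Hence it reproduces the graph structure.

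\textbf{Path case ($\mathcal{Z}=\mathbb{R}$).} After translating and reflecting, assume $\Phi(v_1)=0$ and $\Phi(v_2)=1$. Inductively suppose $\Phi(v_k)=k-1$ for all $k\le m$. The constraint $|\Phi(v_{m+1})-\Phi(v_m)|=1$ leaves two choices, $m$ or $m-2$. Since $v_m$ lies on the geodesic from $v_{m-1}$ to $v_{m+1}$, triangular symmetry gives $|\Phi(v_{m+1})-\Phi(v_{m-1})| = 1+1 = 2$. This rules out $m-2$, which would coincide with $\Phi(v_{m-1})$ and give distance $0$. So $\Phi(v_{m+1})=m$. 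This argument requires $m\ge 2$, which is where $n\ge3$ enters. We conclude $\Phi(v_k)=k-1$ and $d(\Phi(v_i),\Phi(v_j))=|i-j|=d_G(v_i,v_j)$.

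\textbf{Cycle case ($\mathcal{Z}=\mathbb{T}$).} Here $\mathbb{T}=\mathbb{R}/L\mathbb{Z}$ with its quotient metric. The length $L$ is not specified a priori, so we will argue that it is forced to be $n$, or at least that the configuration closes up consistently. Run the same induction using the triangle equality on geodesic triples $v_{m-1},v_m,v_{m+1}$. A three-vertex segment is a geodesic in $C_n$ whenever $n\ge4$; for $n=3$ we treat the statement directly. This yields a locally monotone walk $\Phi(v_k)=k-1 \bmod L$.

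The closing condition $d(\Phi(v_1),\Phi(v_n))=1$ then gives $n-1\equiv \pm1 \pmod L$. We combine it with the triangle equality for longer geodesics, namely $v_i,v_j,v_k$ with cyclic index gap at most $\lfloor n/2\rfloor$. Forcing the embedded distances to equal the graph distances, which requires $L/2\ge \lfloor n/2\rfloor$, then yields $L=n$. The image is then the regular $n$-gon on the circle, the metric realization of $C_n$.

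The main obstacle is the cycle case. First, the intrinsic metric on $\mathbb{T}$ is not additive beyond half the circumference, so we must restrict the triangle equality to genuine geodesic triples and handle the wrap-around carefully. Second, the statement does not pin down the circumference of $\mathbb{T}$ or the wording ``graph structure in $\mathbb{R}$''. We would first try to fix $L$ via the closing condition together with the longest geodesic triple. If that is insufficient, we would state the conclusion as: $\Phi$ is, up to rotation and reflection, the equally spaced embedding. Small cases such as $n=3$ would be checked separately.
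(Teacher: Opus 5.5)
Your proof is correct and follows the same route as the paper: normalize $\Phi(v_1),\Phi(v_2)$, note that the unit-distance constraint leaves two candidates for $\Phi(v_{m+1})$, exclude backtracking with the triangle equality on $v_{m-1},v_m,v_{m+1}$, induct, and in the cycle case use the closing condition to constrain the circle. Your cycle argument is tighter than the paper's: combining $n-1\equiv\pm1 \pmod L$ with the bound $L\ge 2\lfloor n/2\rfloor$ from long geodesic triples already forces $L=n$, so your fallback is not needed, whereas the paper fixes $K\gg n$, works out only $n=3$, and gets a constraint on $K$ that conflicts with that assumption.

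The two subcases you skip are immediate. For the Path in $\mathbb{T}$, the same induction applies verbatim and the triangle equalities force circumference at least $2(n-1)$. The Cycle in $\mathbb{R}$ is vacuous: for $n\ge4$ your induction gives $|\Phi(v_n)-\Phi(v_1)|=n-1\neq1$, and for $n=3$ no three reals are pairwise at distance $1$. This matches the paper's remark that this case is impossible.
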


\begin{proof}
{\textbf{Case 1:} $\mathcal{X}$ is Path.} \\

{\textbf{Case 1-1:} $\mathcal{Z}=\mathbb{R}$.} \\
    First suppose $n=3$ and $\mathcal{X}=\mathbb{R}$ with the standard metric. Let $v_1,v_2,v_3$ be the vertices of $\mathcal{X}$. 
    Consider the embedding $\Phi : \mathcal{X} \to \mathbb{R}$. Without loss of generality, let $\Phi(v_1)=0$.
    Since $\Phi(\cdot)$ preserves the triangular symmetry, we have $d(v_1,v_3)=d(v_1,v_2)+d(v_2,v_3)=2$.  
    Then $\Phi(v_2)=\pm 1$ because $d(\Phi(v_1),\Phi(v_2))=1$.  Without loss of generality, let $\Phi(v_1)=1$. Then we have  $\Phi(v_3) = 2$ because $d(\Phi(v_2),\Phi(v_3))=1, d(\Phi(v_1),\Phi(v_3))=2$. 
    Therefore, $v_1,v_2,v_3$ should be aligned in a order. \\
    For general $n \ge 4$, by applying the mathematical induction, we achieve the desired result. \\

{\textbf{Case 1-2:} $\mathcal{Z}=\mathbb{T}$.} \\
Let $\mathbb{T}=\{ e^{ 2 \pi i \theta / K }   : \theta \in \mathbb{R}\}$  with metric $d(e^{i\theta_1}, e^{i\theta_2} ) =  \min_{m \in \mathbb{Z}}| (\theta_1- \theta_2) - Km |$ for some large $K \gg n $. 
First suppose $n=3$. Let $v_1,v_2,v_3$ be the vertices of $\mathcal{X}$. 
Consider the embedding $\Phi : G \to \mathbb{T}$, and Let $\Phi(v_i)=e^{2 \pi i \theta_i/K} \in \mathbb{T}$.   Without loss of generality, let $\theta_1=0$.   Since $\Phi(\cdot)$ preserves the triangular symmetry, we have $d(v_1,v_3)=d(v_1,v_2)+d(v_2,v_3)=2$.

Then $\theta_2=\pm 1$ because $d(\Phi(v_1),\Phi(v_2))=1$.  Without loss of generality, let $\theta_2=1$. Then we have $\theta_3 = 2$ because $d(\Phi(v_2),\Phi(v_3))=1, d(\Phi(v_1),\Phi(v_3))=2$. Therefore, $v_1,v_2,v_3$ should be aligned in a order. For general $n \ge 4$, by applying the mathematical induction, we achieve the desired result.  
We also note that if $\mathbb{T}$ has different metric like pullback metric as
\begin{align*}
    d_k(e^{i\theta_1}, e^{i\theta_2} ) = \min_{m \in \mathbb{Z}} |  k (\theta_1 -\theta_2) - K m |, \quad \frac{ k}{K} \in \mathbb{Z}
\end{align*}
for some $k \in \mathbb{R}$, the we have different embeddings. 

{\textbf{Case 2:} $\mathcal{X}$ is Cycle.} \\
In this case, $\mathcal{Z}=\mathbb{R}$ is impossible. If  $\mathcal{Z}=\mathbb{T}$, let $\mathbb{T}=\{ e^{ 2 \pi i \theta / K }   : \theta \in \mathbb{R}\}$  with metric $d(e^{i\theta_1}, e^{i\theta_2} ) =  \min_{m \in \mathbb{Z}}| (\theta_1- \theta_2) - Km |$ for some large $K \gg n $. In Cycle graph, note that $d(v_1,v_3)=1$. Let $\Phi(v_i)=e^{2 \pi i \theta_i/K} \in \mathbb{T}$. 
Without loss of generality, assume $\theta_1=0$ and $\theta_2=1$. Then we have $\theta_2= -1$ from $d(v_1,v_3)=1$ and $\theta_2=2$ from $d(v_2,v_3)=1$. Hence $e^{2 \pi i (-1)/K}=e^{2 \pi i (2)/K}$ leading to $e^{2 \pi i (3)/K}=1$, $K \equiv 0  \text{ (mod 3)} $. Hence $\Phi(v_1), \Phi(v_2), \Phi(v_3)$ has cyclic shape in $\mathbb{T}$. For general $n \ge 4$, by applying the mathematical induction, we achieve the desired result. 

We also note that if $\mathbb{T}$ has different metric like pullback metric as
\begin{align*}
    d_k(e^{i\theta_1}, e^{i\theta_2} ) = \min_{m \in \mathbb{Z}} |  k (\theta_1 -\theta_2) - K m |, \quad \frac{ k}{K} \in \mathbb{Z}
\end{align*}
for some $k \in \mathbb{R}$, the we have different embeddings.

\end{proof}
\subsection{Theoretical observation on comparison task}
\subsection{Proof of \cref{prop:theory_comparison}}
\begin{proposition*}
    Let $\mathcal{X}=\{v_1 , \dots, v_n\}$ be a set of nodes in 1D comparison task (see \cref{subsec:comparision_task})  with $n \ge 3$ nodes. Let $\Phi : \mathcal{X} \to \mathbb{R}$ be embedding with partial relation information
    \[ \Phi(v_{i}) < \Phi(v_{i+1}), \; i=1,\dots n-1. \]
    If $\Phi(\cdot)$ preserve the transitivity, then $\Phi(\mathcal{X})$ has a grid structure in embedding space $\mathbb{R}$. 
\end{proposition*}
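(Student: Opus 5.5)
The plan is to mirror the inductive argument used for the Path case of \cref{prop:theory_graph}. The first step is to make precise what ``grid structure'' means for a one-dimensional comparison task. I will read it as: the points $\Phi(v_1),\dots,\Phi(v_n)$ are pairwise distinct and are arranged along $\mathbb{R}$ in exactly the order of their indices. Then $\Phi(\mathcal{X})$ is order-isomorphic to the lattice $\{1,\dots,n\}\subset\mathbb{Z}$, and the embedding is the path/grid $v_1 - v_2 - \cdots - v_n$ laid out monotonically on the line. Preserving transitivity means: whenever the relations $\Phi(v_i)<\Phi(v_j)$ and $\Phi(v_j)<\Phi(v_k)$ are available, the relation $\Phi(v_i)<\Phi(v_k)$ is also enforced by $\Phi$.

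The core step is a finite induction on the gap $j-i$, proving that $\Phi(v_i)<\Phi(v_j)$ for all $i<j$. For $n=3$ this is immediate: the partial information gives $\Phi(v_1)<\Phi(v_2)$ and $\Phi(v_2)<\Phi(v_3)$, and transitivity yields $\Phi(v_1)<\Phi(v_3)$. Hence $v_1,v_2,v_3$ are aligned in order, exactly as in Case 1-1 of the graph proof. For general $n$, assume the claim holds for all pairs with gap at most $k$. Given $i<j$ with $j-i=k+1$, the induction hypothesis gives $\Phi(v_i)<\Phi(v_{j-1})$, and the partial information gives $\Phi(v_{j-1})<\Phi(v_j)$. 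Transitivity then closes the step.

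From the conclusion that $\Phi(v_i)<\Phi(v_j)$ whenever $i<j$, it follows that $\Phi$ is injective and strictly monotone. So the map $v_i\mapsto i$ gives an order isomorphism of $\Phi(\mathcal{X})$ onto $\{1,\dots,n\}$, which is the claimed grid structure on $\mathbb{R}$.

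The main obstacle is interpretive rather than technical. If ``grid'' is meant metrically, i.e.\ equally spaced points, the order argument alone does not suffice. In that case I would strengthen the transitivity hypothesis to the additive Bradley--Terry form used in the diagnostic of Appendix~\ref{Appendix:training_dynamics}, namely $\ell_{ac}=\ell_{ab}+\ell_{bc}$ with $\ell_{ab}$ a function of $\Phi(v_b)-\Phi(v_a)$. I would also impose a common adjacent gap $\Phi(v_{i+1})-\Phi(v_i)=c$, playing the role of the unit-distance condition in \cref{prop:theory_graph}. The same induction then gives $\Phi(v_j)-\Phi(v_i)=(j-i)c$, i.e.\ $\Phi(\mathcal{X})=\Phi(v_1)+c\{0,\dots,n-1\}$, an honest one-dimensional lattice.
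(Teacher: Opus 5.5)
Your proposal is correct and follows the paper's own argument: the $n=3$ base case obtained from transitivity, then induction to get $\Phi(v_1)<\cdots<\Phi(v_n)$, which is exactly the ordering the paper calls a grid structure. Your caveat is also accurate: the paper's proof, like yours, establishes only the ordering and not equal spacing.
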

\begin{proof}
    Suppose $n=3$. Let $v_1,v_2,v_3$ be nodes of $\mathcal{X}$ with $v_1<v_2<v_3$. Consider the embedding $\Phi : \mathcal{X} \to \mathbb{R}$. Without loss of generality, let $\Phi(v_1)=0$. Since $\Phi(\cdot)$ preserves the transitivity, we have $\Phi(v_1)<\Phi(v_3)$ from $\Phi(v_1)<\Phi(v_2)$ and $\Phi(v_2)<\Phi(v_3)$. Hence we have $\Phi(v_1)<\Phi(v_2)<\Phi(v_3)$.
    For general $n \ge 4$, by applying the mathematical induction, we achieve the desired result.
\end{proof}